\newtheorem{theorem}{Theorem}
\newtheorem{lemma}[theorem]{Lemma}
\newtheorem{fact}[theorem]{Fact}
\title{Efficient and Adaptive Posterior Sampling Algorithms for Bandits}
\author[1]{Bingshan Hu}
\author[2]{Zhiming Huang}
\author[3]{Tianyue H. Zhang}
\author[1]{Mathias Lécuyer}
\author[4]{Nidhi Hegde}
\affil[1]{%
   University of British Columbia\\
    Vancouver, BC, Canada
}
\affil[2]{%
    University of Victoria\\
    Victoria, BC, Canada
}
\affil[3]{%
    Mila-Quebec AI Institute\\
    Université de Montréal\\
    Montreal, QC, Canada
  }
\affil[4]{%
    University of Alberta\\
    Alberta Machine Intelligence Institute (Amii)\\
    Edmonton, AB, Canada
  }
\begin{document}
\maketitle

\begin{abstract}



 We study Thompson Sampling-based algorithms for stochastic bandits with bounded rewards. As the existing problem-dependent regret bound for Thompson Sampling with Gaussian priors \citep{agrawalnear} is vacuous when $T \le 288 e^{64}$, 
we derive a  more practical   bound that tightens the coefficient of the leading term 
to $1270$. Additionally, motivated by large-scale real-world applications that require scalability, adaptive computational resource allocation, and a balance in  utility and computation, we propose two parameterized Thompson Sampling-based algorithms: Thompson Sampling with Model Aggregation (TS-MA-$\alpha$) and Thompson Sampling with Timestamp Duelling (TS-TD-$\alpha$), where $\alpha \in [0,1]$  controls the trade-off between utility and computation. Both algorithms achieve
$O \left(K\ln^{\alpha+1}(T)/\Delta \right)$ regret bound, where $K$ is the number of arms, $T$ is the finite learning horizon, and $\Delta$ denotes the single round performance loss when pulling  a sub-optimal arm. 


\end{abstract}

\section{Introduction}\label{sec:intro}

We study the learning problem of stochastic multi-armed bandits specified by $\left(K; p_1,  \dotsc, p_K \right)$ where $K$ is the number of arms and $p_i$ is the reward distribution of arm $i$ with its mean denoted by $\mu_i$.  
In this learning problem, a learner chooses an arm to pull in each round $t=1,2,\dotsc,T$ without knowledge of the reward distributions. At the end of the round, the learner obtains and observes a reward drawn from the reward distribution associated with the pulled arm. The goal of the learner is to pull arms sequentially to maximize the cumulative reward over  $T$ rounds. 
Since only the pulled arm is observed at the end of each round, the main challenge for solving bandit problems is to balance exploitation and exploration. Exploitation involves pulling arms expected to obtain high rewards based on past experience, whereas exploration involves pulling arms to help the learner better learn the reward distributions or the means of the reward distributions, to make more informed decisions in the future. 


Many real-world applications that require this exploitation-vs-exploration balance can be framed as bandit learning problems. For example, in healthcare, bandit algorithms can be used in clinical trials where exploitation is to prescribe known effective treatments and exploration is to prescribe new medicine to discover potentially more effective treatments. In inventory management, bandit algorithms can be used to decide whether to re-order products that sold well (exploitation) or to switch to new products  (exploration). In online advertising systems, bandit algorithms can help to decide whether to show users familiar content (exploitation) or to introduce users to new but potentially more interesting content (exploration) to optimize overall satisfaction.

There are two well-studied and empirically successful mechanisms to balance exploitation-vs-exploration in bandit problems.
The first approach relies on using the Upper Confidence Bound (UCB) of the estimate of each arm's mean reward \citep{auer2002finite,audibert2007tuning,garivier2011kl,kaufmann2012bayesian,lattimore2018refining}. In essence, exploration is thus driven by a bonus term added to the empirical estimate of the mean reward. The bonus depends on the number of observations collected for this arm.
The second approach is to add randomness in the estimate of each arm's mean reward \citep{agrawalnear,kaufmann2012thompson,jin2021mots, bian2022maillard,jin2022finite,jin2023thompson}. This can be viewed as injecting controllable data-dependent noise into the empirical estimate to ensure exploration.
In practice this randomness is added by, in each round, sampling each arm's mean reward estimate from a data-dependent distribution. 
When this data-dependent distribution is the posterior distribution of  arm's mean reward (typically given a simple prior and reward likelihood), this approach is called Thompson Sampling \citep{kaufmann2012thompson,agrawalnear}, one of the oldest Bayes-inspired randomized learning algorithms.

Thompson Sampling is appealing for practitioners, due to their good empirical performance,  and their intuitive exploration mechanism. However, they suffer from two shortcomings. First, the theoretical analysis of Thompson Sampling with Gaussian priors (the most widely applicable, as it applies to any  reward distribution with a bounded support) suffers from large constants, making its short learning horizon behavior hard to predict. Second, it requires a new posterior sample {\em for each arm, at each step}, which can be impractical when the number of arms is large ({\em e.g.}, in advertising or online recommendation).
This paper aims to address these shortcomings with two contributions.

\textbf{Overview of Contribution 1.} We revisit Thompson Sampling with Gaussian priors (Algorithm~2 in \cite{agrawalnear}). 
Denoting $\Delta_i$ the mean reward gap between a sub-optimal arm $i$ and the optimal arm, our new  bound is $\sum_{i: \Delta_i >0}  1270 \ln \left(T \Delta_i^2 + 100^{\frac{1}{3}}\right)/\Delta_i + O\left(1/\Delta_i\right)$, which
significantly improves the existing  $\sum_{i: \Delta_i >0}288  \left(e^{64}+6 \right) \ln \left(T\Delta_i^2 + e^{32} \right)/\Delta_i  + O\left(1/\Delta_i\right)$ problem-dependent bound \citep{agrawalnear}. The improved bound relies on a new technical Lemma~\ref{UBC 22}, which provides an upper bound on the concentration speed of the optimal arm's posterior distribution. Additionally, we  justify that Thompson Sampling is also inspired by the philosophy of optimism in the face of uncertainty (OFU). 



\textbf{Overview of Contribution 2.} 
Even when 
all arms have concentrated posterior distributions, Thompson Sampling-based algorithms \citep{kaufmann2012thompson,agrawalnear,bian2022maillard,jin2021mots,jin2022finite} still draw one random sample for each arm in each round. In total, they require $KT$ data-dependent random samples over $T$ rounds. 
This exploration mechanism becomes computationally prohibitive at scale, when the number of arms is very large ({\em e.g.}, in online advertising, recommendation systems). This problem is exacerbated when  sampling from the posterior reward distribution is computationally expensive, for example when it relies on Markov chain Monte Carlo (MCMC) methods. 
This computational cost can prevent the deployment of Thompson Sampling algorithms at scale despite their good empirical performance.




Motivated by the intuition that new samples from the posterior distribution cannot provide much new information when no new data was collected for a given arm, 
\emph{we propose two parameterized Thompson Sampling-based algorithms that simultaneously achieve good problem-dependent regret bounds and draw fewer data-dependent samples.} Both algorithms achieve a $\sum_{i : \Delta_i > 0} O \left(\ln^{\alpha+1}(T)/\Delta_i \right)$ sub-linear regret, but with the potential to draw fewer than $KT$ samples, where parameter $\alpha \in [0,1]$ controls the trade-off between utility (regret) and computation (total number of samples). 

Our first algorithm, Thompson Sampling with Model Aggregation (TS-MA-$\alpha$), is inspired by OFU and motivated by Gaussian anti-concentration bounds. The key idea of TS-MA-$\alpha$  is to draw a batch of $\phi=O \left(T^{0.5(1-\alpha)} \ln^{0.5(3-\alpha)}(T) \right)$ i.i.d. random samples for each arm at once rather than drawing an independent sample in each round. Then,  TS-MA-$\alpha$  commits to the best  sample among this batch  and only draws a new batch when the data-dependent distribution of the given arm changes. 
TS-MA-$\alpha$ only  draws $T \phi$ data-dependent samples and uses $\alpha \in [0,1]$ to control the trade-off between utility (regret) and computation (total number of drawn samples).\footnote{TS-MA-$\alpha$ only needs to draw $T$ random samples if being implemented efficiently. More details can be found in Section~\ref{sec: TS-MA}.}
The key advantage of TS-MA-$\alpha$ is that the total number of drawn samples does not depend on the number of arms $K$.
Since the total amount of data-dependent samples does not depend on $K$, TS-MA-$\alpha$ is extremely efficient when the number of arms is  very large. However,  when the number of arms is small, it may not save on computation.  



Our second algorithm, Thompson Sampling with Timestamp Duelling (TS-TD-$\alpha$), is an adaptive switching algorithm between Thompson Sampling and  TS-MA-$\alpha$. 
Intuitively, we would like to draw data-dependent samples for the optimal arms to preserve good practical performance, and save on data-dependent samples for the sub-optimal arms that are not frequently pulled and hence see little new data.
The key idea of TS-TD-$\alpha$ is to aggregate up to $\phi$ samples for each arm over time instead of drawing them all at once. 
Given arm $i$, there are thus two phases in the rounds between two consecutive pulls (the updates of posterior distributions).
In the first phase, of up to $\phi$ rounds, TS-TD-$\alpha$ draws a data-dependent sample for arm $i$ (running Thompson Sampling). 
The second phase starts after $\phi$ rounds if arm $i$ wasn't pulled, and TS-TD-$\alpha$ uses the maximum of the $\phi$ samples accumulated in the first phase (running TS-MA-$\alpha$).
Similarly to TS-MA-$\alpha$, TS-TD-$\alpha$ also realizes a $\sum_{i : \Delta_i > 0} O \left(\ln^{\alpha+1}(T)/\Delta_i \right)$ regret bound, but only draws $\min \left\{O \left( \phi\ln^{1+\alpha}(T) /\Delta_i^2 \right), T \phi \right\}$ data-dependent samples in expectation for a sub-optimal arm $i$. The key advantage of TS-TD-$\alpha$ is that it is extremely efficient when the learning problem has many sub-optimal arms, as it stops drawing data-dependent samples for the sub-optimal arms.


 
 
  
 



\section{Learning Problem }
As a general rule: in a stochastic multi-arm bandit (MAB) problem, we have an arm set $\mathcal{A}$ of size $K$ and each arm $i \in \mathcal{A}$ is associated with an unknown reward distribution $p_i$ 
with $[0,1]$ support. At the beginning of each round $t$,  a  random reward vector $X(t) := \left(  X_1(t),X_2(t), \dotsc, X_K(t) \right)$ is generated, where each $X_{i} (t) \sim p_i $. Simultaneously, the learner pulls an arm  $i_t \in \mathcal{A}$. At the end of the round, the learner observes and obtains $X_{i_t}(t)$, the reward of the pulled arm. Let $\mu_i $ denote the mean of reward distribution $p_i$. Without loss of generality, we assume the first arm is the 
unique optimal arm, that is, $\mu_1 > \mu_i$ for all $i \ne 1$. Let $\Delta_i:= \mu_1 - \mu_i$ denote the mean reward gap, which indicates the single round performance loss when pulling arm $i$ instead of the optimal arm.
The goal of the learner is to pull arms sequentially to 
minimize \emph{pseudo-regret} $\mathcal{R} \left( T \right) $, expressed as
\[
T  \mu_1 - \mathbb{E} \left[   \sum_{t=1}^{T}  X_{i_t}(t) \right] = \sum_{i}    \sum_{t=1}^{T} \mathbb{E} \left[ \bm{1} \left\{i_t =i \right\} \right] \Delta_i ,
\]
where the expectation is taken over $i_1, i_2, \dotsc, i_T$.

\section{Related Work}

There is a vast literature on  MAB algorithms, which we split based on deterministic versus randomized exploration.

{\bf Deterministic exploration} algorithms (the pulled arms are determined by the history data) are predominantly UCB-based \citep{auer2002finite,audibert2007tuning,garivier2011kl,kaufmann2012bayesian,lattimore2018refining}. They rely on the OFU principle, and leverage well-known concentration inequalities to construct a confidence interval for the empirical estimate of each arm. They then use the upper bound of the confidence interval to decide which arm to pull.
There also exists non-OFU deterministic algorithms, such as 
DMED \citep{honda2010asymptotically}, IMED  \citep{honda2015non}, and an elimination-style algorithm by \citet{auer2010ucb}.

{\bf Randomized exploration} algorithms often rely on Thompson Sampling (TS)
\citep{agrawalnear,kaufmann2012thompson,bian2022maillard,jin2021mots,jin2022finite,jin2023thompson}.
The key idea of TS  algorithms is to maintain a posterior on each arm's mean reward updated each time new data is acquired, and use a random sample drawn from this data-dependent distribution to decide which arm to pull.
There also are non TS-based MAB algorithms, such as Non-parametric TS \citep{riou2020bandit} and Generic Dirichlet Sampling \citep{baudry2021optimality}.\footnote{In some work, all  algorithms  with randomized exploration are categorized into Thompson Sampling-based regardless of whether they use data-dependent distributions to model the mean rewards or not. In this work, we only say a  randomized algorithm is Thompson Sampling-based if data-dependent distributions are used to model the mean rewards. } 
%
All  aforementioned  algorithms achieve (at least) one of three notions of optimality:
asymptotic optimality (i.e., learning horizon $T$ is infinite), worst-case optimality (i.e., minimax optimality), and sub-UCB criteria (problem-dependent optimality with a finite horizon $T$) \citep{lattimore2018refining}. This work focuses on deriving problem-dependent regret bounds with a finite learning horizon $T$ to study the sub-UCB property.
Problem-dependent regret bounds with a finite learning horizon $T$ can take either the $\sum_{i : \Delta_i>0} \frac{C \ln(T)}{\Delta_i}$ or $\sum_{i : \Delta_i>0} \frac{C \ln(T)\Delta_i}{\text{KL}\left(\mu_i, \mu_i+ \Delta_i \right)}$ form for the term involving  $T$, where $C \ge 1$ is a universal constant and $\text{KL}(a,b)$ denotes the KL-divergence between two Bernoulli distributions with parameters $a,b \in (0,1)$. 



For stochastic bandits with $[0,1]$ rewards, there are two original versions of TS using different conjugate pairs for the prior distribution and reward likelihood function. 
The first version is  TS with Beta priors \citep{kaufmann2012thompson,agrawalnear}, using a uniform $\text{Beta}(1,1)$ prior and Bernoulli likelihood function. 
\citet{kaufmann2012thompson,agrawalnear} both present a $\sum_{i : \Delta_i>0} \frac{(1+\varepsilon)\ln(T)\Delta_i}{\text{KL}\left(\mu_i, \mu_i+ \Delta_i \right)} +C(1/\varepsilon, \mu)$ problem-dependent regret bound, where $\varepsilon>0$ can be arbitrarily small and  $C(1/\varepsilon, \mu)$ is a problem-dependent constant and goes to infinity if $\varepsilon \rightarrow 0$. 
The second version is TS with Gaussian priors \citep{agrawalnear}, using an  improper Gaussian prior $\mathcal{N}\left(0, \infty \right)$ and a Gaussian likelihood function. 
\citet{agrawalnear} derive a $ \sum_{i: \Delta_i >0}288  \left(e^{64}+6 \right) \ln \left(T\Delta_i^2 + e^{32}\right) /\Delta_i   + O\left(1/\Delta_i\right)$ problem-dependent regret bound.  Since the coefficient for the $\ln(T)$ leading term is at least  $288 e^{64} \approx 1.8 \times 10^{30} $, this regret bound is vacuous when $T \le 288  e^{64}$. Note that when $T \le 288 e^{64}$, the regret is at most $T$, and thus, the existing bound does not take the bandit instance $\left(K; \mu_1, \mu_2, \dotsc, \mu_K \right)$ into account. 
In this work, we derive a new problem-dependent regret bound with a more acceptable coefficient for the leading term. Our Theorem~\ref{Theorem: TS new 1} improves the coefficient for the leading term to $1270$. Also, we justify that TS is also an OFU-inspired algorithm. 


%
Very recently, \citet{jin2023thompson} proposed $\epsilon$-Exploring Thompson Sampling ($\epsilon$-TS), a hybrid algorithm simply blending Thompson Sampling with pure exploitation for some specific reward distributions in the exponential family including Bernoulli, Gaussian, Poisson, and Gamma distributions to save on sampling computation. The core idea of $\epsilon$-TS
is, for each individual arm, to apply the normal TS procedure---sample from 
posterior 
---with probability $\epsilon \in [1/K,1]$, and use 
the empirical mean 
with probability $1-\epsilon$. It is not hard to see that the expected number of drawn posterior samples is $\epsilon KT$. 
They derive an $O \left(\sqrt{KT\ln(eK\epsilon)} \right)$ worst-case regret bound if we ignore some problem-dependent parameters, and prove that $\epsilon$-TS achieves asymptotic optimality. 



The limitation of $\epsilon$-TS is that the derived bounds may not work for other reward distributions. The theoretical analysis of $\epsilon$-TS relies on knowing the true reward distribution family to use the proper conjugate pair. Hence, contrary to  TS with Gaussian priors \citep{agrawalnear}, $\epsilon$-TS  (at least its performance guarantees) may not apply to other reward distributions.
Further, $\epsilon$-TS is not adaptive and efficient in using sampling resources in the common case of a large number of sub-optimal arms. This is because $\epsilon$-TS simply flips a biased coin to decide whether to draw a posterior sample for each arm or not.  It does not take the underlying bandit problem into account. For example, given the same amount of arms, the expected number of posterior samples stays the same whether the bandit learning problem has one optimal arm or $K-1$ optimal arms. We provide a detailed discussion of $\epsilon$-TS in Section~\ref{sec: ALG}.









To use sampling resources more efficiently and adaptively, we propose two TS-based algorithms, TS-MA-$\alpha$ and TS-TD-$\alpha$, that allocate sampling resources based on whether exploration is necessary. The intuition is that once sub-optimal arm's posterior has concentrated enough, drawing new posterior samples is unnecessary.
Both of our proposed algorithms achieve an $O \left(K \ln^{1+\alpha}(T)/\Delta \right)$ problem-dependent regret bound 
, where $\alpha \in [0,1]$ is a parameter controlling the trade-off between utility (regret) and computation (number of drawn posterior samples).  

\section{Algorithms} \label{sec: ALG}

We first present notations specific to this section. 
Let $n_i(t-1) := \sum_{q=1}^{t-1} \bm{1} \left\{i_q = i \right\}$ denote the number of pulls of arm $i$ by the end of round $t-1$.
Let $\hat{\mu}_{i, n_i(t-1)}(t-1) := \frac{1}{n_i(t-1)}\sum_{q=1}^{t-1} \bm{1} \left\{i_q = i \right\} X_{i}(q)$ denote the empirical mean of arm $i$ by the end of round $t-1$.  We write it as $\hat{\mu}_{i, n_i(t-1)}$ for short. Let $\mathcal{N} \left(\mu,  \sigma^2\right)$ denote a Gaussian distribution with mean $\mu$ and variance $\sigma^2$. 
Let  $c_0 := 1/(2\sqrt{2e\pi})$. 

\subsection{Vanilla Thompson Sampling}\label{sec: TS}


We first revisit Thompson Sampling with Gaussian priors (Algorithm~2 in \cite{agrawalnear}). For ease of presentation, we rename it as Vanilla Thompson Sampling in this work. 
The idea of Vanilla Thompson Sampling is to draw a posterior sample $\theta_{i}(t) \sim \mathcal{N} \left(\hat{\mu}_{i, n_i(t-1)}, \ 1/n_i(t-1)\right)$ for each arm $i$ in each round $t$.  Then, the learner pulls the arm with the highest posterior sample value, that is,  $i_t = \mathop{\arg\max}_{i \in \mathcal{A}}\theta_{i}(t)$. 

Now, we present our new bounds for it.
\begin{theorem}
   The problem-dependent regret of Algorithm~\ref{Alg: TS} is 
$\sum_{i: \Delta_i >0}  1270 \ln \left(T \Delta_i^2 + 100^{\frac{1}{3}}\right)/\Delta_i + 182.5/\Delta_i+ \Delta_i$. The worst-case regret is  $O \left( \sqrt{KT \ln(K)} \right)$, where $O(\cdot)$ only hides a universal constant. 
    \label{Theorem: TS new 1}
\end{theorem}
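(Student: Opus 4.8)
The plan is to bound the expected number of pulls $\mathbb{E}[n_i(T)]$ of each sub-optimal arm $i$ and then convert these into the two regret statements through the identity $\mathcal{R}(T) = \sum_{i : \Delta_i > 0} \Delta_i \, \mathbb{E}[n_i(T)]$. Following the standard Thompson Sampling decomposition, I would fix two thresholds $x_i, y_i$ with $\mu_i < x_i < y_i < \mu_1$ (for instance $x_i = \mu_i + \Delta_i/3$ and $y_i = \mu_1 - \Delta_i/3$, so that $y_i - x_i = \Delta_i/3$), and introduce the events $E_i^\mu(t) = \{\hat{\mu}_{i,n_i(t-1)} \le x_i\}$ (the empirical mean of arm $i$ is not inflated) and $E_i^\theta(t) = \{\theta_i(t) \le y_i\}$ (its posterior sample is not inflated). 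Summing $\Pr[i_t = i]$ over $t$, I split each round's contribution into three pieces: $i_t = i$ with $\overline{E_i^\mu(t)}$; $i_t = i$ with $E_i^\mu(t)$ and $\overline{E_i^\theta(t)}$; and $i_t = i$ with both $E_i^\mu(t)$ and $E_i^\theta(t)$.

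The first two pieces are local to arm $i$. For $\overline{E_i^\mu(t)}$ I would apply a Chernoff/Hoeffding bound on the empirical mean for each value of $n_i(t-1)$ and sum the resulting geometric-type series, yielding an $O(1/\Delta_i^2)$ constant. For the piece with $E_i^\mu(t)$ but $\overline{E_i^\theta(t)}$, conditioning on $\hat{\mu}_{i,n_i(t-1)} \le x_i$ the sample $\theta_i(t)$ is Gaussian with variance $1/n_i(t-1)$, so $\Pr[\theta_i(t) > y_i \mid \mathcal{F}_{t-1}]$ is controlled by a Gaussian tail bound with exponent of order $n_i(t-1)(y_i - x_i)^2/2 = n_i(t-1)\Delta_i^2/18$; summing over $n_i(t-1) = 0, 1, 2, \dots$ produces the leading $\ln(T\Delta_i^2 + \cdots)/\Delta_i$ term. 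Careful optimisation of the threshold gap together with the explicit Gaussian-tail constants is what drives the leading coefficient down to $1270$.

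The third piece is the crux. When both $E_i^\mu(t)$ and $E_i^\theta(t)$ hold, arm $i$ can only be pulled if $\theta_1(t) \le y_i$ as well, so I would relate this event to pulls of the optimal arm through the conditional probability $p_{i,t} := \Pr[\theta_1(t) > y_i \mid \mathcal{F}_{t-1}]$, using the standard inequality $\Pr[i_t = i, E_i^\mu(t), E_i^\theta(t) \mid \mathcal{F}_{t-1}] \le \frac{1 - p_{i,t}}{p_{i,t}} \Pr[i_t = 1, E_i^\mu(t), E_i^\theta(t) \mid \mathcal{F}_{t-1}]$. Summing over $t$ reduces everything to controlling $\mathbb{E}[1/p_{i,t} - 1]$ as a function of the number of pulls of arm $1$, and this is exactly where Lemma~\ref{UBC 22} enters: it upper bounds how fast the optimal arm's posterior concentrates above $y_i$, so that $\mathbb{E}[1/p_{i,t}]$ stays summable and contributes only lower-order and constant terms, collected into the $182.5/\Delta_i$ and $\Delta_i$ pieces. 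I expect the main obstacle to be precisely this step: the early rounds, where $n_1(t-1)$ is small and $p_{i,t}$ can be close to zero, force a delicate split between the regime in which the optimal arm is still under-sampled and the regime in which its posterior has already concentrated, and obtaining tight explicit constants here (rather than the astronomical $e^{64}$ of the prior bound) is the entire purpose of the sharper Lemma~\ref{UBC 22}.

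Finally, for the worst-case bound I would combine the per-arm estimate $\Delta_i \mathbb{E}[n_i(T)] = O(\ln(T\Delta_i^2)/\Delta_i)$ with the trivial bound $\Delta_i \mathbb{E}[n_i(T)] \le \Delta_i T$, taking the minimum arm by arm and balancing at a gap threshold of order $\sqrt{K \ln(K)/T}$. The $\ln(K)$ factor enters through the anti-concentration of the maximum of the $K$ posterior samples: since the pulled arm's sample is the largest of $K$ Gaussians, bounding the inflation of $\max_i \theta_i(t)$ contributes a $\sqrt{2\ln K}$-type factor, which after the balancing yields the claimed $O(\sqrt{KT\ln K})$ rate.
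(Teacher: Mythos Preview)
Your high-level decomposition matches the paper's, but the bookkeeping of which piece produces the leading $\ln(T\Delta_i^2)/\Delta_i$ term is inverted, and this matters because the whole point of the theorem is the explicit constant $1270$.

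Your ``second piece'' (arm $i$ has an inflated posterior sample while its empirical mean is good) does \emph{not} yield a logarithmic term. Conditioned on $\hat\mu_{i,s}\le x_i$, the Gaussian tail gives $\Pr[\theta_i>y_i]\le \tfrac12 e^{-s\Delta_i^2/18}$; since $i_t=i$ forces each value of $s=n_i(t-1)$ to occur at most once, you get a geometric sum $\sum_{s\ge 0}e^{-s\Delta_i^2/18}=O(1/\Delta_i^2)$, a constant in $T$. In the paper this piece ($\omega_2$) contributes only $0.5/\Delta_i^2$. The logarithmic contribution in fact comes from the \emph{third} piece, precisely through Lemma~\ref{UBC 22}. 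After the change-of-measure from arm $i$ to arm $1$, one sums $\mathbb{E}[1/p_{1,s}-1]$ over the pull index $s$ of the optimal arm. The lemma's two regimes are both needed: the second regime ($s\ge L_{1,i}$) gives the $180/(T\Delta_i^2)$ tail that sums to $180/\Delta_i^2$, but the first regime contributes $29$ per pull for all $s<L_{1,i}=\Theta(\ln(T\Delta_i^2)/\Delta_i^2)$, i.e.\ $29L_{1,i}\approx 1252\ln(T\Delta_i^2)/\Delta_i^2$. That, together with the paper's initial $L_i\approx 18\ln(T\Delta_i^2)/\Delta_i^2$ pulls, is where the $1270$ lives. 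So your statement that Lemma~\ref{UBC 22} ``contributes only lower-order and constant terms'' is exactly backwards.

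Your worst-case argument also misidentifies the source of $\ln K$. No anti-concentration of $\max_i\theta_i(t)$ is used in the upper bound. The paper simply substitutes the critical gap $\Delta_*=e\sqrt{K\ln K/T}$ into the per-arm bound $O(\ln(T\Delta_i^2)/\Delta_i)$; since $T\Delta_*^2=e^2K\ln K$, the numerator becomes $\ln(e^2K\ln K)=O(\ln K)$, and summing over $K$ arms yields $O(\sqrt{KT\ln K})$. The anti-concentration of the Gaussian maximum is what drives the matching \emph{lower} bound in \cite{agrawalnear}, not the upper bound.
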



\textbf{Comparison with bounds in \cite{agrawalnear}.}

Note that Theorems~1.3 and 1.4 in \cite{agrawalnear}  only presented an $O \left(\sqrt{KT\ln(K)} \right)$  worst-case regret upper bound and a  matching $\Omega\left(\sqrt{KT\ln(K)} \right)$  regret lower bound for Vanilla Thompson Sampling. They did not state any problem-dependent results in their theorems. 
We compare our new results with  a problem-dependent regret bound embedded in their proof for Theorem~1.3. In page A.18 in \cite{agrawalnear}, they derived a $ \sum_{i: \Delta_i >0}288  \left(e^{64}+6 \right) \ln \left(T\Delta_i^2 + e^{32}\right) /\Delta_i   + O\left(1/\Delta_i\right)$ regret bound. 
Note that  our improved problem-dependent regret bound  implies an improved worst-case regret bound, i.e., our $O(\cdot)$ hides a smaller constant  than theirs.


\textbf{Comparison with $\epsilon$-TS.} Note that although $\epsilon$-TS reduces to Thompson Sampling when setting $\epsilon=1$,  the authors do not revisit and derive new bounds for Vanilla Thompson Sampling  as $\epsilon$-TS only works for reward distributions that have conjugate priors. As presented in Table~1 in \cite{jin2023thompson}, $\epsilon$-TS using Gaussian priors only works for Gaussian rewards, whereas Vanilla Thompson Sampling works for any bounded reward distributions. Vanilla Thompson Sampling simply uses Gaussian distributions to model the means of the underlying reward distributions.  The true reward distribution and Gaussian prior does not need to form a conjugate pair in \cite{agrawalnear}. 


The improvement of our problem-dependent regret bound mainly comes from Lemma~\ref{UBC 22} below, which improves the results shown in Lemma~2.13 in \cite{agrawalnear} significantly.
Let $\mathcal{F}_{t-1} = \left\{i_q, X_{i_q}(q),q = 1,2,\dotsc,t-1 \right\}$ collect all the history information by the end of round $t-1$. Let $L_{1,i} := 4\left(\sqrt{2}+\sqrt{3.5}\right)^2 \ln \left(T \Delta_i^2 + 100^{\frac{1}{3}}\right)/\Delta_i^2$.

\begin{lemma}
Let $\tau_s^{(1)}$ be the round when the $s$-th pull of the optimal arm $1$ occurs and $\theta_{1,s} \sim \mathcal{N}\left(\hat{\mu}_{1,s}, \frac{1}{s}\right)$. Then, we have 

$\mathbb{E}\left[\frac{1}{\mathbb{P} \left\{ \theta_{1,s} > \mu_1 - \frac{\Delta_i}{2}\mid \mathcal{F}_{\tau_{s}^{(1)}}  \right\} }   \right]-1  = \begin{cases}
 29,  & \forall s \ge 1,\\
 \frac{180}{T\Delta_i^2}, & \forall s \ge L_{1,i}. 
\end{cases}$.
\label{UBC 22}
\end{lemma}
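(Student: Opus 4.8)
The plan is to first integrate out the Gaussian randomness and reduce the claim to a concentration statement about the empirical mean $\hat\mu_{1,s}$. Conditioned on $\mathcal{F}_{\tau_s^{(1)}}$, the empirical mean $\hat\mu_{1,s}$ is deterministic and $\theta_{1,s} = \hat\mu_{1,s} + Z/\sqrt{s}$ with $Z \sim \mathcal{N}(0,1)$, so the inner probability equals the Gaussian survival value $g_s := Q\!\left(\sqrt{s}\,\bigl(\mu_1 - \tfrac{\Delta_i}{2} - \hat\mu_{1,s}\bigr)\right)$, where I write $Q(z) := \mathbb{P}\{Z > z\}$. The only remaining randomness in $\mathbb{E}[1/g_s]$ is that of $\hat\mu_{1,s}$; since arm $1$'s reward stream is i.i.d.\ and independent of the (random) pull times, $\hat\mu_{1,s}$ is distributed as the average of $s$ i.i.d.\ rewards supported on $[0,1]$ with mean $\mu_1$. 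With this reduction fixed, both statements are upper bounds on $\mathbb{E}[1/g_s] - 1 = \mathbb{E}[(1-g_s)/g_s]$.

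For the universal bound I would split on the deviation $W := \mu_1 - \hat\mu_{1,s}$. On the good event $\{W \le \Delta_i/2\}$ the argument of $Q$ is nonpositive, so $\theta_{1,s}$ has median at least $\mu_1 - \Delta_i/2$, giving $g_s \ge 1/2$ and hence $1/g_s - 1 \le 1$. On the bad event $\{W > \Delta_i/2\}$ I would set $z := \sqrt{s}\,(W - \Delta_i/2) > 0$ and apply the Gaussian anti-concentration lower bound $Q(z) \ge \frac{1}{\sqrt{2\pi}}\frac{z}{z^2+1}e^{-z^2/2}$ (a standard Gaussian tail lower bound; this is where the anti-concentration constant $c_0$ enters), so that $1/g_s \le \sqrt{2\pi}\,\frac{z^2+1}{z}\,e^{z^2/2}$. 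Controlling $\mathbb{E}\bigl[(1/g_s)\,\bm{1}\{W > \Delta_i/2\}\bigr]$ then uses the sub-Gaussian tail $\mathbb{P}\{W > t\} = \mathbb{P}\{\hat\mu_{1,s} < \mu_1 - t\} \le e^{-2 s t^2}$ for $t > 0$: through a layer-cake representation this pits the blow-up $e^{z^2/2} = e^{\,s(W-\Delta_i/2)^2/2}$ against the decay $e^{-2 s W^2}$, and since the combined exponent $s\bigl(\tfrac12(W-\tfrac{\Delta_i}{2})^2 - 2W^2\bigr)$ is strictly negative for $W > \Delta_i/2$, the integral converges to a constant independent of $s$ and $\Delta_i$. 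Adding the $+1$ from the good event yields the universal value $29$. The $1/z$ singularity at the threshold is handled by peeling off a thin shell $W \in (\Delta_i/2,\ \Delta_i/2 + O(1/\sqrt{s})]$ on which $g_s$ is still bounded below by an absolute constant.

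For the refined regime $s \ge L_{1,i}$, the crude good-event estimate ``$\le 1$'' is far too lossy because the good event has probability close to $1$, so I would instead bound, on $\{W \le \Delta_i/2\}$, $1/g_s - 1 = (1-g_s)/g_s \le 2(1-g_s)$ and use that $1 - g_s = Q\!\left(\sqrt{s}\,(\hat\mu_{1,s} - \mu_1 + \Delta_i/2)\right)$ is exponentially small because $\hat\mu_{1,s}$ concentrates near $\mu_1$. Taking expectation---either by convolving the sub-Gaussian fluctuation $\sqrt{s}(\hat\mu_{1,s}-\mu_1)$ with the independent standard normal, or via a margin split against a concentration event---gives a contribution decaying like $e^{-c\,s\Delta_i^2}$. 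The bad-event contribution is even smaller, since $\mathbb{P}\{W > \Delta_i/2\} \le e^{-s\Delta_i^2/2}$ already dominates the polynomial factor left from the anti-concentration step. Choosing $L_{1,i} = 4(\sqrt{2}+\sqrt{3.5})^2\ln(T\Delta_i^2 + 100^{1/3})/\Delta_i^2$ drives both exponentials at $s = L_{1,i}$ below the target $180/(T\Delta_i^2)$; the additive $100^{1/3}$ keeps the logarithm positive for small $T\Delta_i^2$, and monotonicity in $s$ extends the bound to all $s \ge L_{1,i}$.

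I expect the main obstacle to be the explicit constant tracking, concentrated in two places: the bad-event tail integral in the universal bound, where the Gaussian anti-concentration lower bound and the bounded-reward concentration must be combined while taming the $1/z$ singularity at the threshold; and the refined bound, where the split point and the constant $4(\sqrt{2}+\sqrt{3.5})^2$ in $L_{1,i}$ must be tuned so that the competing exponentials collapse to the clean targets $29$ and $180/(T\Delta_i^2)$.
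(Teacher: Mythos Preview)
Your approach is correct but takes a genuinely different route from the paper. The paper does not split on the deviation $W=\mu_1-\hat\mu_{1,s}$ and integrate $1/g_s$ directly. Instead it rewrites $\mathbb{E}[1/g_s]=\mathbb{E}[\mathcal{G}_{1,s}]=\sum_{r\ge 0}\mathbb{P}\{\mathcal{G}_{1,s}>r\}$ for a geometric random variable $\mathcal{G}_{1,s}$ with success probability $g_s$, and then bounds each tail term separately: $\mathbb{P}\{\mathcal{G}_{1,s}>r\}=1-\mathbb{E}\bigl[\mathbf{1}\{\max_{h\le r}\theta_{1,s}^h>\mu_1-\Delta_i/2\}\bigr]$ is controlled by inserting an $r$-dependent intermediate threshold $\hat\mu_{1,s}+z(r)/\sqrt{s}$, applying Gaussian anti-concentration to the max and Hoeffding to the empirical mean. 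Three ranges of $r$ (with $z(r)=\sqrt{\tfrac12\ln r}$ or $\sqrt{\tfrac23\ln r}$) and a separate $r$-dependent argument for $s\ge L_{1,i}$ give the explicit values $29$ and $180/(T\Delta_i^2)$.

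What each approach buys: your direct split is conceptually cleaner and, after the change of variables $U=\sqrt{s}(W-\Delta_i/2)$, reduces the bad-event piece to a single convergent integral of the form $\int (u+1/u)e^{-3u^2/2}\,du$; it should yield constants at least as good as the paper's, though your ``combined exponent'' phrasing is a heuristic for what is really an integration-by-parts estimate, and your ``monotonicity in $s$'' remark is about the upper bound (via $m=\sqrt{s}\Delta_i/2$ increasing), not about $\mathbb{E}[1/g_s]$ itself. The paper's geometric-variable route is more elaborate but has two advantages: the per-$r$ case analysis makes the constant tracking mechanical, and the ``number of independent draws until a good sample'' interpretation of $\mathcal{G}_{1,s}$ is exactly the operational quantity Thompson Sampling cares about, which is why the same machinery reappears in the analyses of TS-MA-$\alpha$ and TS-TD-$\alpha$.
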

\textbf{Discussion of Lemma~\ref{UBC 22}.} 
If event $\theta_{1,s} > \mu_1 - \Delta_i/2$ occurs, we can view the optimal arm as having a good posterior sample as compared to  sub-optimal arm $i$. Informally, a good posterior sample of the optimal arm indicates the pull of the optimal arm. The pulls of the optimal arm contribute to reducing the variance of the posterior distribution and the width of the confidence intervals.
The value of 
$\mathbb{E}\left[1/\mathbb{P} \left\{ \theta_{1,s} > \mu_1 - \frac{\Delta_i}{2}\mid \mathcal{F}_{\tau_{s}^{(1)}}  \right\} \right]-1$ quantifies the expected number of independent draws needed before the optimal arm has a good posterior sample. The first case in Lemma~\ref{UBC 22} says that even if the optimal arm has not been observed enough, i.e., $s < L_{1,i}$, it takes at most $29$ draws in expectation before the optimal arm has a good posterior sample. The second case says that after the optimal arm has been observed enough, i.e., $s \ge L_{1,i}$, the expected number of draws before the optimal arm has a good posterior sample is in the order of $\frac{1}{T\Delta_i^2}$. That is also to say, after the optimal arm has been observed enough, it will be pulled very frequently, which preserves the excellent practical performance of Thompson Sampling.
If the learning algorithm draws one random sample in each round, this quantity reflects the concentration speed of the optimal arm's posterior distribution. 
In Lemma~2.13 of \cite{agrawalnear}, the derived upper bound on this quantity is $e^{64} +5$ when $s \ge 1$. 
Our new bound is $29$.

Lemma~\ref{UBC 22} also shows that Thompson Sampling follows the principle of OFU, as do  UCB-based algorithms. 
Note that in each round, the learning algorithm either pulls the optimal arm or a sub-optimal arm. If the optimal arm is pulled, we do not suffer any performance loss and gain information for the optimal arm. The pull of the optimal arm contributes to being closer to having enough observations of the optimal arm and reducing the variance of the posterior distribution. If the learning algorithm pulls a sub-optimal arm, we attain information from the following two perspectives. First, the pull of the sub-optimal arm contributes to shrinking the width of the confidence interval and reducing the variance of the posterior distribution of the pulled sub-optimal arm. Second, the draw of the posterior sample for the optimal arm still provides some information. From Lemma~\ref{UBC 22} we know that even if the optimal arm has not been observed enough yet, it takes at most a constant number of i.i.d. draws in expectation before the optimal arm is pulled again.


\begin{algorithm}[ht]
	\caption{Vanilla Thompson Sampling (Algorithm~2 in \cite{agrawalnear})}
	\label{Alg: TS}
	
	\begin{algorithmic}[1]
 	\STATE \textbf{Ini:} 
 Pull each arm $i$ once to initialize $n_i$ and $\hat{\mu}_{i, n_i}$ 

		\FOR {$t = K+1,K+2,\cdots$}
		\STATE Draw $\theta_i(t) \sim \mathcal{N} \left(\hat{\mu}_{i,n_i}, 1/n_i \right)$ for all $i \in \mathcal{A}$ 
		\STATE Pull arm $i_t \in \arg \mathop {\max }_{i \in \mathcal{A}} \theta_i(t)$
	and  observe  $X_{i_t}(t)$	  
  \STATE Update  $n_{i_t}$ and $\hat{\mu}_{i_t, n_{i_t}}$. 
		\ENDFOR
	\end{algorithmic}
\end{algorithm}

\subsection{TS-MA-$\alpha$}\label{sec: TS-MA}

Since Vanilla Thompson Sampling draws a random sample for each arm in each round, the total number of posterior samples is $KT$ and the percentage of posterior samples for the optimal arms is exactly $m/K$ if we have $m$ optimal arms. Inspired by the facts that Gaussian distributions have nice anti-concentration bounds (Fact~\ref{Fact 1}) and exploration is unnecessary once sub-optimal arms have been pulled enough,  
we propose TS-MA-$\alpha$ (Algorithm~\ref{ALPHA: Thompson Sampling with Model Aggregation}) 
to save on some sampling computation and allocate computational resources efficiently.
TS-MA-$\alpha$ can be viewed as a randomized and parameterized version of UCB1 of \cite{auer2002finite} with parameter $\alpha$ controlling the trade-off between utility  (regret) and computation (number of drawn posterior samples). 

TS-MA-$\alpha$ draws   $\phi = O \left(T^{0.5(1-\alpha)} \ln^{0.5(3-\alpha)}(T) \right)$ i.i.d. random samples at once rather than drawing a fresh sample in each round. Let $\theta_i(t) := \mathop{\max}_{h \in [\phi]}\theta_{i, n_i(t-1)}^{h}$, where $\theta_{i, n_i(t-1)}^{h} \sim \mathcal{N} \left(\hat{\mu}_{i,n_i(t-1)}, \ \ln^{\alpha}(T)/n_i(t-1)\right)$, be the maximum among all these $\phi$ random samples. Then, the learner uses the best sample $\theta_i(t) $ in the learning  and 
pulls the arm with the highest $\theta_i(t)$, i.e., $i_t = \mathop{\arg\max}\theta_i(t)$. 
At the end of round $t$, the learner updates the statistics of the pulled arm $i_t$.  Since the posterior distribution of arm $i_t$ changes, now, the learner draws a new batch of $\phi$ i.i.d. random samples for arm $i_t$ according to the updated posterior distribution. Note that all $\theta_i(t)$ are analogous to the upper confidence bounds in UCB-based algorithms. 
The term model aggregation refers to the fact that we aggregate  $\phi$ posterior samples into one by putting all weight over the best posterior sample.

\begin{algorithm}[ht]
	\caption{TS-MA-$\alpha$}
	\label{ALPHA: Thompson Sampling with Model Aggregation}
	
	\begin{algorithmic}[1]
 \STATE \textbf{Input:} Time horizon $T$ and  parameter $\alpha \in [0,1]$
		\STATE \textbf{Initialization:} Set $\phi =  2 T^{0.5(1-\alpha)} \ln^{0.5(3-\alpha)}(T)/c_0 $; For each arm $i$, pull it once to initialize $n_i$ and $ \hat{\mu}_{i, n_i}$, set $\theta_i \leftarrow \mathop{\max}_{h \in [\phi]} \theta_{i,n_i}^{h} $, where each $\theta_{i,n_i}^{h} \sim \mathcal{N}\left(\hat{\mu}_{i,n_i},  \ \ln^{\alpha}(T)/n_i  \right)$ 
  
		\FOR {$t = K+1,K+2,\cdots,T$}
		 
		\STATE Pull arm $i_t \in \arg \mathop {\max }_{i \in \mathcal{A}} \theta_i$ and observe  $X_{i_t}(t)$	  
  \STATE Update $n_{i_t}$, $\hat{\mu}_{i_t, n_{i_t}}$, and $\theta_{i_t} \leftarrow \mathop{\max}_{h \in [\phi]} \theta_{i_t,n_{i_t}}^{h} $, where each $\theta_{i_t,n_{i_t}}^{h} \sim \mathcal{N}\left( \hat{\mu}_{i_t, n_{i_t}}, \ \ln^{\alpha}(T)/n_{i_t} \right)$. \label{ALG: Efficient}
		\ENDFOR
	\end{algorithmic}
\end{algorithm}

Now, we present regret bounds for TS-MA-$\alpha$. 

\begin{theorem}
    The problem-dependent regret  of Algorithm~\ref{ALPHA: Thompson Sampling with Model Aggregation} is  $\sum_{i: \Delta_i > 0} O \left(\ln^{\alpha+1}(T)/\Delta_i \right)$, where $\alpha \in [0,1]$. The worst-case regret  is $O \left(\sqrt{KT\ln^{\alpha+1}(T)} \right)$.
    \label{Theorem: Alpha-TS}
\end{theorem}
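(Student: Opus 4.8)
The plan is to bound the expected number of pulls $\mathbb{E}[n_i(T)]$ of each sub-optimal arm $i$ and then invoke $\mathcal{R}(T)=\sum_{i:\Delta_i>0}\Delta_i\,\mathbb{E}[n_i(T)]$. Throughout, the aggregated value $\theta_i(t)=\max_{h\in[\phi]}\theta^{h}_{i,n_i}$ plays the role of a randomized upper confidence index: the maximum of $\phi$ Gaussians concentrates a distance $\approx\sqrt{2\ln\phi\cdot\ln^\alpha(T)/n_i}$ above the empirical mean, and since $\ln\phi=O(\ln T)$ for the choice $\phi=2T^{0.5(1-\alpha)}\ln^{0.5(3-\alpha)}(T)/c_0$, this bonus is $O(\sqrt{\ln^{1+\alpha}(T)/n_i})$, matching a UCB1-style index with an inflated $\ln^{1+\alpha}(T)$ confidence width. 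I set the critical pull count $L_i:=C\,\ln^{1+\alpha}(T)/\Delta_i^2$ for a suitable constant $C$, beyond which arm $i$'s index drops below $\mu_1$. Writing $x_i:=\mu_1-\Delta_i/2=\mu_i+\Delta_i/2$, I would decompose
\[
\mathbb{E}[n_i(T)]\;\le\;\underbrace{\textstyle\sum_{t}\mathbb{P}\{i_t=i,\ \theta_i(t)>x_i\}}_{(A)}\;+\;\underbrace{\textstyle\sum_{t}\mathbb{P}\{i_t=i,\ \theta_1(t)\le x_i\}}_{(B)},
\]
since whenever $i_t=i$ either the sub-optimal index overshoots $x_i$, or $\theta_1(t)\le\theta_i(t)\le x_i$, i.e.\ the optimal index undershoots it.

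For term $(A)$ I would charge the first $L_i$ pulls trivially (at most $L_i$ rounds have $n_i\le L_i$) and, for $n_i=s>L_i$, apply a union bound over the $\phi$ samples together with a Gaussian tail and a sub-Gaussian bound on $\hat\mu_{i,s}$:
\[
\mathbb{P}\{\theta_{i,s}>x_i\}\;\le\;\mathbb{P}\{\hat\mu_{i,s}>\mu_i+\tfrac{\Delta_i}{4}\}+\phi\,\exp\!\Big(-\tfrac{s\Delta_i^2}{32\,\ln^\alpha(T)}\Big).
\]
Summing the geometric tail over $s>L_i$ renders the $\phi$ factor harmless, because the exponent at $s=L_i$ already beats $\ln\phi=O(\ln T)$; together with $\sum_s\mathbb{P}\{\hat\mu_{i,s}>\mu_i+\Delta_i/4\}=O(1/\Delta_i^2)$ this gives $(A)\le L_i+O(\ln^\alpha(T)/\Delta_i^2)=O(\ln^{1+\alpha}(T)/\Delta_i^2)$.

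Term $(B)$ is where the model-aggregation mechanism does its work and is the main obstacle. Conditioned on $\mathcal{F}_{t-1}$, $\mathbb{P}\{\theta_1(t)\le x_i\}=F^{\phi}$ with $F=\Phi\big((x_i-\hat\mu_{1,n_1})/\sqrt{\ln^\alpha(T)/n_1}\big)$. On the event $\{\hat\mu_{1,n_1}\ge\mu_1-\Delta_i/4\}$ the threshold $x_i$ lies below the sample mean, so $F<\tfrac12$ and hence $F^{\phi}<2^{-\phi}$, and $\sum_{t\le T}2^{-\phi}=T\,2^{-\phi}=o(1)$ by the choice of $\phi$. The delicate part is the complementary event, in which the optimal arm is under-estimated while it is seldom pulled, so a single bad value $\hat\mu_{1,s}$ could persist over many rounds; this is the analogue, for the aggregated sampler, of Lemma~\ref{UBC 22}. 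I would control it along that lemma's template: index by the $s$-th optimal pull and note that the per-round failure probability is $F^{\phi}$, whose reciprocal $1/(1-F^{\phi})$ now exceeds $1$ only negligibly (taking the maximum over $\phi$ draws makes the expected number of draws to a good optimal sample essentially one, via the anti-concentration Fact~\ref{Fact 1}), while the large variance $\ln^\alpha(T)/s$ at small $s$ and the sub-Gaussian bound $\sum_s\mathbb{P}\{\hat\mu_{1,s}<\mu_1-\Delta_i/2\}=O(1/\Delta_i^2)$ at large $s$ cap the remaining contribution. This yields $(B)=O(\ln^{1+\alpha}(T)/\Delta_i^2)$, so $\Delta_i\,\mathbb{E}[n_i(T)]=O(\ln^{1+\alpha}(T)/\Delta_i)$, proving the problem-dependent bound.

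Finally, for the worst-case bound I would combine the per-arm estimate with the trivial bound $\Delta_i\,\mathbb{E}[n_i(T)]\le\Delta_i T$, giving $\Delta_i\,\mathbb{E}[n_i(T)]\le\min\{\Delta_iT,\ C\ln^{1+\alpha}(T)/\Delta_i\}$. Splitting the arms at a threshold $\delta$ — charging $\delta T$ in total to arms with $\Delta_i\le\delta$ and $KC\ln^{1+\alpha}(T)/\delta$ to the rest — and optimizing at $\delta=\sqrt{KC\ln^{1+\alpha}(T)/T}$ yields $\mathcal{R}(T)=O\!\big(\sqrt{KT\ln^{1+\alpha}(T)}\big)$.
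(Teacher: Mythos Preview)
Your treatment of term $(A)$ and the worst-case argument are essentially the paper's (the paper calls your $(A)$ ``$\omega_1$'' and uses the same $L_i+\text{Hoeffding}+\text{union bound over }\phi$ structure). The divergence is in term $(B)$.

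The paper does \emph{not} route $(B)$ through a Lemma~\ref{UBC 22}-style argument at all. It simply bounds $\omega_2:=\sum_t\mathbb{P}\{\theta_1(t)<\mu_1\}$ by decomposing at the \emph{moving} threshold $\hat\mu_{1,s}+\sqrt{\ln(T)/s}$ rather than at the fixed $x_i$. The anti-concentration bound in Fact~\ref{Fact 1} gives, for each single sample, $\mathbb{P}\{\theta_{1,s}^{h}>\hat\mu_{1,s}+\sqrt{\ln^{1-\alpha}(T)}\cdot\sqrt{\ln^{\alpha}(T)/s}\}\ge c_0/\sqrt{T^{1-\alpha}\ln^{1-\alpha}(T)}$; with $\phi=2T^{0.5(1-\alpha)}\ln^{0.5(3-\alpha)}(T)/c_0$ samples, the max fails this with probability $\le e^{-2\ln T}=T^{-2}$. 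Independently, Hoeffding at scale $\sqrt{\ln(T)/s}$ (not $\Delta_i$) gives $\mathbb{P}\{\hat\mu_{1,s}+\sqrt{\ln(T)/s}<\mu_1\}\le 2T^{-2}$. Both failure modes are uniformly $T^{-2}$, so summing over $T$ rounds and $T$ possible values of $s$ yields $\omega_2\le O(1)$, a constant independent of $\Delta_i$.

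Your route through Lemma~\ref{UBC 22} has a genuine gap. That lemma's geometric-variable template relies on a \emph{fresh} Gaussian sample each round (so that the waiting time until arm~1 has a good sample is geometric), and the companion reduction (Lemma~\ref{bandit: lemma 1}) needs $\theta_1(t)$ to be random given $\mathcal{F}_{t-1}$. In TS-MA-$\alpha$ the batch max is drawn once per pull of arm~1 and then frozen, so neither ingredient is available. Relatedly, your $\Delta_i$-scale bound ``$\sum_s\mathbb{P}\{\hat\mu_{1,s}<\mu_1-\Delta_i/2\}=O(1/\Delta_i^2)$'' cannot be applied as stated to control $\sum_t$: a single bad $\hat\mu_{1,s}$ can persist for up to $T$ rounds, inflating the contribution by a factor of $T$. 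The paper's choice of a $\ln(T)$-scale Hoeffding threshold is precisely what absorbs that $T$ factor.
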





\textbf{Discussion of TS-MA-$\alpha$  and comparison with $\epsilon$-TS.}
At first glance, Algorithm~\ref{ALPHA: Thompson Sampling with Model Aggregation} seems to draw $T\phi = O \left( (T \ln(T))^{0.5(3-\alpha)} \right)$ random samples, but it can be implemented  only drawing $T$ random samples. In Line~\ref{ALG: Efficient} in Algorithm~\ref{ALPHA: Thompson Sampling with Model Aggregation}, we can directly draw  $\theta_{i_t}$ from  the distribution of the maximum of $\phi$ i.i.d.  Gaussian random variables. TS-MA-$\alpha$ is scalable as the total number of drawn random samples does not depend on $K$.  
When setting $\alpha=1$, TS-MA-$\alpha$ achieves a sub-linear $O \left(K\ln^{2}(T)/\Delta \right)$ regret, but only draws $O(T\ln(T))$ random samples in total even without using the efficient implementation. TS-MA-$\alpha$ is quite efficient when $K \gg O(\ln (T))$. When setting $\alpha =0$, TS-MA-$\alpha$  achieves the optimal $O\left( K\ln(T)/\Delta\right)$ problem-dependent regret. In addition, TS-MA-$\alpha$ is minimax optimal up to an extra $\sqrt{\ln^{\alpha+1}(T)}$ factor. 
Furthermore, TS-MA-$\alpha$ 
 works for any reward distribution with bounded support, whereas  $\epsilon$-TS only works for some special reward distributions including Gaussian, Bernoulli, Poisson, and Gamma. 
As compared to $\epsilon$-TS, our proposed TS-MA-$\alpha$ saves on drawing random samples from sub-optimal arms as the expected number of random samples drawn for a sub-optimal arm $i$  is at most $O \left( \phi\ln^{1+\alpha}(T) /\Delta_i^2 \right)$, whereas $\epsilon$-TS draws exactly $\epsilon T$ random samples in expectation for any sub-optimal arm.

Now, we compare our theoretical results with $\epsilon$-TS for the Bernoulli  reward case as Bernoulli distributions have bounded support.
 Theorem~2.1 of \cite{jin2023thompson}   states that the worst-case regret of $\epsilon$-TS is  $O\left( \sqrt{KT\log(eK\epsilon)}\right)$ and $\epsilon$-TS is asymptotically optimal. Since they do not present any finite-time problem-dependent results, we compare TS-MA-$\alpha$ to some inferred problem-dependent regret bounds based on their regret analysis. 
From their theoretical analysis, the optimal $O \left(\sum_{i: \Delta_i >0} \frac{\ln(T \Delta_i^2)}{ \Delta_i} \right)$ problem-dependent regret bound can be inferred by tuning $\epsilon$ as a constant.
However, this tuning  sacrifices the worst-case regret guarantee as  $\epsilon$-TS is only worst-case (minimax) optimal when setting $\epsilon = O(1/K)$. In addition, this tuning results in   $O(KT)$ posterior samples in total, which may not be more efficient than TS-MA-$\alpha$.

\subsection{TS-TD-$\alpha$}\label{sec: TS-TD}
\begin{algorithm}[ht]
	\caption{TS-TD-$\alpha$}
	\label{Thompson Sampling with Time-stamp Duelling-Alpha}
	
	\begin{algorithmic}[1]
  \STATE \textbf{Input:}  Time horizon $T$ and  parameter $\alpha \in [0,1]$ 
		\STATE \textbf{Initialization:} 
  Set $\phi = 2 T^{0.5(1-\alpha)} \ln^{0.5(3-\alpha)}(T) /c_0$; For each arm $i $,
pull it once to initialize $n_i$ and $ \hat{\mu}_{i, n_i}$,
set used sample counter $h_i \leftarrow 0$ and
   $\psi_i \leftarrow 0$ 
		\FOR {$t =K+ 1,K+2,\cdots,T$}
\FOR {$i \in \mathcal{A}$}

  \IF { $h_i \le \phi-1$} 
  
  \STATE Draw $\theta_{i}(t)  \sim \mathcal{N}\left(\hat{\mu}_{i,n_i}, \frac{\ln^{\alpha}(T)}{n_{i}} \right)$  {\color{blue}\%\text{Phase I}} \label{phase: TS}
  
  Set  $h_i \leftarrow h_i + 1 $ and $\psi_i \leftarrow \max\{\psi_i, \theta_{i}(t) \}$ 

\ELSE
\STATE Set $\theta_{i}(t) \leftarrow \psi_i$ {\color{blue}\%\text{Phase II}}
\label{phase: commitment}

  \ENDIF

  \ENDFOR
		 
		\STATE Pull arm $i_t \in \arg \mathop {\max }_{i \in \mathcal{A}} \theta_i(t)$
	and observe $X_{i_t}(t)$	  
  \STATE Update $n_{i_t}$, $\hat{\mu}_{i_t,n_{i_t}}$; Reset $h_{i_t} \leftarrow 0 $ and $\psi_{i_t} \leftarrow 0$.
		\ENDFOR
	\end{algorithmic}
\end{algorithm}
Since  TS-MA-$\alpha$ (Algorithm~\ref{ALPHA: Thompson Sampling with Model Aggregation})   draws   $T\phi$ random samples if not being implemented efficiently,
it may not save on posterior sampling computation as compared to Vanilla Thompson Sampling (Algorithm~\ref{Alg: TS}) for some  $\alpha,K$ and $T$. To further reduce computation, we propose TS-TD-$\alpha$ (Algorithm~\ref{Thompson Sampling with Time-stamp Duelling-Alpha}), an adaptive switching algorithm between  Algorithm~\ref{Alg: TS} and Algorithm~\ref{ALPHA: Thompson Sampling with Model Aggregation} with minor modifications.
The intuition behind TS-TD-$\alpha$ is  to keep drawing random samples for optimal arms, whereas saving on sampling for sub-optimal arms. The key reason for introducing Thompson Sampling (drawing posterior samples in each round) is, as already discussed in Section~\ref{sec: TS}, optimal arms need certain number of draws  to have concentrated posterior distributions. On the other hand, we would like to avoid drawing random samples for the sub-optimal arms. 
As already discussed in Section~\ref{sec: TS-MA}, once sub-optimal arms have been observed enough, it is not necessary to draw posterior samples to further explore them. 




TS-TD-$\alpha$ allocates $\phi=O \left(T^{0.5(1-\alpha)} \ln^{0.5(3-\alpha)}(T) \right)$  sampling budget for each pull of arm $i$ and separates all rounds between two consecutive pulls of arm $i$ into two phases.  Note that the posterior distributions for all the rounds between the two pulls stay the same as there is no new observation from arm $i$.
Drawing posterior samples is only allowed in the first phase (Vanilla Thompson Sampling), whereas the best posterior sample in the first phase will be used among all rounds in the second phase (TS-MA-$\alpha$). Note that the second phase does not draw any new posterior samples for the given arm $i$.
 Since an optimal arm is pulled very frequently,  the number of rounds between two consecutive pulls is unlikely to be greater than $\phi$, that is, optimal arms mostly stay in the first phase (Line~\ref{phase: TS}). On the other hand, sub-optimal arms are unlikely to be pulled after they have been pulled sufficiently, that is,  sub-optimal arms mostly stay in the second phase (Line~\ref{phase: commitment}).  

Now, we present  regret bounds for TS-TD-$\alpha$.
\begin{theorem}
    The problem-dependent regret  of Algorithm~\ref{Thompson Sampling with Time-stamp Duelling-Alpha} is
 $\sum_{i: \Delta_i > 0} O \left(\ln^{\alpha+1}(T)/\Delta_i\right)$, where $\alpha \in [0,1]$.  The worst-case regret  is $O \left(\sqrt{KT\ln^{\alpha+1}(T)} \right)$.
\label{Theorem: Main 3}
\end{theorem}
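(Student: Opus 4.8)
The plan is to follow the standard per-arm decomposition $\mathcal{R}(T) = \sum_{i : \Delta_i > 0} \Delta_i \mathbb{E}[n_i(T)]$ and bound $\mathbb{E}[n_i(T)]$ for each sub-optimal arm $i$, exploiting that TS-TD-$\alpha$ (Algorithm~\ref{Thompson Sampling with Time-stamp Duelling-Alpha}) coincides with Vanilla Thompson Sampling (Algorithm~\ref{Alg: TS}, but with the inflated variance $\ln^\alpha(T)/n_i$) on its Phase~I rounds (Line~\ref{phase: TS}) and with TS-MA-$\alpha$ (Algorithm~\ref{ALPHA: Thompson Sampling with Model Aggregation}) on its Phase~II rounds (Line~\ref{phase: commitment}). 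The first thing I would record is a distributional equivalence: since no reward for arm $i$ is observed between two consecutive pulls, the posterior $\mathcal{N}(\hat{\mu}_{i,n_i}, \ln^\alpha(T)/n_i)$ is frozen throughout a block, so the $\phi$ samples accumulated in Phase~I are i.i.d., and the committed value $\psi_i = \max_{h \in [\phi]} \theta_{i,n_i}^{h}$ has exactly the same law as the aggregated sample $\theta_i(t)$ used by TS-MA-$\alpha$. This lets me transport the Phase~II analysis from the proof of Theorem~\ref{Theorem: Alpha-TS} verbatim, and reduces the novelty to controlling the transitions between the two phases.

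Fixing a sub-optimal arm $i$, I set $L_i = \Theta(\ln^{\alpha+1}(T)/\Delta_i^2)$ and write $\mathbb{E}[n_i(T)] \le L_i + \sum_t \mathbb{P}\{i_t = i,\, n_i(t-1) \ge L_i\}$. Using $\mu_i + \Delta_i/2 = \mu_1 - \Delta_i/2$, a pull of arm $i$ forces either the over-exploration event $\{\theta_i(t) \ge \mu_i + \Delta_i/2\}$ (Term~A) or the under-exploration event $\{\theta_1(t) < \mu_1 - \Delta_i/2\}$ (Term~B). For Term~A I would first peel off the rounds with $\hat{\mu}_{i,n_i} > \mu_i + \Delta_i/4$ by a Hoeffding union bound over the number of pulls, contributing $O(1/\Delta_i^2)$ pulls; on the concentrated event $\theta_i(t) \ge \mu_i + \Delta_i/2$ forces the sample to exceed its mean by $\Delta_i/4$, which by the Gaussian tail / anti-concentration estimate (Fact~\ref{Fact 1}) has probability at most $\exp(-\Delta_i^2 n_i/(32\ln^\alpha(T)))$ in Phase~I and at most $\phi$ times that in Phase~II. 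Here the block structure is essential: each value $n_i = s$ labels exactly one block, Phase~I lasts at most $\phi$ rounds, and each block yields at most one pull, so summing the per-block bound $\phi\exp(-\Delta_i^2 s/(32\ln^\alpha(T)))$ over $s \ge L_i$ is a geometric series leaving only a polynomially small remainder once the constant in $L_i$ is large enough. Thus Term~A contributes $O(1/\Delta_i^2)$ pulls.

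Term~B is the crux and the step I expect to be the main obstacle. I would reuse the engine of Lemma~\ref{UBC 22}: indexing rounds by the number of optimal-arm pulls $s = n_1(t-1)$ and setting $p_s = \mathbb{P}\{\theta_1(t) > \mu_1 - \Delta_i/2 \mid \mathcal{F}\}$, the count of rounds in which a sub-optimal arm is pulled while the optimal arm is under-explored is controlled by $\sum_s \mathbb{E}[1/p_s - 1]$. The difficulty is that $\theta_1(t)$ is now generated by the hybrid rule, a single inflated-variance draw in Phase~I and the frozen maximum $\psi_1$ in Phase~II, so Lemma~\ref{UBC 22} does not apply directly. I would instead argue that in both phases the good-event probability $p_s$ stays lower-bounded strongly enough that $\mathbb{E}[1/p_s - 1]$ remains a constant for $s < L_i$ and becomes negligible for $s \ge L_i$, with the extra $\ln^\alpha(T)$ factor in $L_i$ compensating for the inflated posterior variance. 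The delicate point is the Phase~II commitment of the optimal arm: a single below-threshold frozen $\psi_1$ could keep it under-explored for an entire block. I would control this by noting that reaching Phase~II with a bad $\psi_1$ requires the maximum of $\phi$ i.i.d. draws to miss the threshold, an event of probability at most $(1-p_s)^\phi$ that is summably small for $s \ge L_i$ and, for $s < L_i$, is absorbed into the constant because the optimal arm is pulled frequently enough to rarely reach Phase~II at all. Summing over $s$ then yields $O(L_i) = O(\ln^{\alpha+1}(T)/\Delta_i^2)$ pulls from Term~B.

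Combining the three contributions gives $\mathbb{E}[n_i(T)] = O(\ln^{\alpha+1}(T)/\Delta_i^2)$, hence the problem-dependent regret $\sum_{i : \Delta_i > 0} \Delta_i \mathbb{E}[n_i(T)] = \sum_{i : \Delta_i > 0} O(\ln^{\alpha+1}(T)/\Delta_i)$. For the worst-case bound I would split the arms at a gap threshold $\Delta_0$: arms with $\Delta_i \le \Delta_0$ contribute at most $\Delta_0 \sum_i \mathbb{E}[n_i(T)] \le \Delta_0 T$ since the total number of pulls is $T$, while arms with $\Delta_i > \Delta_0$ contribute at most $K C \ln^{\alpha+1}(T)/\Delta_0$; optimizing $\Delta_0 = \sqrt{K C \ln^{\alpha+1}(T)/T}$ yields the claimed $O(\sqrt{KT\ln^{\alpha+1}(T)})$ worst-case regret.
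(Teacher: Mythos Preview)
Your overall strategy---the per-arm decomposition, the threshold $L_i = \Theta(\ln^{\alpha+1}(T)/\Delta_i^2)$, the split into an over-estimation term for arm $i$ and an under-estimation term for arm $1$, and the separate handling of Phase~I versus Phase~II for the optimal arm---matches the paper's proof (Appendix~\ref{app: TS-TD}) closely. Your Term~A argument and the worst-case reduction are fine.

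The gap is in Term~B, specifically the Phase~II case for the optimal arm. You claim that for small $s$ the bad-$\psi_1$ event ``is absorbed into the constant because the optimal arm is pulled frequently enough to rarely reach Phase~II at all.'' This is circular: that arm~$1$ is pulled frequently is exactly what the $\sum_s \mathbb{E}[1/p_s - 1]$ machinery establishes, and that machinery (the linking lemma of Lemma~\ref{bandit: lemma 1}/Lemma~\ref{martingale lemma}) only applies while arm~$1$ draws a \emph{fresh} sample. Once arm~$1$ is frozen at $\psi_1$, your $p_s$ becomes $0$ or $1$ given the history and the argument collapses. Nor is the tail $(1-p_s')^{\phi}$ you invoke uniformly small in $s$: $p_s'$ depends on $\hat{\mu}_{1,s}$, and for small $s$ with a poor empirical mean nothing forces $p_s'$ away from zero, while a single bad Phase~II block can contribute up to $T$ rounds to Term~B. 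For $\alpha$ near~$1$ one has $\phi = O(\ln T)$, so even the Lemma~\ref{UBC 22}-type estimate $\mathbb{P}\{\mathcal{G}_{1,s} > \phi\} \le e^{-\sqrt{\phi/(3\pi)}} + \phi^{-4/3}$ is far too weak to kill a factor of~$T$.

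The paper closes this gap by introducing, alongside the concentration event for arm~$i$, a concentration event $\mathcal{E}_1^{\mu}(t-1) = \bigl\{\lvert\hat{\mu}_{1,n_1(t-1)} - \mu_1\rvert \le \sqrt{2\ln(T)/n_1(t-1)}\bigr\}$ for the optimal arm, peeled off as a separate $O(1)$ term ($\omega_4$). On $\mathcal{E}_1^{\mu}$ the target $\mu_1$ lies within $O(\sqrt{\ln(T)/s})$ of the posterior mean $\hat{\mu}_{1,s}$, so the anti-concentration half of Fact~\ref{Fact 1} yields a single-sample exceedance probability lower bound that is \emph{uniform in $s$}; with the chosen $\phi$ this makes the max-of-$\phi$ miss probability at most $T^{-2}$, summable over all $T^2$ pairs $(s,t)$. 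With this event added (and the explicit Phase~I/Phase~II split via an indicator $\mathcal{T}_1(t)$ as in Lemma~\ref{martingale lemma}), your outline completes into the paper's proof.
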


  \textbf{Discussion of TS-TD-$\alpha$ and comparison with $\epsilon$-TS.}
When $T$ is large enough, 
for a sub-optimal arm $i$, TS-TD-$\alpha$  draws $O \left(T^{0.5(1-\alpha)}\ln^{0.5(\alpha+5)}(T)/\Delta_i^2\right)$ posterior samples in expectation. When setting $\alpha =0$, TS-TD-$\alpha$ draws  $O \left(\sqrt{T}\ln^{2.5}(T)/\Delta_i^2\right)$ posterior samples and achieves the optimal $O\left(K\ln(T)/\Delta \right)$ regret.
When setting $\alpha = 1$, TS-TD-$\alpha$  draws  $O \left(\ln^{3}(T)/\Delta_i^2\right)$ posterior samples and achieves an $O\left(K\ln^2(T)/\Delta \right)$ regret.  Still, parameter $\alpha$ is used to control the trade-off between utility  and sampling computation. The expected number of posterior samples in TS-TD-$\alpha$ is $\sum_{i: \Delta_i >0} O \left(T^{0.5(1-\alpha)}\ln^{0.5(\alpha+5)}(T)/\Delta_i^2\right) + mT$, where $m$ is the total number of optimal arms. When setting a reasonable $\alpha$, for example, $\alpha > 0.5$, the total number of drawn posterior samples in TS-TD-$\alpha$ for the sub-optimal arms is negligible. In contrast, $\epsilon$-TS draws $\epsilon KT$ posterior samples in expectation.

Although  TS-TD-$\alpha$ and $\epsilon$-TS are both hybrid learning algorithms, fundamentally, they are very different in allocating sampling resources and TS-TD-$\alpha$ is extremely efficient when the number of sub-optimal arms is large. For each arm regardless of being sub-optimal or optimal, $\epsilon$-TS simply flips a biased coin with parameter $\epsilon$ to  decide  whether to draw a posterior sample.
It does not consider the situation whether the given arm needs exploration or not. 
In a sharp contrast, TS-TD-$\alpha$ decides whether to draw posterior samples based on the need for exploration.
It draws posterior samples more frequently  for good arms, whereas avoiding drawing posterior samples for sub-optimal arms. 
The advantage of this sophisticated hybrid approach is  more sampling resources are allocated to the optimal arms. 









    

\section{Experimental Results} \label{sec: experiments}


In this section, we conduct experiments to compare our TS-MA-$\alpha$ and TS-TD-$\alpha$ with the following state-of-the-art algorithms which pull arm $i_t \in \arg \max _{i \in \mathcal{A}} a_i(t)$, where $a_i(t)$ is the index defined as follows for those algorithms. Note that except Vanilla TS (Gaussian priors) and $\epsilon$-TS (Gaussian priors), all the other compared algorithms are proved to be asymptotically optimal, i.e., achieving a $\sum_{i: \Delta_i >0}  \frac{\Delta_i\ln (T)}{\operatorname{KL}\left(\mu_i, \mu_1\right)} + o(\ln(T))$ regret  when $T$ is sufficiently large.
We would like to emphasize that the goal of this work is not to show that our proposed algorithms can empirically perform  better than the state-of-the-art algorithms. Instead, we would like to show that  designing computationally efficient algorithms from another angles is possible.

\begin{itemize}
 
    \item Vanilla TS (Gaussian priors) (Algorithm~\ref{Alg: TS}):  $a_i(t) \sim \mathcal{N} \left(\hat{\mu}_{i, n_i}, 1/n_i \right)$.
     \item Vanilla TS (Beta priors) \citep{agrawalnear}:  $a_i(t) \sim \text{Beta}\left(\hat{\mu}_{i, n_i} \cdot n_{i} + 1, (1 - \hat{\mu}_{i, n_i}) \cdot n_{i} + 1\right)$.
    \item $\epsilon$-TS (Gaussian priors)~\citep{jin2023thompson}: with probability $\epsilon$, draw $a_i(t) \sim \mathcal{N} \left(\hat{\mu}_{i, n_i}, 1/n_i \right)$, whereas  with probability $1-\epsilon$, use $a_i(t) = \hat{\mu}_{i, n_i}$. Since $\epsilon $ can be any value in $[1/K, 1]$, we choose $\epsilon = \left\{1/K, 1/\sqrt{K},1 \right\}$. Note that for the choice of $\epsilon = 1$, $\epsilon$-TS (Gaussian priors) boils down to Vanilla TS (Gaussian priors).
       \item $\epsilon$-TS (Beta priors)~\citep{jin2023thompson}: with probability $\epsilon$, 
       
       draw $a_i(t) \sim  \text{Beta}\left(\hat{\mu}_{i, n_i} \cdot n_{i} + 1, (1 - \hat{\mu}_{i, n_i}) \cdot  n_{i} + 1\right)$, whereas  with probability $1-\epsilon$, use $a_i(t) = \hat{\mu}_{i, n_i}$. 
    \item $\text {ExpTS }^{+}$~\citep{jin2022finite}: with probability $1/K$, draw $a_i(t) \sim P\left(\hat{\mu}_{i, n_i}, n_i\right)$, where $P\left(\mu, n_i\right)$ is a sampling distribution with  CDF defined as:$$F(x)= \begin{cases}1-1 / 2 e^{-(n_i-1) \cdot \mathrm{KL}(a, x)}, & x \geq a, \\ 1 / 2 e^{-(n_i-1) \cdot \mathrm{KL}(a, x)}, & x \leq a ,\end{cases}$$ 
    whereas with probability $1-1/K$, use $a_i(t) = \hat{\mu}_{i, n_i}$.

    \item KL-UCB++~\citep{menard2017minimax}: $a_i(t)=  \begin{aligned}
 &\sup \left\{a:  \mathrm{KL}\left(\hat{\mu}_{i, n_i}, a\right) \leq \frac{\overline{\ln}\left(\frac{T\left(\overline{\ln}^2\left(\frac{T}{K n_i}\right)+1\right)}{K n_i}\right)}{n_i}\right\},
\end{aligned}$ where  $\overline{\ln}(\cdot) := \max\{0, \cdot\}$.
\item Similar to \cite{jin2023thompson}, we also plot the asymptotic 
$\sum_{i: \Delta_i >0}  \frac{\Delta_i\ln (T)}{\operatorname{KL}\left(\mu_i, \mu_1\right)} + o(\ln(T)) $ regret lower bound to study the optimal regret growth rate.
\end{itemize}

We have in total $K=20$ arms and each arm is associated with a Bernoulli reward distribution. 
The mean rewards of the optimal arms are set to $0.9$ and the mean rewards of the sub-optimal arms are set to $0.8$, i.e., $\Delta_i = 0.1$ for all the sub-optimal arms. We set $\alpha = \left\{0.0, 0.8, 0.9,  1.0\right\}$.

\textbf{Discussions for the empirical performances.} We set $T = 10^5$.  Fig.~\ref{fig:regret} reports the results of the regret for all the compared algorithms. Note that when $\alpha = 0$, our proposed TS-TD-$\alpha$ boils down to Vanilla TS (Gaussian priors). 
It is not surprising that our proposed algorithms cannot beat the state-of-the-art algorithms due to the following reasons. 
First, all  the state-of-the-art algorithms except Vanilla TS (Gaussian priors) and $\epsilon$-TS (Gaussian priors)\footnote{$\epsilon$-TS (Gaussian priors) for Bernoulli rewards may not have theoretical guarantees as \cite{jin2023thompson} only provide theoretical guarantee for Gaussian priors for Gaussian rewards.} have been proved to achieve asymptotic optimality, whereas we only prove that our proposed algorithms are problem-dependent optimal. They may not be asymptotically optimal. 
Second, our proposed algorithms are developed based on Gaussian priors instead of Beta priors, but for bounded reward distributions.  Empirically, less exploration leads to better performances. Beta posteriors are in favour of reducing exploration as they  have bounded support, and on some occasions, Beta posterior has a much smaller variance than a  Gaussian posterior given arm $i$'s statistics $\hat{\mu}_{i,n_i}$ and $n_i$.  Smaller variance makes the learning algorithm explore less. 




\begin{figure*}[h]
    \centering
    \begin{subfigure}[b]{0.405\textwidth}
        \includegraphics[width=\textwidth]{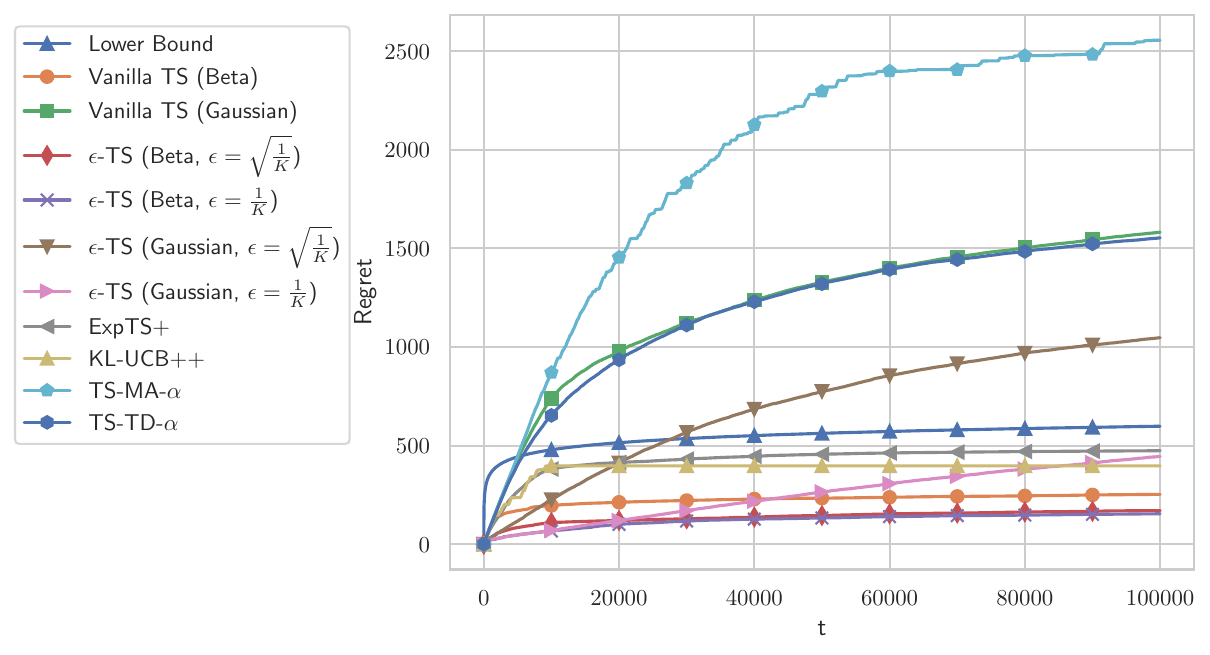}
        \caption{$\alpha = 0$}
        \label{fig:alpha0arm10}
    \end{subfigure}
    \hfill 
    \begin{subfigure}[b]{0.29\textwidth}
        \includegraphics[width=\textwidth]{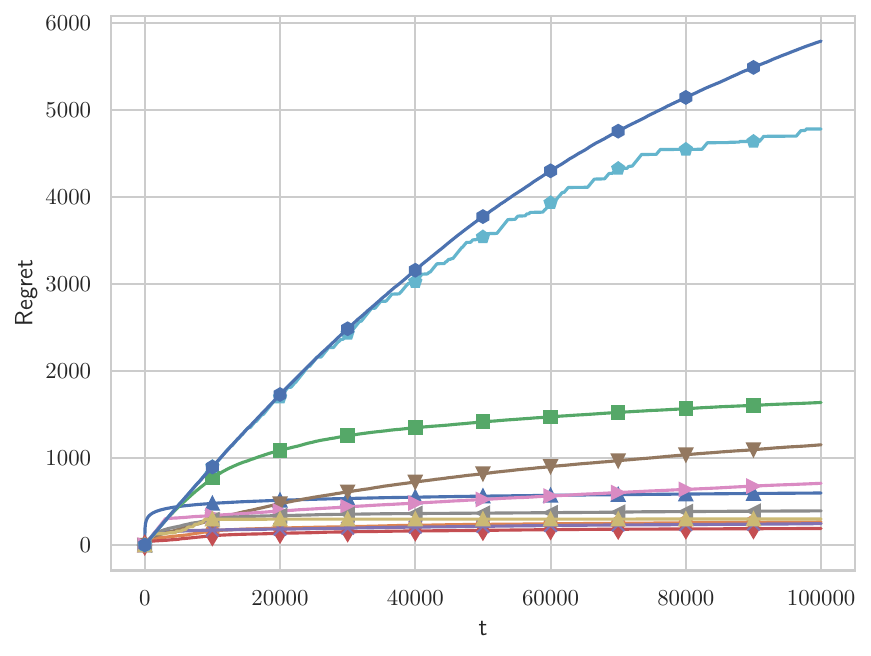}
        \caption{$\alpha = 0.8$}
        \label{fig:alpha08arm10}
    \end{subfigure}
    \hfill 
    \begin{subfigure}[b]{0.29\textwidth}
        \includegraphics[width=\textwidth]{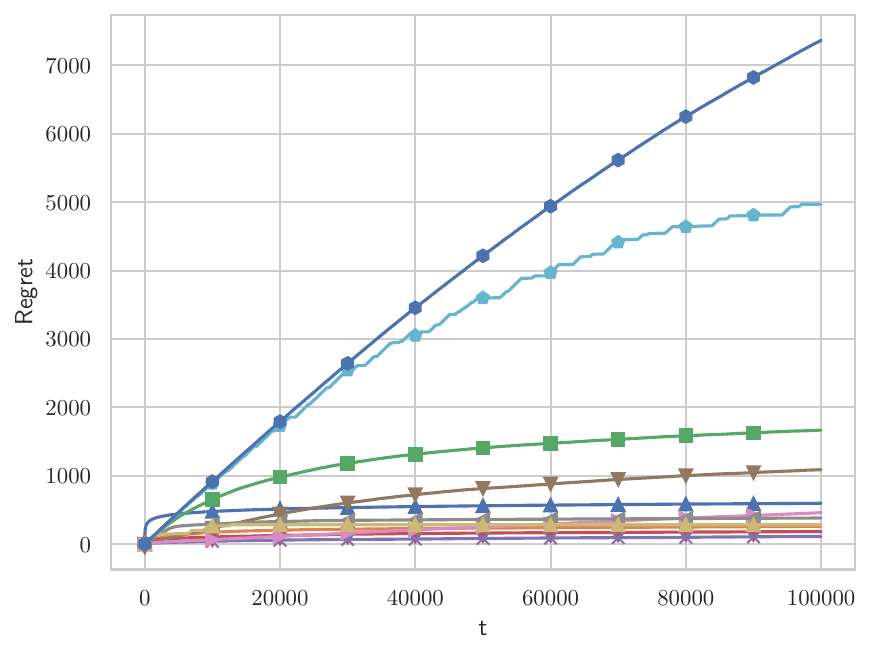}
        \caption{$\alpha = 1$}
        \label{fig:alpha09arm10}
    \end{subfigure}
    \caption{Comparison between different algorithms for $20$ arms with one optimal arm.}
    \label{fig:regret}
\end{figure*}




Fig.~\ref{fig:tstd_alpha_sample_num} reports the total number of drawn posterior samples in TS-TD-$\alpha$ with different values of $\alpha$ to show the adaptivity and efficiency in drawing posterior samples. Note that $T = 10^6$ and $K=20$.  From the results, we can see that when $\alpha = 0$, TS-TD-$\alpha$ basically is Vanilla TS (Gaussian priors). Regardless of the number of optimal arms, the total number of drawn posterior samples is $KT = 2 \times 10^7$.
However, for other  $\alpha$ value, e.g., $\alpha = 0.8$,  given  $K$, the total number of drawn posterior samples decreases with the decrease of the number of the optimal arms.  For example, when we have $K/2=10$ optimal arms, the total number of posterior samples drawn is around $1.0 \times 10^7 $ (the blue bars). In contrast, if we only have one optimal arm, the total number of posterior samples drawn is around $0.25 \times 10^7$ (the green bars). These experimental results demonstrate that TS-TD-$\alpha$ itself is able to figure out how to use sampling resources wisely and is extremely efficient when the number of sub-optimal arms is very large.  

Fig.~\ref{fig:tsma_alpha_pull_num} reports the percentage of data-dependent samples drawn in TS-MA-$\alpha$ for the optimal arms with different numbers of optimal arms and different values of $\alpha$. This is also to study the adaptive property in drawing data-dependent samples.  From the results we can see that  almost all the data-dependent samples are for optimal arms (more than $90\%$) and given $K$ and $\alpha$, the percentage increases with the increase of the number of the optimal arms. For example, when $\alpha = 0.8$, if there is only one optimal arm, the percentage is $94\%$ (the blue bar), whereas the percentage is around $97\%$ (the green bar) when the number of optimal arms is $K/2 = 10$.


\begin{figure}
    \centering
    \includegraphics[width = 0.9\columnwidth]{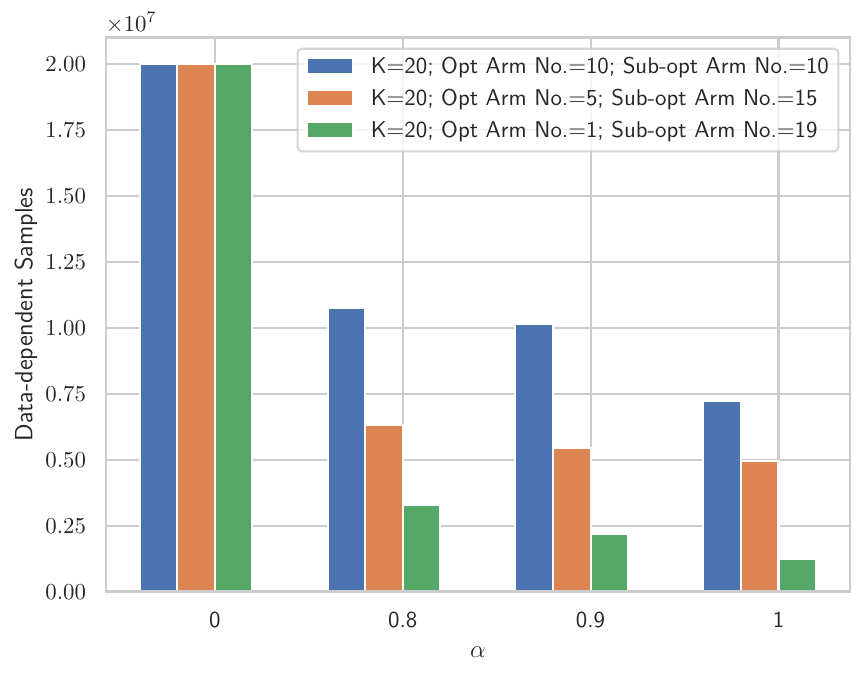}
    \caption{Total number of drawn posterior samples in TS-TD-$\alpha$.}
    \label{fig:tstd_alpha_sample_num}
\end{figure}

\begin{figure}
    \centering
    \includegraphics[width = 0.9\columnwidth]{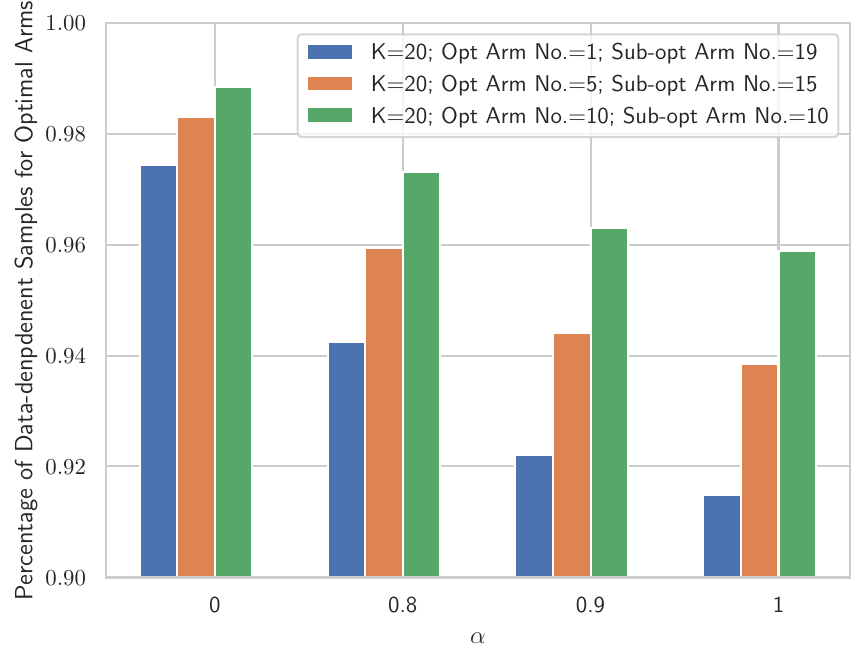}
    \caption{Percentage of data-dependent samples drawn for optimal arms in TS-Ma-$\alpha$.}
    \label{fig:tsma_alpha_pull_num}
\end{figure}






\section{Discussion and future work}


In this work, we have tackled Thompson Sampling-based algorithms from balancing utility  and computation. When $\alpha=0$, both of our proposed  algorithms have the optimal $O \left(K\ln(T)/\Delta\right)$  regret bound. 
 The key advantage of TS-MA-$\alpha$ is the scalability, whereas 
the key advantage of  TS-TD-$\alpha$ is  the expected number of drawn posterior samples for a sub-optimal arm is $\widetilde{O} \left( \sqrt{T}/\Delta^2\right)$, a rate in the order of $\sqrt{T}$ instead of the classical linear rate in $T$.  When $\alpha = 1$, although TS-TD-$\alpha$ is sub-optimal, the expected number of drawn posterior samples for a sub-optimal arm is further reduced to $O\left(\ln^3(T)/\Delta^2 \right)$, a rate only polylogarithmic in $T$. 
We believe it is worthwhile to study the fundamental reasons  resulting in the savings on posterior samples. We conjecture that it is at the cost of sacrificing learning algorithms' anytime property and the worst-case regret bounds. Note that Vanilla Thompson Sampling is anytime and has an $O \left(\sqrt{KT\ln(K)} \right)$ worst-case regret bound, whereas our proposed algorithms are not anytime and only have an $O \left(\sqrt{KT\ln(T)} \right)$ worst-case regret bound when setting $\alpha = 0$. 
The conjecture motivates 
 making our proposed algorithms  anytime as one of the future work. 
Since $\epsilon$-TS from \cite{jin2023thompson} has demonstrated excellent practical performance for some reward distributions in exponential family, another future work is 
 to study the possibility to generalize $\epsilon$-TS to other reward distributions, for example, bounded reward distributions or Sub-Gaussian reward distributions. 
\cite{jin2023thompson}  proved that $\epsilon$-TS (setting $\epsilon = 1$) with Gaussian priors and rewards is simultaneously minimax optimal and asymptotically optimal, whereas $\epsilon$-TS (setting $\epsilon$ as a constant)  is  simultaneously Sub-UCB,  asymptotically optimal, and minimax optimal up to an extra $\sqrt{\ln (K)}$ factor. 
So far, Gaussian priors for bounded rewards can only be Sub-UCB and minimax optimal up to an extra $\sqrt{\ln (K)}$ factor (our Theorem~\ref{Theorem: TS new 1}). It would be interesting  to see whether Gaussian priors for bounded rewards can   achieve  Sub-UCB and  asymptotic optimality. We are not optimistic about it and  conjecture that Gaussian priors for bounded rewards can never achieve asymptotic optimality. 










\newpage

\onecolumn

\title{Efficient and Adaptive Posterior Sampling Algorithms for Bandits\\(Supplementary Material)}
\maketitle


\appendix
The appendix is organized as follows.
\begin{enumerate}
    \item Motivations for our proposed learning algorithms are provided in Appendix~\ref{app: application};
    \item Useful facts related to this work are provided in Appendix~\ref{app: fact};
    \item Proofs for Theorem~\ref{Theorem: TS new 1} are presented in Appendix~\ref{App: Vanilla TS};
    \item Proofs for Theorem~\ref{Theorem: Alpha-TS} are presented in Appendix~\ref{app: TS-MA};
    \item Proofs for Theorem~\ref{Theorem: Main 3} are presented in Appendix~\ref{app: TS-TD};

\item Proofs for the worst-case regret bounds are presented in Appendix~\ref{app: worst-case} .

    
\end{enumerate}
\section{Motivations} \label{app: application}
Most bandit learning algorithms only focus on sample efficiency, but for large-scale applications, algorithms' \emph{computational efficiency} and \emph{algorithmic scalability} also play important roles.
A learning algorithm that uses fewer posterior samples is computationally efficient because drawing posterior samples and processing the generated samples both consume computational resources.
Algorithmic scalability is crucial because scalable algorithms can function steadily when the volume of data is increasing. Take online advertising systems for instance. A platform often receives a changing volume of interactions from users and needs to adjust the decisions frequently to capture users' dynamic preferences to maximize the overall click rate. An algorithm with good scalability can maintain a steady response speed when the volume of user interactions grows. Typically,  more computation brings better performance. 
A good learning algorithm should also be able to balance the trade-off between performance and computational cost. We show,  both theoretically and empirically, that our proposed algorithms do not sacrifice a lot in terms of accumulating rewards but consume fewer computational resources. We introduce parameter $\alpha \in [0,1]$, which allows practitioners to control this trade-off and personalize their algorithms based on their priorities and preferences.
We would like to emphasize that although our proposed learning algorithms use fewer posterior samples, provably, the exploration within the learning algorithms is still sufficient. This is because our proposed algorithms adaptively allocate sufficient computation to the optimal arms over time and save on computation from the sub-optimal arms. 
In the example of  online advertising systems, 
 our proposed algorithms can allocate more computational resources to process the portion of data believed to bring good performance to the platform instead of distributing all the available computational resources evenly.

\section{Useful Facts} \label{app: fact}
\begin{fact}
  Let $X_1, X_2, \dotsc, X_n$ be independent random variables with support $[0,1]$. Let ${\mu}_{1:n} = \frac{1}{n}\sum\limits_{i=1}^{n}X_i$. Then, for any $z >0$, we have
    \begin{equation}
        \mathbb{P} \left\{\left|{\mu}_{1:n} -\mathbb{E}\left[{\mu}_{1:n}\right]\right|  \ge z\sqrt{\frac{1}{n}}  \right\} \le 2e^{-2z^2}\quad.
    \end{equation}
    \label{Hoeffding}
\end{fact}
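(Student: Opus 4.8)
The plan is to prove the one-sided bound $\mathbb{P}\{\mu_{1:n} - \mathbb{E}[\mu_{1:n}] \ge z/\sqrt{n}\} \le e^{-2z^2}$ via the Chernoff method, and then combine it with the symmetric lower-tail estimate through a union bound to recover the stated factor of $2$. First I would center the variables by setting $Y_i := X_i - \mathbb{E}[X_i]$, so each $Y_i$ has zero mean and is supported on an interval of length $b_i - a_i = 1$ (since $X_i \in [0,1]$). Writing $S := \sum_{i=1}^n Y_i = n\left(\mu_{1:n} - \mathbb{E}[\mu_{1:n}]\right)$, the target upper-tail event becomes $\{S \ge z\sqrt{n}\}$.

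For any $\lambda > 0$, Markov's inequality applied to $e^{\lambda S}$ gives $\mathbb{P}\{S \ge z\sqrt{n}\} \le e^{-\lambda z\sqrt{n}}\,\mathbb{E}[e^{\lambda S}]$, and independence factorizes the moment-generating function as $\mathbb{E}[e^{\lambda S}] = \prod_{i=1}^n \mathbb{E}[e^{\lambda Y_i}]$. The key ingredient is Hoeffding's lemma: for a zero-mean random variable $Y$ supported on an interval of length $\ell = b-a$, one has $\mathbb{E}[e^{\lambda Y}] \le e^{\lambda^2 \ell^2/8}$. With $\ell = 1$ this yields $\mathbb{E}[e^{\lambda S}] \le e^{n\lambda^2/8}$, hence $\mathbb{P}\{S \ge z\sqrt{n}\} \le \exp\!\left(-\lambda z\sqrt{n} + n\lambda^2/8\right)$. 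Optimizing the exponent over $\lambda$ (the minimizer is $\lambda = 4z/\sqrt{n}$) collapses it to $-2z^2$, which gives the one-sided bound $e^{-2z^2}$.

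To establish Hoeffding's lemma I would invoke convexity: for $y \in [a,b]$ the chord inequality $e^{\lambda y} \le \frac{b-y}{b-a}e^{\lambda a} + \frac{y-a}{b-a}e^{\lambda b}$ holds, so taking expectations and using $\mathbb{E}[Y]=0$ bounds $\mathbb{E}[e^{\lambda Y}]$ by a convex combination of $e^{\lambda a}$ and $e^{\lambda b}$. Taking logarithms and rewriting the result as a function $L(u)$ of the variable $u := \lambda(b-a)$, a direct computation shows $L(0)=0$, $L'(0)=0$, and $L''(u) \le 1/4$ uniformly in $u$ (the second derivative equals the variance of an auxiliary Bernoulli-type law, which is at most $1/4$); Taylor's theorem with remainder then gives $L(u) \le u^2/8 = \lambda^2(b-a)^2/8$.

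The lower tail $\mathbb{P}\{S \le -z\sqrt{n}\}$ follows by applying the identical argument to $-Y_i$, and a union bound over the two one-sided events yields the factor of $2$ and the claimed inequality. The only genuinely delicate step is the uniform estimate $L''(u) \le 1/4$ inside Hoeffding's lemma; the remainder is the routine Chernoff optimization and bookkeeping over the substitution $z\sqrt{1/n} = z/\sqrt{n}$.
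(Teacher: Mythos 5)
Your proposal is correct: the centering, the Chernoff--Markov step, Hoeffding's lemma with the convexity/Taylor argument (including the uniform bound $L''(u)\le 1/4$), the optimization at $\lambda = 4z/\sqrt{n}$ yielding the exponent $-2z^2$, and the union bound for the two-sided statement are all the standard textbook derivation, carried out without gaps. The paper itself states this as a known fact (Fact~\ref{Hoeffding}) and provides no proof, so there is no in-paper argument to compare against; your write-up simply supplies the canonical proof that the authors implicitly rely on.
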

\begin{fact}
    (Concentration and anti-concentration bounds of Gaussian distributions). For a Gaussian distributed random variable $Z$ with mean $\mu$ and variance $\sigma^2$, for any $z > 0$, we have
\begin{equation}
    \mathbb{P} \left\{Z > \mu + z \sigma \right\} \le \frac{1}{2}e^{- \frac{z^2}{2}}, \quad \mathbb{P} \left\{Z < \mu - z \sigma \right\} \le \frac{1}{2}e^{- \frac{z^2}{2}}\quad,
\end{equation}
and 
\begin{equation}
     \mathbb{P} \left\{Z > \mu + z \sigma \right\} \ge \frac{1}{\sqrt{2\pi}} \frac{z}{z^2+1} e^{- \frac{z^2}{2}} \quad.
\end{equation}
\label{Fact 1}
\end{fact}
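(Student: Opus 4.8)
The plan is to reduce all three inequalities to the standard normal tail and then dispatch each by a short calculus argument. First I would write $W := (Z-\mu)/\sigma \sim \mathcal{N}(0,1)$, so that $\mathbb{P}\{Z > \mu + z\sigma\} = \mathbb{P}\{W > z\} =: Q(z) = \frac{1}{\sqrt{2\pi}}\int_z^\infty e^{-x^2/2}\,dx$. By symmetry of the standard normal density, $\mathbb{P}\{Z < \mu - z\sigma\} = \mathbb{P}\{W < -z\} = Q(z)$, so the second (lower-tail) inequality follows for free once the first is established; this leaves only the upper-tail concentration bound and the anti-concentration bound to prove.

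For the upper-tail bound I would show $g(z) := \frac{1}{2}e^{-z^2/2} - Q(z) \ge 0$ on $[0,\infty)$. Note $g(0) = \frac{1}{2} - \frac{1}{2} = 0$, and since $Q'(z) = -\frac{1}{\sqrt{2\pi}}e^{-z^2/2}$ one computes $g'(z) = e^{-z^2/2}\left(\frac{1}{\sqrt{2\pi}} - \frac{z}{2}\right)$, which is positive for $z < 2/\sqrt{2\pi}$ and negative beyond. Thus $g$ rises from $0$, attains a single interior maximum, then decreases; since $g(z) \to 0$ as $z \to \infty$, the decreasing branch stays nonnegative, giving $g \ge 0$ everywhere. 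I expect this to be the main obstacle: the naive Chernoff/exponential-moment estimate only yields $\mathbb{P}\{W > z\} \le e^{-z^2/2}$ with leading constant $1$, so recovering the sharp factor $\frac{1}{2}$ (which is tight at $z = 0$) genuinely requires the monotonicity-plus-limit argument rather than a one-line bound.

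The anti-concentration bound is the cleanest step. I would set $h(z) := \sqrt{2\pi}\,Q(z) - \frac{z}{z^2+1}e^{-z^2/2} = \int_z^\infty e^{-x^2/2}\,dx - \frac{z}{z^2+1}e^{-z^2/2}$, noting $h(z) \to 0$ as $z \to \infty$. Differentiating and using $\frac{d}{dz}\frac{z}{z^2+1} = \frac{1-z^2}{(z^2+1)^2}$, the relevant bracket collapses: its numerator is $(z^2+1)^2 + (1-z^2) - z^2(z^2+1) = 2$, so
\[
h'(z) = -\frac{2}{(z^2+1)^2}\,e^{-z^2/2} < 0 .
\]
Hence $h$ is strictly decreasing with limit $0$ at infinity, forcing $h(z) > 0$ for every $z > 0$, which is exactly the claimed lower bound. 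Rescaling back from $W$ to $Z$ then yields all three inequalities of the Fact.
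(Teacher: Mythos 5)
Your proof is correct; the main thing to note is that there is no paper proof to compare against: Fact~\ref{Fact 1} is stated in Appendix~\ref{app: fact} as a standard result (the classical Gaussian tail bound and the lower Mills-ratio bound) and is used without proof throughout, so your argument supplies a self-contained derivation rather than an alternative to one. All three of your steps check out. The reduction to $W=(Z-\mu)/\sigma$ and the symmetry of the standard normal density correctly dispatch the lower-tail bound once the upper-tail bound is in hand. For the upper tail, the computation $g'(z)=e^{-z^2/2}\left(\frac{1}{\sqrt{2\pi}}-\frac{z}{2}\right)$ is right, and the unimodality-plus-vanishing-limit argument is sound: $g$ rises from $g(0)=0$, and on the decreasing branch a decreasing function with limit $0$ at infinity must stay nonnegative. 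Your observation that a plain Chernoff bound only yields the constant $1$, while $\frac{1}{2}$ is tight at $z=0$, is also accurate; for what it is worth, there is a one-line alternative that recovers the $\frac{1}{2}$: substituting $x=z+t$ gives $e^{-(z+t)^2/2}=e^{-z^2/2}e^{-zt}e^{-t^2/2}\le e^{-z^2/2}e^{-t^2/2}$ for $z,t\ge 0$, whence $Q(z)\le e^{-z^2/2}\cdot\frac{1}{\sqrt{2\pi}}\int_0^\infty e^{-t^2/2}\,dt=\frac{1}{2}e^{-z^2/2}$. For the anti-concentration bound, your algebra is correct---the numerator $(z^2+1)^2+(1-z^2)-z^2(z^2+1)$ does collapse to $2$, giving $h'(z)=-\frac{2}{(z^2+1)^2}e^{-z^2/2}<0$---and strict decrease to the limit $0$ at infinity forces $h(z)>0$ for all $z>0$, which is exactly the classical proof of the lower Mills-ratio (Gordon-type) inequality. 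No gaps.
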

\begin{fact}
    For any fixed $T > e^3$, for any $\alpha \in [0,1]$, we have
    \begin{equation}
        \ln^{1-\alpha} (T) \le (1-\alpha)\ln(T) +1\quad.
    \end{equation}
    \label{Fact: calculus}
\end{fact}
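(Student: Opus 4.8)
The plan is to reduce the claimed inequality to a standard concavity (equivalently, weighted AM--GM) statement about the power function. Writing $x := \ln(T)$ and $\beta := 1 - \alpha$, the hypothesis $T > e^3$ guarantees $x > 3 > 0$ (in fact $x > 0$ is all the argument needs), and $\alpha \in [0,1]$ translates to $\beta \in [0,1]$. The target then becomes $x^{\beta} \le \beta x + 1$, which I would establish by first proving the slightly sharper bound $x^{\beta} \le \beta x + (1-\beta)$ and then discarding the nonnegative slack via $1 - \beta \le 1$.

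The core step is to prove $x^{\beta} \le \beta x + (1-\beta)$ for all $x > 0$ and $\beta \in [0,1]$. I would do this by weighted AM--GM applied to the two values $x$ and $1$ with weights $\beta$ and $1 - \beta$:
\[
x^{\beta} \cdot 1^{\,1-\beta} \le \beta x + (1-\beta)\cdot 1 .
\]
Equivalently, one can invoke the concavity of $t \mapsto t^{\beta}$ on $(0,\infty)$ and compare it with its tangent line $\beta t + (1-\beta)$ at $t = 1$; either route yields the same inequality. Translating back via $x = \ln(T)$ and $\beta = 1 - \alpha$ gives $\ln^{1-\alpha}(T) \le (1-\alpha)\ln(T) + \alpha$, and since $\alpha \le 1$ this is at most $(1-\alpha)\ln(T) + 1$, which is exactly the claim.

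The degenerate cases are immediate and worth recording to confirm tightness: at $\beta = 0$ both sides of $x^{\beta} \le \beta x + (1-\beta)$ equal $1$, and at $\beta = 1$ both equal $x$, so the AM--GM bound is sharp at the endpoints of $[0,1]$. There is no genuine obstacle here; the only things to notice are the correct reformulation into the power-function inequality, and that the stated hypothesis $T > e^3$ is in fact stronger than necessary---any $T > 1$ already keeps $\ln^{1-\alpha}(T)$ well defined and makes the argument go through.
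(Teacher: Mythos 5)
Your proof is correct, but it takes a genuinely different route from the paper's. The paper fixes $T$ and runs a calculus argument in $\alpha$: it sets $f(\alpha) = (1-\alpha)\ln(T) + 1 - \ln^{1-\alpha}(T)$, computes $f'(\alpha) = -\ln(T) + \ln^{1-\alpha}(T)\ln(\ln(T))$, locates the critical point at $\alpha = \frac{\ln(\ln(\ln(T)))}{\ln(\ln(T))}$, and deduces $f \ge 0$ on $[0,1]$ from the sign pattern of $f'$ together with the endpoint values $f(0)=1$ and $f(1)=0$. You instead substitute $x = \ln(T)$, $\beta = 1-\alpha$, and invoke weighted AM--GM (equivalently, the tangent line of the concave map $t \mapsto t^{\beta}$ at $t=1$) to obtain the sharper bound $x^{\beta} \le \beta x + (1-\beta)$, then relax $1-\beta = \alpha$ to $1$. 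Your route buys three things: it is more elementary, avoiding differentiation in the exponent and triple logarithms entirely; it yields the strictly stronger inequality $\ln^{1-\alpha}(T) \le (1-\alpha)\ln(T) + \alpha$; and it exposes that the hypothesis $T > e^3$ is an artifact of the paper's method --- the paper needs $\ln(\ln(\ln(T)))$ well defined and positive for its critical-point analysis, whereas your argument goes through for any $T > 1$, as you correctly observe. What the paper's approach buys in exchange is the exact location and size of the maximal slack $f(\alpha)$, information your one-line AM--GM bound does not provide, though nothing downstream in the paper uses it.
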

\begin{proof}
    Let $f(\alpha) = (1-\alpha)\ln(T) +1-\ln^{1-\alpha} (T)$. Then, we have
    \begin{equation}
        f'(\alpha) = -\ln(T)+\ln^{1-\alpha}(T)\ln(\ln(T))\quad.
    \end{equation}
    It is not hard to verify that $f'\left( \frac{\ln(\ln(\ln(T)))}{\ln(\ln(T))} \right)=0$, and, when $\alpha \in \left[0, \frac{\ln(\ln(\ln(T)))}{\ln(\ln(T))} \right]$, we have $f'(\alpha) \ge 0$, and when $\alpha \in \left[ \frac{\ln(\ln(\ln(T)))}{\ln(\ln(T))},1 \right]$, we have $f'(\alpha) \le 0$. We also have $f(0) = 1$ and $f(1) = 0$. Therefore, for any $\alpha \in \left[0, \frac{\ln(\ln(\ln(T)))}{\ln(\ln(T))} \right]$, we have $f(\alpha) \ge 1$, and, any $\alpha \in \left[ \frac{\ln(\ln(\ln(T)))}{\ln(\ln(T))},1 \right]$, we have $f(\alpha) \ge 0$.
\end{proof}

\section{Proofs for Theorem~\ref{Theorem: TS new 1}}\label{App: Vanilla TS}

In this appendix, we present proofs for Lemma~\ref{UBC 22} and the full proofs for Theorem~\ref{Theorem: TS new 1}.

 \subsection{Proofs for Lemma~\ref{UBC 22}}
\paragraph{Re-statement of Lemma~\ref{UBC 22}.} Let $L_{1,i} = 4\left(\sqrt{2}+\sqrt{3.5}\right)^2 \ln \left(T \Delta_i^2 + 100^{\frac{1}{3}}\right)/\Delta_i^2$. Let $\tau_s^{(1)}$ be the round when the $s$-th pull of the optimal arm $1$ occurs and $\theta_{1,s} \sim \mathcal{N}\left(\hat{\mu}_{1,s}, \frac{1}{s}\right)$. Then, we have 
\begin{equation*}
    \mathbb{E}\left[\frac{1}{\mathbb{P} \left\{ \theta_{1,s} > \mu_1 - \frac{\Delta_i}{2}\mid \mathcal{F}_{\tau_{s}^{(1)}}  \right\} }-1   \right]  = \begin{cases}
 29,  & \forall s \ge 1,\\
 \frac{180}{T\Delta_i^2}, & \forall s \ge L_{1,i}. 
\end{cases}
\end{equation*}

\begin{proof}{of Lemma~\ref{UBC 22}:}
   We have two results stated in Lemma~\ref{UBC 22}. The purpose of the first result is to show that even if the optimal arm has not been pulled enough, the regret caused by the underestimation of the optimal arm can still be well controlled.  The second result states that after the optimal arm $1$ has been pulled enough, the regret caused by the underestimation of the optimal arm is very small. 

         Similar to the proof of Lemma~2.13 in \cite{agrawalnear}, we introduce a geometric random variable in the proof. For any integer $s \ge 1$, we let $\mathcal{G}_{1,s}$ be a geometric random variable denoting the number of consecutive independent trials until event  $\theta_{1,s} > \mu_1 - \frac{\Delta_i}{2}$  occurs. Then, we have
         \begin{equation}
          \begin{array}{l}
    \mathbb{E}\left[\frac{1}{\mathbb{P} \left\{ \theta_{1,s} > \mu_1 - \frac{\Delta_i}{2}\mid \mathcal{F}_{\tau_{s}^{(1)}}  \right\} }\right] =             \mathbb{E}\left[    \frac{1}{ \mathbb{P} \left\{\theta_{1,s}>\mu_1 - \frac{\Delta_i}{2} \mid \hat{\mu}_{1,s}  \right\} }  \right] 
           =    \mathbb{E} \left[\mathbb{E} \left[\mathcal{G}_{1,s} \mid \hat{\mu}_{1,s}\right] \right] 
             =    \mathbb{E} \left[\mathcal{G}_{1,s} \right].
              \end{array} 
         \end{equation}
  To upper bound $\mathbb{E}\left[\frac{1}{\mathbb{P} \left\{ \theta_{1,s} > \mu_1 - \frac{\Delta_i}{2}\mid \mathcal{F}_{\tau_{s}^{(1)}}  \right\} }   \right]$, it is sufficient to upper bound $ \mathbb{E} \left[\mathcal{G}_{1,s} \right]$. To upper bound $ \mathbb{E} \left[\mathcal{G}_{1,s} \right]$, we use the definition of expectation when the random variable has non-negative integers as support. We have
             \begin{equation}
    \begin{array}{lllll}
            \mathbb{E} \left[\mathcal{G}_{1,s} \right] & = & 
        \sum\limits_{r = 0}^{\infty} \mathbb{P} \left\{\mathcal{G}_{1,s} > r  \right\} &= & \sum\limits_{r = 0}^{\infty} \mathbb{E} \left[\mathbb{P} \left\{\mathcal{G}_{1,s} > r \mid \hat{\mu}_{1,s} \right\}  \right]\quad. 
         \end{array}
\end{equation}
For any $s \ge 1$, we claim 
         \begin{equation}
\begin{array}{lll}   
   \mathbb{E} \left[\mathbb{P} \left\{\mathcal{G}_{1,s} > r \mid \hat{\mu}_{1,s} \right\}  \right] & \le
&
\begin{cases}
   1,& r \in [0,12] \quad,\\
     e^{-\sqrt{\frac{r  }{\pi}} \cdot \frac{\ln(13)}{\ln(13)+2} } + \frac{1}{r},& r \in [13,100] \quad,\\
e^{- \sqrt{\frac{r}{3\pi}} }  + r^{-\frac{4}{3}}   ,              & \text{$r \ge 101$}\quad.
\end{cases}
\end{array}
\label{UBC 333}
\end{equation}
      For any $s \ge L_{1,i}=\frac{4\left(\sqrt{2}+\sqrt{3.5}\right)^2 \ln \left(T \Delta_i^2 + 100^{\frac{1}{3}}\right)}{\Delta_i^2}$, we claim
        \begin{equation}
 \begin{array}{lll}
   \mathbb{E} \left[ \mathbb{P} \left\{\mathcal{G}_{1,s} > r \mid \hat{\mu}_{1,s}\right\} \right] &\le&
\begin{cases}
1, & r=0\quad,\\
    \frac{1}{r^2 \left(T\Delta_i^2\right)} +\frac{0.5^r}{ \left(T\Delta_i^2\right)^r} ,&  r \in \left[1, \left\lfloor \left( T\Delta_i^2 + 100^{\frac{1}{3}} \right)^3 \right\rfloor \right]\quad,\\
e^{- \sqrt{\frac{r}{3\pi}} } + r^{-\frac{4}{3}}   ,              & \text{$r \ge \left\lfloor \left( T\Delta_i^2 + 100^{\frac{1}{3}} \right)^3 \right\rfloor + 1$} \quad.
\end{cases}
\end{array}
\label{UBC 4}
\end{equation}
The proofs for the results shown in (\ref{UBC 333}) and   (\ref{UBC 4}) are deferred to the end of this session.

With (\ref{UBC 333}) in hand, for any fixed integer $s \ge 1$, we have
    \begin{equation}
        \begin{array}{ll}
&\mathbb{E}\left[\frac{1}{\mathbb{P} \left\{ \theta_{1,s} > \mu_1 - \frac{\Delta_i}{2}\mid \mathcal{F}_{\tau_{s}^{(1)}}  \right\} }-1   \right] \\
=& 
\mathbb{E} \left[\mathcal{G}_{1,s} \right]-1 \\
         = & \sum\limits_{r = 0}^{\infty} \mathbb{E} \left[\mathbb{P} \left\{\mathcal{G}_{1,s} > r \mid \hat{\mu}_{1,s} \right\}  \right] -1\\
               = & \sum\limits_{r = 0}^{12} \mathbb{E} \left[\mathbb{P} \left\{\mathcal{G}_{1,s} > r \mid \hat{\mu}_{1,s} \right\}  \right] + \sum\limits_{r = 13}^{100} \mathbb{E} \left[\mathbb{P} \left\{\mathcal{G}_{1,s} > r \mid \hat{\mu}_{1,s} \right\}  \right] + \sum\limits_{r = 101}^{\infty} \mathbb{E} \left[\mathbb{P} \left\{\mathcal{G}_{1,s} > r \mid \hat{\mu}_{1,s} \right\}  \right]-1 \\
            \le & 13 + \sum\limits_{r=13}^{100} \left(e^{-\sqrt{\frac{r  }{\pi}} \cdot \frac{\ln(13)}{\ln(13)+2} } + \frac{1}{r} \right) + \sum\limits_{r=101}^{\infty} \left(e^{- \sqrt{\frac{r}{3\pi}} } + \frac{1}{r^{\frac{4}{3}}} \right) -1\\
             \le & 12 + \int_{12}^{100} \left(e^{-\sqrt{\frac{r  }{\pi}} \cdot \frac{\ln(13)}{\ln(13)+2} } + \frac{1}{r}\right) dr+ \int_{100}^{\infty} \left(e^{- \sqrt{\frac{r}{3\pi}} }  + \frac{1}{r^{\frac{4}{3}}} \right)dr \\

\le & 12 + 10.44 +2.13 + 3.1 + 0.65 \\
\le & 29 \quad,    
        \end{array}  
        \label{Yoga 3}
        \end{equation}
        which concludes the proof for the first stated result in Lemma~\ref{UBC 22}.

With (\ref{UBC 4}), for any fixed integer $s \ge L_{1,i}$, we have
    \begin{equation}
        \begin{array}{ll}
       & \mathbb{E}\left[\frac{1}{\mathbb{P} \left\{ \theta_{1,s} > \mu_1 - \frac{\Delta_i}{2}\mid \mathcal{F}_{\tau_{s}^{(1)}}  \right\} }-1   \right] \\
         = &  \sum\limits_{r=0}^{\infty}\mathbb{E} \left[ \mathbb{P} \left\{\mathcal{G}_{1,s} > r \mid \hat{\mu}_{1,s}\right\} \right] -1\\
           
               \le & 1+\sum\limits_{r = 1}^{\left\lfloor \left( T\Delta_i^2 + 100^{\frac{1}{3}} \right)^3 \right\rfloor}   \mathbb{E} \left[ \mathbb{P} \left\{\mathcal{G}_{1,s} > r \mid \hat{\mu}_{1,s}\right\} \right] + \sum\limits_{r = \left\lfloor \left( T\Delta_i^2 + 100^{\frac{1}{3}} \right)^3 \right\rfloor + 1}^{\infty}  \mathbb{E} \left[ \mathbb{P} \left\{\mathcal{G}_{1,s} > r \mid \hat{\mu}_{1,s}\right\} \right]-1\\
                \le & \sum\limits_{r = 1}^{\left\lfloor \left( T\Delta_i^2 + 100^{\frac{1}{3}} \right)^3 \right\rfloor}  \left(\frac{1}{r^2 \left(T\Delta_i^2\right)} +\frac{0.5^r}{ \left(T\Delta_i^2\right)^r}  \right) +  \sum\limits_{r = \left\lfloor \left( T\Delta_i^2 + 100^{\frac{1}{3}} \right)^3 \right\rfloor + 1}^{\infty}  \left(e^{- \sqrt{\frac{r}{3\pi}} } + r^{-\frac{4}{3}} \right) \\
                        \\
                         \le & \frac{1}{T\Delta_i^2} + \frac{0.5}{T\Delta_i^2} +   \int_{1}^{+\infty } \left( \frac{1}{r^2 \left(T\Delta_i^2\right)} +    \frac{1}{  \left(2T\Delta_i^2\right)^r} \right) dr +\int_{   (T\Delta_i^2)^{2}  }^{\infty} \ e^{- \frac{\sqrt{r}}{\sqrt{3\pi}} }  dr+ \int_{    (T\Delta_i^2)^{3}}^{\infty}  \ r^{-\frac{4}{3}} dr
                        \\
                        \le^{(a)} &  \frac{1}{ T\Delta_i^2  } +\frac{0.5}{ T\Delta_i^2   } +\frac{1}{ T\Delta_i^2  } +\frac{0.5}{ T\Delta_i^2   } +  \frac{6 \cdot 3\pi \cdot \sqrt{3\pi}}{ T\Delta_i^2   } + \frac{3}{ T\Delta_i^2 } \\
                       \le &  \frac{180}{ T\Delta_i^2  }  \quad,
        \end{array}  
        \end{equation}
        which concludes the proof for the second stated result in Lemma~\ref{UBC 22}.
        
        Inequality (a) uses the fact that, for any $a, b >0$, we have
       $\int_{b}^{+\infty}e^{-a\sqrt{x}}dx  
        =  \frac{2\sqrt{b}}{ae^{a\sqrt{b}}} + \frac{2}{a^2 e^{a \sqrt{b}}} 
        \le  \frac{2\sqrt{b}}{a \cdot \left(1+ a\sqrt{b}+\frac{1}{2}a^2b  \right)} + \frac{2}{a^2 \cdot \left(1+ a\sqrt{b}+\frac{1}{2}a^2b \right)}  
        \le  \frac{4}{a^3 \sqrt{b}} + \frac{2}{a^3\sqrt{b}} 
        =  \frac{6}{a^3\sqrt{b}}$,
where the first inequality uses $e^x \ge 1+x+ \frac{1}{2}x^2$.

Now, we present the proofs for the results shown in (\ref{UBC 333}) and   (\ref{UBC 4}).
\paragraph{Proofs for (\ref{UBC 333}).}

We express the LHS in (\ref{UBC 333}) as 
        \begin{equation}
            \begin{array}{l}
         \mathbb{E} \left[ \mathbb{P} \left\{\mathcal{G}_{1,s} > r \mid \hat{\mu}_{1,s}\right\} \right] 
          =1- \mathbb{E} \left[\mathbb{P} \left\{\mathcal{G}_{1,s} \le r \mid \hat{\mu}_{1,s} \right\}\right]    

          =1- \underbrace{\mathbb{E} \left[\mathbb{E} \left[ \bm{1} \left\{ \mathcal{G}_{1,s} \le r  \right\} \mid \hat{\mu}_{1,s} \right]\right] }_{=:\gamma} \quad.
          \end{array}
          \label{ddd 4}
          \end{equation}
Our goal is to construct a lower bound for $\gamma$. Let $\theta_{1,s}^{h}$ for all $h \in [r]$ be i.i.d. random variables according to  $\mathcal{N}\left(\hat{\mu}_{1,s}, \frac{1}{s} \right)$. 

\

\paragraph{When $r \in [0,12]$,} the proof is trivial as $\gamma \ge 0 $. Then, we have $\mathbb{E} \left[ \mathbb{P} \left\{\mathcal{G}_{1,s} > r \mid \hat{\mu}_{1,s}\right\} \right] \le 1$.

\paragraph{When $r \in [13,100]$,}  
         we  introduce  $ z = \sqrt{\frac{1}{2} \ln(r)} >0$ and have
 \begin{equation}
        \begin{array}{lll}
 \gamma &
  = & \mathbb{E} \left[\mathbb{E} \left[ \bm{1} \left\{ \mathcal{G}_{1,s} \le r  \right\} \mid \hat{\mu}_{1,s} \right]\right]  \\
& \ge & 
\mathbb{E} \left[\mathbb{E} \left[ \bm{1}\left\{ \mathop{\max}\limits_{h \in [r]} \theta_{1,s}^{h} > \mu_1 - \frac{\Delta_i}{2}  \right\} \mid \hat{\mu}_{1,s}  \right]\right] \\
& \ge &
 \mathbb{E} \left[
\mathbb{E} \left[\bm{1} \left\{\hat{\mu}_{1,s} + z\sqrt{\frac{1}{s}} \ge \mu_1 - \frac{\Delta_i}{2} \right\} \bm{1}\left\{ \mathop{\max}\limits_{h \in [r]} \theta_{1,s}^{h} >\hat{\mu}_{1,s} +z \sqrt{\frac{1}{s}} \right\} \mid \hat{\mu}_{1,s}  \right]\right] \\
& = &
 \mathbb{E} \left[\bm{1} \left\{\hat{\mu}_{1,s} +z \sqrt{\frac{1}{s}} \ge \mu_1 - \frac{\Delta_i}{2} \right\} \cdot
\underbrace{\mathbb{P} \left\{ \mathop{\max}\limits_{h \in [r]} \theta_{1,s}^{h} >\hat{\mu}_{1,s} +z\sqrt{\frac{1}{s}}  \mid \hat{\mu}_{1,s} \right\} }_{=:\beta}\right] \quad.
\end{array}
\end{equation}

 We construct a lower bound for $\beta$ and have
        \begin{equation}
            \begin{array}{lll}
            \beta & = &  \mathbb{P} \left\{ \mathop{\max}\limits_{h \in [r]} \theta_{1,s}^{h} >\hat{\mu}_{1,s} +z\sqrt{\frac{ 1}{s}}  \mid \hat{\mu}_{1,s} \right\} \\

               &            = & 1- \prod\limits_{h \in [r]} \left( 1- \mathbb{P} \left\{ \theta_{1,s}^{h} > \hat{\mu}_{1,s} +z \sqrt{\frac{ 1}{s}}  \mid \hat{\mu}_{1,s} \right\}   \right)  \\
                &           = & 1- \left( 1- \mathbb{P} \left\{ \theta_{1,s} > \hat{\mu}_{1,s} +z\sqrt{\frac{ 1}{s}}  \mid \hat{\mu}_{1,s} \right\}\right)^r  \\
                 &          \ge^{(a)} &    1- \left( 1- \frac{1}{\sqrt{2\pi}} \frac{\sqrt{\frac{1}{2}\ln(r)}}{\frac{1}{2}\ln(r)+1} e^{- \frac{1}{4}\ln(r)} \right)^r  \\
                  & \ge^{(b)} & 1- e^{-r \cdot \frac{1}{\sqrt{2\pi}} \frac{\sqrt{\frac{1}{2}\ln(r)}}{\frac{1}{2}\ln(r)+1} \cdot r^{-\frac{1}{4}}} \\
                         & \ge^{(c)} & 1- e^{-r^{\frac{1}{2}} \cdot \frac{1}{\sqrt{2\pi}} \frac{\sqrt{\frac{1}{2}\ln(r)}}{\frac{1}{2}\ln(r)+\frac{1}{\ln(13)}\ln(r)} \cdot r^{\frac{1}{4}}} \\
                            & = & 1- e^{-r^{\frac{1}{2}} \cdot \frac{1}{\sqrt{\pi}} \cdot \frac{\ln(13)}{\ln(13)+2}\frac{\sqrt{r^{\frac{1}{2}}}}{\sqrt{\ln(r)}} } \\
                            & \ge^{(d)} & 1-e^{-r^{\frac{1}{2}} \cdot \frac{1}{\sqrt{\pi}} \cdot \frac{\ln(13)}{\ln(13)+2} }\quad.
            \end{array}
        \end{equation}

        Inequalities (a) and (b) use anti-concentration bounds of Gaussian distributions (Fact~\ref{Fact 1}) and the fact that $e^{-x} \ge 1-x$. Inequalities (c) and (d)
        use the facts that when $r \ge 13$, we have $1 \le \frac{\ln(r)}{\ln(13)}$ and $\sqrt{r^{\frac{1}{2}}} \ge \sqrt{\ln(r)}$.
        
           Now, we have
        \begin{equation}
            \begin{array}{lll}
                 \gamma & \ge & \left(1-e^{-r^{\frac{1}{2}} \cdot \frac{1}{\sqrt{\pi}} \cdot \frac{\ln(13)}{\ln(13)+2} } \right)\cdot \mathbb{P} \left\{ \hat{\mu}_{1,s} +\sqrt{\frac{1}{2}\ln(r)}\sqrt{\frac{1}{s}} \ge \mu_1 - \frac{\Delta_i}{2}\right\}\\
                  & \ge & \left(1-e^{-r^{\frac{1}{2}} \cdot \frac{1}{\sqrt{\pi}} \cdot \frac{\ln(13)}{\ln(13)+2} } \right)\cdot \mathbb{P} \left\{ \hat{\mu}_{1,s} +\sqrt{\frac{1}{2}\ln(r)}\sqrt{\frac{1}{s}} \ge \mu_1 \right\}\\
                   & \ge^{(a)} & \left(1-e^{-r^{\frac{1}{2}} \cdot \frac{1}{\sqrt{\pi}} \cdot \frac{\ln(13)}{\ln(13)+2} } \right)\cdot \left(1-\frac{1}{r} \right)\\
                   & \ge & 1-\left( e^{-\sqrt{\frac{r}{\pi}}  \cdot \frac{\ln(13)}{\ln(13)+2} } + \frac{1}{r}\right)\quad,
            \end{array}
        \end{equation}
        where $(a)$ uses Hoeffding's inequality (Fact~\ref{Hoeffding}).

          Plugging the lower bound of $\gamma$ into (\ref{ddd 4}) gives $ \mathbb{E} \left[ \mathbb{P} \left\{\mathcal{G}_{1,s} > r \mid \hat{\mu}_{1,s}\right\} \right]   \le e^{-\sqrt{\frac{r}{\pi}}  \frac{\ln(13)}{\ln(13)+2} }+ \frac{1}{r}$.
\paragraph{When  $r \ge 101$,} we  introduce  $ z = \sqrt{\frac{2}{3} \ln(r)} >0$. We still construct the lower bound of $\gamma$ as
\begin{equation}
    \begin{array}{lll}
         \gamma & \ge & \mathbb{E} \left[\bm{1} \left\{\hat{\mu}_{1,s} +z \sqrt{\frac{1}{s}} \ge \mu_1 - \frac{\Delta_i}{2} \right\} \cdot
\mathbb{P} \left\{ \mathop{\max}\limits_{h \in [r]} \theta_{1,s}^{h} >\hat{\mu}_{1,s} +z\sqrt{\frac{1}{s}}  \mid \hat{\mu}_{1,s} \right\} \right] \\
& \ge &  \left( 1- \left( 1- \frac{1}{\sqrt{2\pi}} \frac{z}{z^2+1} e^{-0.5  z^2} \right)^r  \right) \cdot \mathbb{P}\left\{\hat{\mu}_{1,s} +z \sqrt{\frac{1}{s}} \ge \mu_1 - \frac{\Delta_i}{2} \right\}  \\
& \ge & \left(1- \underbrace{e^{-r \cdot \frac{1}{\sqrt{2\pi}} \frac{z}{z^2+1} e^{-0.5  z^2}}}_{=:f(r)} \right) \cdot  \mathbb{P}\left\{\hat{\mu}_{1,s} +\sqrt{\frac{2\ln(r)}{3}} \sqrt{\frac{1}{s}} \ge \mu_1 \right\} \\
& \ge & \left(1- e^{-\sqrt{\frac{r}{3\pi} }}\right) \cdot \left( 1- \frac{1}{r^{\frac{4}{3}}}\right) \\
& \ge & 1-\left(e^{-\sqrt{\frac{r}{3\pi} }} +\frac{1}{r^{\frac{4}{3}}}\right)\quad.
    \end{array}\label{plate}
\end{equation}
The third inequality in (\ref{plate}) uses the fact that
\begin{equation}
            \begin{array}{lll}
                 f(r) & = & e^{-r \cdot \frac{1}{\sqrt{2\pi}} \frac{z}{z^2+1} e^{-0.5  z^2}} \\
                 & = & e^{-r \cdot \frac{1}{\sqrt{2\pi}} \frac{\sqrt{\frac{\ln r}{1.5}}}{\frac{\ln r}{1.5}+1} e^{-0.5 \cdot \frac{\ln r}{1.5}}} \\
                            & \le^{(a)} & e^{- \frac{1}{\sqrt{2\pi}} \frac{\sqrt{1.5\ln r}}{\ln r+0.5 \ln r} \cdot r^{\frac{2}{3}}}  \\
                         & = & e^{-r^{\frac{1}{2}} \cdot \frac{1}{\sqrt{3\pi}} \sqrt{\frac{r^{\frac{1}{3}} }{\ln r}} } \\
                         & \le^{(b)} & e^{-\sqrt{\frac{r}{3\pi} }} \quad,      
            \end{array}
            \label{ddd}
        \end{equation}
        where inequality (a) and (b) use the facts that when $r \ge 100$, we have $1.5 < 0.5 \ln r$ and  $\sqrt{\frac{r^{\frac{1}{3}} }{\ln r}} \ge 1$.
                    
                 Plugging the lower bound of $\gamma$ into (\ref{ddd 4}) gives
$ \mathbb{E} \left[ \mathbb{P} \left\{\mathcal{G}_{1,s} > r \mid \hat{\mu}_{1,s}\right\} \right]  \le e^{-\sqrt{\frac{r}{3\pi} }} + \frac{1}{r^{\frac{4}{3}}}$.

    \paragraph{Proofs for (\ref{UBC 4}).}  We still express the LHS in (\ref{UBC 4}) as 
        \begin{equation}
            \begin{array}{l}
         \mathbb{E} \left[ \mathbb{P} \left\{\mathcal{G}_{1,s} > r \mid \hat{\mu}_{1,s}\right\} \right] 
          = 1- \mathbb{E} \left[\mathbb{P} \left\{\mathcal{G}_{1,s} \le r \mid \hat{\mu}_{1,s} \right\}\right]    

          =1- \underbrace{\mathbb{E} \left[\mathbb{E} \left[ \bm{1} \left\{ \mathcal{G}_{1,s} \le r  \right\} \mid \hat{\mu}_{1,s} \right]\right] }_{=:\gamma} \quad.
          \end{array}
          \label{ddd 444}
          \end{equation}  
          \paragraph{When $r =0$,} we have $\gamma \ge 0$, which means $\mathbb{E} \left[ \mathbb{P} \left\{\mathcal{G}_{1,s} > r \mid \hat{\mu}_{1,s}\right\} \right] \le 1$.

          \paragraph{When $r \in \left[1, \left\lfloor \left( T\Delta_i^2 + {100}^{\frac{1}{3}} \right)^3 \right\rfloor \right]$,} we define event $\mathcal{E}_{1,s}^{\mu} := \left\{ \mu_1 \le  \hat{\mu}_{1,s}  + \sqrt{\frac{0.5 \ln \left(r^2  \left(T\Delta_i^2 + 100^{\frac{1}{3}} \right) \right)}{s}}\right\}$. Then, we have
    \begin{equation}
        \begin{array}{lll}
       \gamma 
      & = &\mathbb{E} \left[\mathbb{E} \left[ \bm{1} \left\{ \mathcal{G}_{1,s} \le r  \right\} \mid \hat{\mu}_{1,s} \right]\right] \\
           & \ge  & \mathbb{E} \left[\mathbb{E} \left[ \bm{1}\left\{ \mathop{\max}\limits_{h \in [r]} \theta_{1,s}^{h} > \mu_1 - \frac{\Delta_i}{2}  \right\} \mid \hat{\mu}_{1,s}  \right]\right]  \\
           & = & \mathbb{E} \left[ \bm{1}\left\{ \mathop{\max}\limits_{h \in [r]} \theta_{1,s}^{h} > \mu_1 - \frac{\Delta_i}{2}  \right\} \right]\\
            & = & \mathbb{E} \left[\bm{1}\left\{\mathcal{E}_{1,s}^{\mu} \right\}\bm{1}\left\{ \mathop{\max}\limits_{h \in [r]} \theta_{1,s}^{h} > \mu_1 - \frac{\Delta_i}{2}  \right\} \right] + \mathbb{E} \left[\bm{1}\left\{\overline{\mathcal{E}_{1,s}^{\mu}} \right\}\bm{1}\left\{ \mathop{\max}\limits_{h \in [r]} \theta_{1,s}^{h} > \mu_1 - \frac{\Delta_i}{2}  \right\} \right] \\
              & \ge^{(a)} & \mathbb{E} \left[\bm{1}\left\{\mathcal{E}_{1,s}^{\mu} \right\}\bm{1}\left\{ \mathop{\max}\limits_{h \in [r]} \theta_{1,s}^{h} >  \hat{\mu}_{1,s} + \sqrt{\frac{0.5\ln \left(r^2  \left( T\Delta_i^2 + {100}^{\frac{1}{3}} \right)\right)}{s}} - \frac{\Delta_i}{2}  \right\} \right]  \\
       & \ge^{(b)} & \mathbb{E} \left[\bm{1}\left\{\mathcal{E}_{1,s}^{\mu} \right\}\bm{1}\left\{ \mathop{\max}\limits_{h \in [r]} \theta_{1,s}^{h} >  \hat{\mu}_{1,s} + \sqrt{\frac{0.5\ln \left(r^2  \left( T\Delta_i^2 + {100}^{\frac{1}{3}} \right)\right)}{s}} - \sqrt{\frac{\left(\sqrt{2}+\sqrt{3.5}\right)^2 \ln \left(T\Delta_i^2 + 100^{\frac{1}{3}}\right)}{s} }  \right\} \right]  \\
& \ge^{(c)} & \mathbb{E} \left[\bm{1}\left\{\mathcal{E}_{1,s}^{\mu} \right\}\bm{1}\left\{ \mathop{\max}\limits_{h \in [r]} \theta_{1,s}^{h} >  \hat{\mu}_{1,s} + \sqrt{\frac{0.5\ln \left(  \left( T\Delta_i^2 + {100}^{\frac{1}{3}} \right)^7\right)}{s}} - \sqrt{\frac{\left(\sqrt{2}+\sqrt{3.5}\right)^2 \ln \left(T\Delta_i^2 + 100^{\frac{1}{3}}\right)}{s} }   \right\} \right]  \\
& = & \mathbb{E} \left[\bm{1}\left\{\mathcal{E}_{1,s}^{\mu} \right\}\bm{1}\left\{ \mathop{\max}\limits_{h \in [r]} \theta_{1,s}^{h} >  \hat{\mu}_{1,s}  - \sqrt{\frac{2 \ln \left(T\Delta_i^2 + 100^{\frac{1}{3}}\right)}{s} }  \right\} \right]  \\

& = & \mathbb{E} \left[\bm{1}\left\{\mathcal{E}_{1,s}^{\mu} \right\} \cdot \mathbb{P} \left\{ \mathop{\max}\limits_{h \in [r]} \theta_{1,s}^{h} >  \hat{\mu}_{1,s}  - \sqrt{\frac{2 \ln \left(T\Delta_i^2 + 100^{\frac{1}{3}}\right)}{s} } \mid \hat{\mu}_{1,s} \right\} \right]  \\

         & =& \mathbb{E} \left[\bm{1}\left\{\mathcal{E}_{1,s}^{\mu} \right\} \cdot   \left(1-\mathbb{P} \left\{ \mathop{\max}\limits_{h \in [r]} \theta_{1,s}^{h} \le \hat{\mu}_{1,s}  -\sqrt{\frac{2 \ln \left(T \Delta_i^2  + 100^{\frac{1}{3}}\right)}{s} }\mid \hat{\mu}_{1,s} \right\} \right) \right]\\
            & =& \mathbb{E} \left[\bm{1}\left\{\mathcal{E}_{1,s}^{\mu} \right\} \cdot   \left(1-\prod\limits_{h \in [r]} \mathbb{P} \left\{ \theta_{1,s}^{h} \le \hat{\mu}_{1,s}  -\sqrt{\frac{2 \ln \left(T \Delta_i^2  + 100^{\frac{1}{3}}\right)}{s} }\mid \hat{\mu}_{1,s}\right\} \right) \right]\\
              & \ge^{(d)} & \mathbb{E} \left[\bm{1}\left\{\mathcal{E}_{1,s}^{\mu} \right\} \cdot   \left(1-\frac{0.5^r}{\left(T\Delta_i^2+100^{\frac{1}{3}}\right)^r}\right) \right]\\
                & = & \left(1-\frac{0.5^r}{\left(T\Delta_i^2+100^{\frac{1}{3}}\right)^r}\right)  \cdot  \mathbb{E} \left[\bm{1}\left\{\mathcal{E}_{1,s}^{\mu} \right\} \right]\\
      
          & \ge^{(e)} & \left(1-\frac{0.5^r}{ \left(T\Delta_i^2+100^{\frac{1}{3}}\right)^r}\right) \cdot \left(1-\frac{1}{r^2\left(T\Delta_i^2+100^{\frac{1}{3}}\right)}\right) \\
         &  \ge &1- \frac{1}{r^2 \left(T\Delta_i^2+100^{\frac{1}{3}}\right)} -\frac{0.5^r}{ \left(T\Delta_i^2+100^{\frac{1}{3}}\right)^r} \\
        &   \ge &1- \frac{1}{r^2 \left(T\Delta_i^2\right)} -\frac{0.5^r}{ \left(T\Delta_i^2\right)^r} \quad.
                   \end{array}
                   \label{ddd 6}
    \end{equation}
    
        We now provide a detailed explanation for some key steps in (\ref{ddd 6}). Inequality (a) uses the fact that if event $\mathcal{E}_{1,s}^{\mu}$ is true, we have $\mu_1 \le \hat{\mu}_{1,s} +  \sqrt{\frac{0.5 \ln \left(r^2  \left(T\Delta_i^2 + 100^{\frac{1}{3}} \right)\right)}{s}}$. Recall $L_{1,i} = \left\lceil \frac{4\left(\sqrt{2}+\sqrt{3.5}\right)^2 \ln \left(T \Delta_i^2 + 100^{\frac{1}{3}}\right)}{\Delta_i^2} \right\rceil$. Inequality (b) uses the fact that   $\frac{\Delta_i}{2} \ge \sqrt{\frac{4 \left(\sqrt{2}+\sqrt{3.5}\right)^2 \ln \left(T \Delta_i^2 + 100^{\frac{1}{3}}\right)}{4L_1} }\ge \sqrt{\frac{\left(\sqrt{2}+\sqrt{3.5}\right)^2 \ln \left(T \Delta_i^2 + 100^{\frac{1}{3}}\right)}{s} }$, when $s \ge L_{1,i}$. Recall $1 \le r \le \left\lfloor \left( T\Delta_i^2 + 100^{\frac{1}{3}} \right)^3 \right\rfloor$. Inequality (c) uses the fact that $r^2 \le \left( T\Delta_i^2 + 100^{\frac{1}{3}} \right)^6$. Inequality (d) uses Gaussian concentration bounds ( Fact~\ref{Fact 1}) and inequality (e) uses Hoeffding's inequality (Fact~\ref{Hoeffding}) giving   $\mathbb{P} \left\{\mathcal{E}_{1,s}^{\mu} \right\} \ge 1- \frac{1}{r^2 \left(T\Delta_i^2 + 100^{\frac{1}{3}} \right)}$.

   Plugging the lower bound of $\gamma$ into (\ref{ddd 444}) gives  $    \mathbb{E} \left[ \mathbb{P} \left\{\mathcal{G}_{1,s} > r \mid \hat{\mu}_{1,s}\right\} \right]       \le   \frac{1}{r^2 \left(T\Delta_i^2\right)} + \frac{0.5^r}{ \left(T\Delta_i^2\right)^r}$.

  \paragraph{When $r \ge \left\lfloor \left( T\Delta_i^2 + 100^{\frac{1}{3}} \right)^3 \right\rfloor + 1$,} we reuse the result shown in (\ref{UBC 333}) directly. Note that we have $r \ge \left\lfloor \left( T\Delta_i^2 + 100^{\frac{1}{3}} \right)^3 \right\rfloor + 1 \ge  \left( T\Delta_i^2 + 100^{\frac{1}{3}} \right)^3 \ge 101$.
\end{proof}

\subsection{Proofs for Theorem~\ref{Theorem: TS new 1}}

For a  sub-optimal arm $i$ such that  $\Delta_i \le \sqrt{\frac{1}{T}}$, we have the total regret of pulling this sub-optimal arm $i$ over $T$ rounds is at most $T\Delta_i \le \sqrt{T} \le \frac{1}{\Delta_i}$.

For a sub-optimal arm $i$ such that  $\Delta_i > \sqrt{\frac{1}{T}}$, we upper bound its expected number of pulls $\mathbb{E}\left[n_i(T) \right]$ by the end of round $T$.

Let $\bar{\mu}_{i, n_i(t-1)} := \hat{\mu}_{i, n_i(t-1)} +\sqrt{\frac{2 \ln \left(T \Delta_i^2 \right)}{n_i(t-1)}}$ be the upper confidence bound. 

Let $L_i := \left\lceil \frac{4 \left(\sqrt{0.5}+\sqrt{2} \right)^2 \ln \left(T\Delta_i^2 \right)}{\Delta_i^2} \right\rceil$. 

To decompose the regret, we define 
$\mathcal{E}_{i}^{\mu}(t-1) := \left\{ \left|\hat{\mu}_{i, n_i(t-1)} -\mu_i \right| \le \sqrt{\frac{0.5 \ln \left(T \Delta_i^2 \right)}{n_i(t-1)}} \right\}$ and   
$\mathcal{E}_i^{\theta}(t) := \left\{\theta_i(t) \le \bar{\mu}_{i, n_i(t-1)}\right\}$. 
Then, we  have
\begin{equation}
    \begin{array}{lll}
         \mathbb{E} \left[n_i(T) \right] 
       &   \le & L_i + \sum\limits_{t=1}^{T} \mathbb{E} \left[\bm{1} \left\{i_t = i, n_i(t-1) \ge L_i \right\} \right]  \\
        &   \le & L_i + \underbrace{\sum\limits_{t=1}^{T} \mathbb{E} \left[\bm{1} \left\{i_t = i,\mathcal{E}^{\theta}_i(t), \mathcal{E}_i^{\mu}(t-1),  n_i(t-1) \ge L_i \right\} \right] }_{=:\omega_1} \\
         &  + &\underbrace{\sum\limits_{t=1}^{T} \mathbb{E} \left[\bm{1} \left\{i_t = i, \overline{\mathcal{E}^{\theta}_i(t)},n_i(t-1) \ge L_i \right\} \right]}_{=:\omega_2} + \underbrace{  \sum\limits_{t=1}^{T} \mathbb{E} \left[\bm{1} \left\{i_t = i, \overline{\mathcal{E}_i^{\mu}(t-1)},n_i(t-1) \ge L_i \right\} \right]}_{=:\omega_3} 
           \quad.
           \end{array}
           \label{Eric 1}
           \end{equation}

           Term $\omega_1$ is very similar to  Lemma~2.14 in \cite{agrawalnear}, which is challenging to derive a tighter upper bound. Terms $\omega_2$ (similar to Lemma~2.16 in \cite{agrawalnear}) and  $\omega_3$ (similar to Lemma~2.15 in \cite{agrawalnear}) are not difficult to bound. 
We use Gaussian concentration inequality (Fact~\ref{Fact 1}) for upper bounding term $\omega_2$ and Hoeffding's inequality (Fact~\ref{Hoeffding})  for upper bounding terms  $\omega_3$.  From Lemma~\ref{Eric 3} and Lemma~\ref{Eric 2} we have $\omega_2 \le \frac{0.5}{\Delta_i^2}$ and  $\omega_3 \le \frac{2}{\Delta_i^2}$.

           For $\omega_1$, similar to Lemma~2.8 in \cite{agrawalnear}, we have our Lemma~\ref{bandit: lemma 1}, a  lemma that links the probability of pulling a sub-optimal arm $i$ to the probability of pulling the optimal arm $1$ in round $t$ by introducing the upper confidence bound $\bar{\mu}_{i, n_i(t-1)}$.  Note that both events $ \mathcal{E}_i^{\mu}(t-1)$ and $  n_i(t-1) \ge L_i$ are determined by the history information. The value of $\bar{\mu}_{i, n_i(t-1)}$ is  determined by the history information. Also, the distributions for $\theta_j(t)$ for all $j \in \mathcal{A}$ are determined by the history information. 

 \begin{lemma}
             For any instantiation $F_{t-1}$ of $\mathcal{F}_{t-1}$, we have
             \begin{equation}
                 \begin{array}{lll}
                       \mathbb{E} \left[\bm{1} \left\{i_t = i, \mathcal{E}^{\theta}_i(t)\right\} \mid \mathcal{F}_{t-1}= F_{t-1}\right]
               &    \le & \frac{\mathbb{P} \left\{ \theta_1(t) \le \bar{\mu}_{i, n_i(t-1)}\mid \mathcal{F}_{t-1}= F_{t-1} \right\}}{\mathbb{P} \left\{ \theta_1(t) > \bar{\mu}_{i, n_i(t-1)}\mid \mathcal{F}_{t-1}= F_{t-1} \right\} } \cdot \mathbb{E} \left[\bm{1} \left\{i_t = 1, \mathcal{E}_i^{\theta}(t)\right\} \mid \mathcal{F}_{t-1}= F_{t-1}\right].
                 \end{array}
             \end{equation}
             \label{bandit: lemma 1}
         \end{lemma}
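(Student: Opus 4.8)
The plan is to exploit the conditional independence of the posterior samples given the history $\mathcal{F}_{t-1}$, together with the observation that on the event $\{i_t=i,\mathcal{E}_i^{\theta}(t)\}$ the sample of arm $i$ is simultaneously the global maximum and bounded above by the threshold $\bar{\mu}_{i,n_i(t-1)}$. First I would fix an instantiation $F_{t-1}$; conditioned on it the threshold $B:=\bar{\mu}_{i,n_i(t-1)}$ is a deterministic constant, and the samples $\{\theta_j(t)\}_{j\in\mathcal{A}}$ are mutually independent. I introduce $W:=\max_{j\neq 1}\theta_j(t)$, the best sample among the sub-optimal arms, and condition further on $\{\theta_j(t)\}_{j\neq 1}$ so that $W$ is frozen while $\theta_1(t)$ stays random and independent of it; write $\mathbb{E}_{-1}$ for the expectation over $\{\theta_j(t)\}_{j\neq 1}$.

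Next I would rewrite both sides as integrals against this inner conditioning. On $\{i_t=i,\mathcal{E}_i^{\theta}(t)\}$ arm $i$ must be the argmax among the sub-optimal arms, so $\theta_i(t)=W$; the constraint $\theta_i(t)\le B$ then forces $W\le B$, and arm $1$ must lose, contributing the factor $\mathbb{P}\{\theta_1(t)\le W\mid F_{t-1}\}$. Denoting by $A$ the event that arm $i$ attains $W$ and $W\le B$ (which depends only on the sub-optimal samples), this gives
\[
\mathbb{E}\!\left[\bm{1}\{i_t=i,\mathcal{E}_i^{\theta}(t)\}\mid F_{t-1}\right]=\mathbb{E}_{-1}\!\left[\bm{1}\{A\}\,\mathbb{P}\{\theta_1(t)\le W\mid F_{t-1}\}\right].
\]
Symmetrically, on $\{i_t=1,\mathcal{E}_i^{\theta}(t)\}$ arm $1$ wins ($\theta_1(t)\ge W$) while $\theta_i(t)\le B$, so
\[
\mathbb{E}\!\left[\bm{1}\{i_t=1,\mathcal{E}_i^{\theta}(t)\}\mid F_{t-1}\right]=\mathbb{E}_{-1}\!\left[\bm{1}\{\theta_i(t)\le B\}\,\mathbb{P}\{\theta_1(t)\ge W\mid F_{t-1}\}\right].
\]

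The crux is the monotonicity of the map $w\mapsto \mathbb{P}\{\theta_1(t)\le w\mid F_{t-1}\}/\mathbb{P}\{\theta_1(t)>w\mid F_{t-1}\}$, which increases in $w$ because $\theta_1(t)$ is continuous with full support. Since $W\le B$ on $A$, this yields the pointwise bound
\[
\mathbb{P}\{\theta_1(t)\le W\mid F_{t-1}\}\le \frac{\mathbb{P}\{\theta_1(t)\le B\mid F_{t-1}\}}{\mathbb{P}\{\theta_1(t)> B\mid F_{t-1}\}}\,\mathbb{P}\{\theta_1(t)\ge W\mid F_{t-1}\}.
\]
Substituting this into the expression for the arm-$i$ probability, pulling the (now constant) ratio out of $\mathbb{E}_{-1}$, and using $\bm{1}\{A\}\le\bm{1}\{\theta_i(t)\le B\}$ (valid because $A$ forces $\theta_i(t)=W\le B$) bounds the remaining integral by the arm-$1$ expression above, which is exactly the claimed inequality.

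I expect the main obstacle to be the bookkeeping rather than any deep idea: rigorously justifying the factorization into an $\mathbb{E}_{-1}$ integral times a probability over $\theta_1(t)$ via conditional independence, checking that $W\le B$ holds on the arm-$i$ event so that the monotone ratio is applied in the correct direction, and neglecting ties among the continuously distributed samples (a measure-zero event for Gaussians).
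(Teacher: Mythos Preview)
Your argument is correct, but it takes a more elaborate route than the paper's. The paper fixes $B=\bar\mu_{i,n_i(t-1)}$ and uses two blunt set inclusions: on one side $\{i_t=i,\mathcal{E}_i^\theta(t)\}\subseteq\{\theta_j(t)\le B\ \forall j\}$, and on the other $\{i_t=1,\mathcal{E}_i^\theta(t)\}\supseteq\{\theta_1(t)>B\ge\theta_j(t)\ \forall j\neq 1\}$. Both bounds factor by conditional independence into $\mathbb{P}\{\theta_1(t)\le B\}\cdot Q$ and $\mathbb{P}\{\theta_1(t)>B\}\cdot Q$ with the \emph{same} factor $Q=\mathbb{P}\{\theta_j(t)\le B\ \forall j\neq 1\}$, so the ratio drops out immediately. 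No random threshold $W$, no monotone odds-ratio step. Your route instead keeps the sharper random threshold $W=\max_{j\neq 1}\theta_j(t)$, and only at the end relaxes $W$ to $B$ via monotonicity of $w\mapsto \mathbb{P}\{\theta_1\le w\}/\mathbb{P}\{\theta_1>w\}$ together with $\mathbf{1}\{A\}\le\mathbf{1}\{\theta_i\le B\}$. This buys nothing for the present lemma, where both paths land on the identical inequality, but your decomposition is the one that would generalize if the comparison point were not a deterministic $B$ (e.g., a data-dependent or arm-dependent threshold). The paper's version is shorter and avoids the bookkeeping you flagged as the main obstacle.
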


With Lemma~\ref{bandit: lemma 1} in hand, now, we are ready to upper bound term $\omega_1$. We have
     \begin{equation}
             \begin{array}{lll}
                  \omega_1 & = & \sum\limits_{t=1}^{T} \mathbb{E} \left[\bm{1} \left\{i_t = i, \mathcal{E}^{\theta}_i(t), \mathcal{E}_i^{\mu}(t-1),  n_i(t-1) \ge L_i \right\} \right]  \\ 
                  & =& \sum\limits_{t=1}^{T} \mathbb{E} \left[\bm{1} \left\{ \mathcal{E}_i^{\mu}(t-1),  n_i(t-1) \ge L_i \right\} \cdot \underbrace{\mathbb{E} \left[\bm{1} \left\{i_t = i, \mathcal{E}^{\theta}_i(t) \right\} \mid \mathcal{F}_{t-1}\right]}_{\text{LHS in Lemma~\ref{bandit: lemma 1}}} \right] \\
   
                     & \le &  \sum\limits_{t=1}^{T}\mathbb{E} \left[ \underbrace{\bm{1} \left\{ \mathcal{E}_i^{\mu}(t-1),  n_i(t-1) \ge L_i \right\}  
 \underbrace{\frac{\mathbb{P} \left\{ \theta_1(t) \le \bar{\mu}_{i, n_i(t-1)}\mid \mathcal{F}_{t-1} \right\}}{\mathbb{P} \left\{ \theta_1(t) > \bar{\mu}_{i, n_i(t-1)}\mid \mathcal{F}_{t-1} \right\} }  \mathbb{E} \left[\bm{1} \left\{i_t = 1, \mathcal{E}_i^{\theta}(t)\right\} \mid \mathcal{F}_{t-1}\right]}_{\text{RHS in Lemma~\ref{bandit: lemma 1}}}}_{\eta} \right]\\
 &&\\
& \le^{(a)} &  \sum\limits_{t=1}^{T}\mathbb{E} \left[  \mathbb{E} \left[ \frac{\mathbb{P} \left\{ \theta_1(t) \le \mu_1 - \frac{\Delta_i}{2} \mid \mathcal{F}_{t-1} \right\}}{\mathbb{P} \left\{ \theta_1(t) > \mu_1 - \frac{\Delta_i}{2}\mid \mathcal{F}_{t-1} \right\} } \cdot \bm{1} \left\{i_t = 1 \right\} \mid \mathcal{F}_{t-1}\right]  \right]\\
& = &  \sum\limits_{t=1}^{T}\mathbb{E} \left[  \frac{\mathbb{P} \left\{ \theta_1(t) \le \mu_1 - \frac{\Delta_i}{2} \mid \mathcal{F}_{t-1} \right\}}{\mathbb{P} \left\{ \theta_1(t) > \mu_1 - \frac{\Delta_i}{2}\mid \mathcal{F}_{t-1} \right\} } \cdot \bm{1} \left\{i_t = 1\right\}  \right]\quad.
             \end{array}
             \label{Beta 1}
         \end{equation}
         Inequality (a) in (\ref{Beta 1}) uses the argument that for a specific $F_{t-1}$ of $\mathcal{F}_{t-1}$ such that either event $ \mathcal{E}_i^{\mu}(t-1)$ or $   n_i(t-1) \ge L_i$ does not occur, term $\eta$ in (\ref{Beta 1}) will be $0$. Note that for any $F_{t-1}$, we have $1 < \frac{1}{\mathbb{P} \left\{ \theta_1(t) > \bar{\mu}_{i, n_i(t-1)}\mid \mathcal{F}_{t-1}= F_{t-1} \right\}} < +\infty$.
                  Recall $L_i = \left\lceil \frac{4 \left(\sqrt{0.5}+\sqrt{2} \right)^2 \ln \left(T\Delta_i^2 \right)}{\Delta_i^2} \right\rceil$. For any specific $F_{t-1}$ such that both events $ \mathcal{E}_i^{\mu}(t-1)$ and $ n_i(t-1) \ge L_i$ occur, we have 
         $\bar{\mu}_{i, n_i(t-1)} = \hat{\mu}_{i, n_i(t-1)} + \sqrt{\frac{2 \ln \left(T \Delta_i^2 \right)}{n_i(t-1)}} \le \mu_i +\sqrt{\frac{0.5 \ln \left(T \Delta_i^2 \right)}{n_i(t-1)}}+ \sqrt{\frac{2 \ln \left(T \Delta_i^2 \right)}{n_i(t-1)}} \le \mu_i +\sqrt{\frac{0.5 \ln \left(T \Delta_i^2 \right)}{L_i}}+ \sqrt{\frac{2 \ln \left(T \Delta_i^2 \right)}{L_i}} \le \mu_i + \frac{\Delta_i}{2}= \mu_1 - \frac{\Delta_i}{2}$, which implies $\frac{\mathbb{P} \left\{ \theta_1(t) \le \bar{\mu}_{i, n_i(t-1)}\mid \mathcal{F}_{t-1}= F_{t-1} \right\}}{\mathbb{P} \left\{ \theta_1(t) > \bar{\mu}_{i, n_i(t-1)}\mid \mathcal{F}_{t-1}=F_{t-1} \right\} } \le \frac{\mathbb{P} \left\{ \theta_1(t) \le \mu_1 -\frac{\Delta_i}{2}\mid \mathcal{F}_{t-1}= F_{t-1} \right\}}{\mathbb{P} \left\{ \theta_1(t) > \mu_1 - \frac{\Delta_i}{2}\mid \mathcal{F}_{t-1}=F_{t-1} \right\} }$.

     Now, we partition all $T$ rounds into multiple intervals based on the arrivals of the observations of the optimal arm $1$. Let $\tau_s^{(1)}$ be the round when arm $1$ is pulled for the $s$-th time.  Note that this partition in time horizon ensures that the posterior distribution of arm $1$ stays the same among all the rounds when $t \in \left\{ \tau_s^{(1)}+1, \dotsc, \tau_{s+1}^{(1)}\right\}$. 
Recall $L_{1,i} = \frac{4\left(\sqrt{2}+\sqrt{3.5}\right)^2 \ln \left(T \Delta_i^2 \right)}{\Delta_i^2}$ and $\theta_{1,s} \sim \mathcal{N} \left(\hat{\mu}_{1,s}, \ \frac{1}{s} \right)$.
Then, we have
         \begin{equation}
             \begin{array}{lll}
                \omega_1 & \le  & \sum\limits_{t=1}^{T} 
 \mathbb{E} \left[  \frac{\mathbb{P} \left\{ \theta_1(t) \le \mu_1 - \frac{\Delta_i}{2} \mid \mathcal{F}_{t-1} \right\}}{\mathbb{P} \left\{ \theta_1(t) > \mu_1 - \frac{\Delta_i}{2}\mid \mathcal{F}_{t-1} \right\} } \cdot \bm{1} \left\{i_t = 1 \right\}  \right]  \\

             &  \le   & \mathbb{E} \left[ \sum\limits_{s=1}^{T} \sum\limits_{t=\tau_s^{(1)}+1}^{\tau_{s+1}^{(1)}}   
 \frac{\mathbb{P} \left\{ \theta_1(t) \le \mu_1 - \frac{\Delta_i}{2} \mid \mathcal{F}_{t-1} \right\}}{\mathbb{P} \left\{ \theta_1(t) > \mu_1 - \frac{\Delta_i}{2}\mid \mathcal{F}_{t-1} \right\} } \cdot \bm{1} \left\{i_t = 1 \right\}  \right]  \\
 
              & \le   & \sum\limits_{s=1}^{T}\mathbb{E} \left[  
 \frac{\mathbb{P} \left\{ \theta_1\left(\tau_{s+1}^{(1)}\right) \le \mu_1 - \frac{\Delta_i}{2} \mid \mathcal{F}_{\tau_{s+1}^{(1)}-1} \right\}}{\mathbb{P} \left\{ \theta_1\left(\tau_{s+1}^{(1)}\right) > \mu_1 - \frac{\Delta_i}{2}\mid \mathcal{F}_{\tau_{s+1}^{(1)}-1} \right\} }   \right]\\
             &      \le   & \sum\limits_{s=1}^{L_{1,i}}\mathbb{E} \left[\frac{1}{\mathbb{P} \left\{ \theta_{1,s} > \mu_1 - \frac{\Delta_i}{2}\mid \mathcal{F}_{\tau_{s}^{(1)}} \right\} }  -1 \right] + \sum\limits_{s=L_{1,i}+1}^{T}\mathbb{E} \left[  \frac{1}{\mathbb{P} \left\{\theta_{1,s} > \mu_1 - \frac{\Delta_i}{2}\mid \mathcal{F}_{\tau_{s}^{(1)}} \right\} }  -1 \right]  \quad. 
                    \end{array}
                    \label{UBC 1}
                    \end{equation}

With Lemma~\ref{UBC 22} in hand, we have
\begin{equation}
    \begin{array}{lll}
         \omega_1 & \le & \sum\limits_{s=1}^{L_{1,i}}\mathbb{E} \left[\frac{1}{\mathbb{P} \left\{ \theta_{1,s} > \mu_1 - \frac{\Delta_i}{2}\mid \mathcal{F}_{\tau_{s}^{(1)}} \right\} }  -1 \right] + \sum\limits_{s=L_{1,i}+1}^{T}\mathbb{E} \left[ \frac{1}{\mathbb{P} \left\{ \theta_{1,s} > \mu_1 - \frac{\Delta_i}{2}\mid \mathcal{F}_{\tau_{s}^{(1)}}\right\} }  -1 \right]  \\

         & \le & 29 \cdot L_{1,i} +  \sum\limits_{s=1}^{T} \frac{180}{T\Delta_i^2}\\
         & \le & \frac{116\left(\sqrt{2}+\sqrt{3.5}\right)^2 \ln \left(T \Delta_i^2 + 100^{\frac{1}{3}}\right)}{\Delta_i^2} + \frac{180}{\Delta_i^2}\quad.  
    \end{array}
    \label{Beta 3}
\end{equation}

Plugging the upper bounds of $\omega_1, \omega_2$, and  $\omega_3$ together in (\ref{Eric 1}), we have
\begin{equation}
    \begin{array}{lll}
          \begin{array}{lll}
        \mathbb{E} \left[n_i(T) \right]  &  
       \le   & L_i + \omega_1 + \omega_2 + \omega_3 \\
       & \le & \frac{4 \left(\sqrt{0.5}+\sqrt{2} \right)^2 \ln \left(T\Delta_i^2 \right)}{\Delta_i^2} + 1 +\frac{116\left(\sqrt{2}+\sqrt{3.5}\right)^2 \ln \left(T \Delta_i^2 + 100^{\frac{1}{3}}\right)}{\Delta_i^2} + \frac{180}{\Delta_i^2} + \frac{0.5}{\Delta_i^2} + \frac{2}{\Delta_i^2} \\
       & \le & \frac{1270 \ln \left(T \Delta_i^2 + 100^{\frac{1}{3}}\right)}{\Delta_i^2} + \frac{182.5}{\Delta_i^2} + 1 \quad.
    \end{array}
    \end{array}
\end{equation}    
which concludes the proof.

\paragraph{Proofs for the worst-case regret bound.} We reuse the arguments shown in \cite{agrawalnear}. Let $\Delta_* := e \sqrt{\frac{K \ln K}{T}}$ be the critical gap. 
The total regret of pulling any sub-optimal arm $i$ such that  $\Delta_i < \Delta_*$ is at most $T \cdot \Delta_* = e \sqrt{KT\ln(K)}$.
Now, we consider all the remaining sub-optimal arms $i$ with  $\Delta_i > \Delta_*$. For  such $i$, the regret  is upper bounded by 
\[\Delta_i \mathbb{E} \left[n_i(T) \right] \leq \frac{1270 \ln \left(T \Delta_i^2 + 100^{\frac{1}{3}}\right)}{\Delta_i} + \frac{182.5}{\Delta_i} + \Delta_i, \]
which is decreasing in $\Delta_i \in (\frac{e}{\sqrt{T}},1]$. Therefore, for every such
$i$, the regret is bounded by
$O\left(\sqrt{\frac{T \ln K}{K}}\right)$. Taking a sum over all sub-optimal arms concludes the proofs.


  \begin{proof}[Proof of Lemma~\ref{bandit: lemma 1}]
The proof is very similar to the proof of Lemma~2.8 in \cite{agrawalnear}.     Note that the value of $ \bar{\mu}_{i, n_i(t-1)}$ is determined by the history information. 
 
       We have the following two pieces of argument. 

         The first piece is
         \begin{equation}
             \begin{array}{ll}
                  &  \mathbb{E} \left[\bm{1} \left\{i_t = i, \mathcal{E}^{\theta}_i(t) \right\} \mid \mathcal{F}_{t-1}= F_{t-1}\right] \\

\le   &\mathbb{E} \left[\bm{1} \left\{  \theta_j(t) \le  \bar{\mu}_{i, n_i(t-1)}, \forall j \in \mathcal{A}
 \right\} \mid \mathcal{F}_{t-1}= F_{t-1}\right]  \\
=   & \mathbb{P} \left\{  \theta_1(t) \le  \bar{\mu}_{i, n_i(t-1)} \mid \mathcal{F}_{t-1}= F_{t-1}
 \right\}  
\underbrace{\mathbb{P} \left\{  \theta_j(t) \le  \bar{\mu}_{i, n_i(t-1)}, \forall j \in \mathcal{A} \setminus \{1\}\mid \mathcal{F}_{t-1}= F_{t-1}
 \right\} }_{> 0}  \quad.        
             \end{array}
             \label{taco 1}
         \end{equation}

         The second piece is
         \begin{equation}
             \begin{array}{ll}
                  & \mathbb{E} \left[\bm{1} \left\{i_t = 1, \mathcal{E}_i^{\theta}(t) \right\} \mid \mathcal{F}_{t-1}= F_{t-1}\right]  \\
                   \ge & \mathbb{E} \left[\bm{1} \left\{\theta_1(t) >  \bar{\mu}_{i, n_i(t-1)} \ge \theta_j(t), \forall j \in \mathcal{A} \setminus \{1\}\right\} \mid \mathcal{F}_{t-1}= F_{t-1}\right]  \\
                   = & \underbrace{\mathbb{P} \left\{  \theta_1(t) >  \bar{\mu}_{i, n_i(t-1)}\mid \mathcal{F}_{t-1}= F_{t-1}
 \right\} }_{> 0} 
 \underbrace{\mathbb{P} \left\{  \theta_j(t) \le  \bar{\mu}_{i, n_i(t-1)}, \forall j \in \mathcal{A} \setminus \{1\}\mid \mathcal{F}_{t-1}= F_{t-1}
 \right\} }_{> 0}  \quad.
             \end{array}
             \label{taco 2}
         \end{equation}
         Combining (\ref{taco 1}) and (\ref{taco 2}) concludes the proof.
             
         \end{proof}

\begin{lemma}
    We have
    \begin{equation}
        \begin{array}{lll}
             \sum\limits_{t=1}^{T} \mathbb{E} \left[\bm{1} \left\{i_t = i, \overline{\mathcal{E}^{\theta}_i(t)},   n_i(t-1) \ge L_i \right\} \right]  & \le & \frac{0.5}{\Delta_i^2}\quad.
        \end{array}
    \end{equation}
 \label{Eric 3}
\end{lemma}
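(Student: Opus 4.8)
The plan is to bound each summand by conditioning on the history $\mathcal{F}_{t-1}$ and invoking the Gaussian concentration bound of Fact~\ref{Fact 1}. First I would rewrite the complement event as $\overline{\mathcal{E}_i^\theta(t)} = \left\{\theta_i(t) > \bar{\mu}_{i, n_i(t-1)}\right\}$ and recall that the threshold $\bar{\mu}_{i, n_i(t-1)} = \hat{\mu}_{i, n_i(t-1)} + \sqrt{2\ln(T\Delta_i^2)/n_i(t-1)}$ and the count $n_i(t-1)$ are both determined by $\mathcal{F}_{t-1}$, whereas $\theta_i(t) \sim \mathcal{N}\!\left(\hat{\mu}_{i, n_i(t-1)},\, 1/n_i(t-1)\right)$ is drawn afresh in round $t$. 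Measuring the gap in units of the posterior standard deviation, I would write $\bar{\mu}_{i, n_i(t-1)} - \hat{\mu}_{i, n_i(t-1)} = z\sqrt{1/n_i(t-1)}$ with $z := \sqrt{2\ln(T\Delta_i^2)}$, so that $\overline{\mathcal{E}_i^\theta(t)}$ is exactly the event that the Gaussian sample exceeds its mean by $z$ standard deviations.

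Next, since $\mathbf{1}\{i_t=i\} \le 1$, I would discard the pull indicator (and the restriction $n_i(t-1)\ge L_i$, which is not needed for this term) and condition on $\mathcal{F}_{t-1}$. Because the threshold is $\mathcal{F}_{t-1}$-measurable, Fact~\ref{Fact 1} gives $\mathbb{P}\left\{\theta_i(t) > \bar{\mu}_{i, n_i(t-1)} \mid \mathcal{F}_{t-1}\right\} \le \tfrac{1}{2}e^{-z^2/2} = \tfrac{1}{2T\Delta_i^2}$, using $z^2/2 = \ln(T\Delta_i^2)$. Taking the outer expectation preserves this bound, and summing over the $T$ rounds yields $\sum_{t=1}^{T} \tfrac{1}{2T\Delta_i^2} = \tfrac{0.5}{\Delta_i^2}$, which is precisely the claimed bound.

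A small point to record is validity of the tail bound: we are in the regime $\Delta_i > 1/\sqrt{T}$ handled in the proof of Theorem~\ref{Theorem: TS new 1}, so $T\Delta_i^2 > 1$, which makes $z$ real and ensures that $e^{-z^2/2} = 1/(T\Delta_i^2)$ is a genuine probability. There is no substantive obstacle here beyond bookkeeping; the only step demanding care is the measurability argument showing that, conditioned on $\mathcal{F}_{t-1}$, the whole event $\overline{\mathcal{E}_i^\theta(t)}$ collapses to a one-sided tail event for a single Gaussian with a history-determined threshold, so that only the fresh randomness of $\theta_i(t)$ enters the concentration inequality.
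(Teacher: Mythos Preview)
Your proof is correct and uses essentially the same idea as the paper: apply the Gaussian tail bound of Fact~\ref{Fact 1} with $z=\sqrt{2\ln(T\Delta_i^2)}$ to get the per-round bound $\tfrac{1}{2T\Delta_i^2}$ and sum over $T$ rounds. The only cosmetic difference is that the paper first partitions the time horizon by pulls of arm $i$ (using the indicator $\bm{1}\{i_t=i\}$ to collapse each block to a single term) before invoking Fact~\ref{Fact 1}, whereas you condition directly on $\mathcal{F}_{t-1}$ and drop the pull indicator; since the tail bound does not depend on $n_i(t-1)$, both bookkeepings lead to the identical final estimate.
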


\begin{proof}[Proof of Lemma~\ref{Eric 3}]
This lemma is very similar to Lemma~2.16 in \cite{agrawalnear}.
Recall event $\mathcal{E}_i^{\theta}(t) = \left\{\theta_i(t) \le \hat{\mu}_{i, n_i(t-1)} + \sqrt{\frac{2 \ln \left(T \Delta_i^2 \right)}{n_i(t-1)}}\right\}$.
Let $\tau_s^{(i)}$ be the round when arm $i$ is pulled for the $s$-th time. Now, we partition all $T$ rounds based on the pulls of arm $i$. We have 
        \begin{equation}
        \begin{array}{ll}
          &   \sum\limits_{t=1}^{T} \mathbb{E} \left[\bm{1} \left\{i_t = i, \overline{\mathcal{E}^{\theta}_i(t)},  n_i(t-1) \ge L_i \right\} \right]  \\
           \le & \sum\limits_{s=L_i}^{T} \mathbb{E} \left[\sum\limits_{t= \tau_s^{(i)}+1}^{\tau_{s+1}^{(i)}}\bm{1} \left\{i_t = i, \overline{\mathcal{E}^{\theta}_i(t)} \right\} \right] \\
            \le & \sum\limits_{s=L_i}^{T} \mathbb{E} \left[\bm{1} \left\{ \overline{\mathcal{E}^{\theta}_i\left(\tau_{s+1}^{(i)}\right)} \right\} \right] \\
             = & \sum\limits_{s=L_i}^{T} \mathbb{E} \left[\bm{1} \left\{ \theta_{i,s} > \hat{\mu}_{i,s} + \sqrt{\frac{2 \ln (T\Delta_i^2)}{s}} \right\} \right] \\
              = & \sum\limits_{s=L_i}^{T}\mathbb{E} \left[\underbrace{ \mathbb{P}  \left\{ \theta_{i,s} > \hat{\mu}_{i,s} + \sqrt{\frac{2 \ln (T\Delta_i^2)}{s}} \mid \hat{\mu}_{1,s} \right\}}_{\text{Fact~\ref{Fact 1}}} \right] \\
             \le & \sum\limits_{s=L_i}^{T} \frac{1}{2} e^{-0.5 \cdot 2 T\Delta_i^2} \\
             \le & \frac{0.5}{\Delta_i^2}\quad,
        \end{array}
    \end{equation}
    which concludes the proof.
\end{proof}

\begin{lemma}
    We have
    \begin{equation}
        \begin{array}{lll}
               \sum\limits_{t=1}^{T} \mathbb{E} \left[\bm{1} \left\{i_t = i, \overline{\mathcal{E}_i^{\mu}(t-1)}, n_i(t-1) \ge L_i \right\} \right] & \le & \frac{2}{\Delta_i^2}\quad.
              
        \end{array}
    \end{equation}
    \label{Eric 2}
\end{lemma}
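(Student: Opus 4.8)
The plan is to mirror the proof of Lemma~\ref{Eric 3} exactly, swapping the Gaussian concentration bound for Hoeffding's inequality (Fact~\ref{Hoeffding}). First I would partition the $T$ rounds according to the pulls of arm $i$: letting $\tau_s^{(i)}$ denote the round in which arm $i$ is pulled for the $s$-th time, the empirical mean $\hat{\mu}_{i, n_i(t-1)}$ is frozen at the value $\hat{\mu}_{i,s}$ for every $t \in \left\{\tau_s^{(i)}+1, \dotsc, \tau_{s+1}^{(i)}\right\}$, since no new sample of arm $i$ arrives inside such a block. Within each block the indicator $\bm{1}\left\{i_t = i\right\}$ is nonzero only at the block's endpoint $t = \tau_{s+1}^{(i)}$, so summing the indicator over $t$ collapses to a single term per pull index $s$, and the constraint $n_i(t-1) \ge L_i$ simply restricts that index to $s \ge L_i$.

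Concretely, after this collapsing step I would arrive at
\begin{equation*}
\sum_{t=1}^{T} \mathbb{E}\left[\bm{1}\left\{i_t = i, \overline{\mathcal{E}_i^{\mu}(t-1)}, n_i(t-1) \ge L_i\right\}\right] \le \sum_{s=L_i}^{T} \mathbb{E}\left[\bm{1}\left\{\left|\hat{\mu}_{i,s} - \mu_i\right| > \sqrt{\frac{0.5\ln\left(T\Delta_i^2\right)}{s}}\right\}\right],
\end{equation*}
where the right-hand side rewrites $\overline{\mathcal{E}_i^{\mu}}$ directly from its definition with $n_i(t-1) = s$. Each summand is then a deviation probability for the average of $s$ i.i.d.\ samples of arm $i$, which I would control by applying Fact~\ref{Hoeffding} with $z = \sqrt{0.5\ln\left(T\Delta_i^2\right)}$. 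Since $z\sqrt{1/s}$ matches the confidence radius exactly and $2e^{-2z^2} = 2e^{-\ln\left(T\Delta_i^2\right)} = 2/\left(T\Delta_i^2\right)$, every term is bounded by $2/\left(T\Delta_i^2\right)$. Summing the at most $T$ identical terms gives $\sum_{s=L_i}^{T} 2/\left(T\Delta_i^2\right) \le T \cdot 2/\left(T\Delta_i^2\right) = 2/\Delta_i^2$, which is the claimed bound.

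The computation is entirely routine, and I do not anticipate a genuine obstacle. The only point requiring a little care is the collapsing argument itself, namely verifying that the empirical mean is constant across each inter-pull block and that the event can therefore be triggered at most once per pull index $s$; this is precisely the mechanism already used in Lemma~\ref{Eric 3}, so it transfers verbatim. Note that the threshold $L_i$ is inessential to the final constant here, as dropping it would only widen the summation range; the clean constant $2$ comes entirely from the choice of the confidence radius $\sqrt{0.5\ln\left(T\Delta_i^2\right)/n_i(t-1)}$, which sets $z^2 = 0.5\ln\left(T\Delta_i^2\right)$ in Hoeffding's exponent.
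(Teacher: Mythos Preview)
Your proposal is correct and follows essentially the same argument as the paper: partition the horizon by the pulls of arm $i$, collapse each inter-pull block to a single term indexed by $s$, apply Hoeffding's inequality (Fact~\ref{Hoeffding}) with $z = \sqrt{0.5\ln(T\Delta_i^2)}$ to bound each term by $2/(T\Delta_i^2)$, and sum over at most $T$ values of $s$. The paper's proof is line-for-line the same computation.
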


\begin{proof}[Proof of Lemma~\ref{Eric 2}]

Recall event $\mathcal{E}_{i}^{\mu}(t-1) = \left\{ \left|\hat{\mu}_{i, n_i(t-1)} -\mu_i \right| \le \sqrt{\frac{0.5 \ln \left(T \Delta_i^2 \right)}{n_i(t-1)}} \right\}$.
Let $\tau_s^{(i)}$ be the round when arm $i$ is pulled for the $s$-th time. Now, we partition all $T$ rounds based on the pulls of arm $i$. We have
    \begin{equation}
        \begin{array}{ll}
            &   \sum\limits_{t=1}^{T} \mathbb{E} \left[\bm{1} \left\{i_t = i, \overline{\mathcal{E}_i^{\mu}(t-1)}, n_i(t-1) \ge L_i \right\} \right] \\
             \le &  \sum\limits_{s=L_i}^{T} \mathbb{E} \left[\sum\limits_{t= \tau_s^{(i)}+1}^{\tau_{s+1}^{(i)}}\bm{1} \left\{i_t = i, \overline{\mathcal{E}_i^{\mu}(t-1)} \right\} \right] \\
            \le &  \sum\limits_{s=L_i}^{T} \mathbb{E} \left[\bm{1} \left\{ \overline{\mathcal{E}_i^{\mu}\left(\tau_{s+1}^{(i)}-1 \right)} \right\} \right] \\ 
            = &  \sum\limits_{s=L_i}^{T} \underbrace{ \mathbb{P} \left\{ \left|\hat{\mu}_{i, s} -\mu_i \right| > \sqrt{\frac{0.5 \ln \left(T \Delta_i^2 \right)}{s}} \right\}}_{\text{Hoeffding's inequality}}  \\
            \le &  \sum\limits_{s=L_i}^{T}  \frac{2}{(T\Delta_i^2)} \\
            \le & \frac{2}{\Delta_i^2}\quad,
        \end{array}
    \end{equation}
    which concludes the proof.\end{proof}

\section{Proofs for Theorem~\ref{Theorem: Alpha-TS}} \label{app: TS-MA}

\begin{proof}[Proof of Theorem~\ref{Theorem: Alpha-TS}]

For a fixed sub-optimal arm $i \in \mathcal{A}$, we upper bound the expected number of pulls $\mathbb{E} \left[n_i(T) \right]$ by the end of round $T$. 
Let $L_i := \left\lceil \frac{\left(\sqrt{0.5}+\sqrt{5-\alpha}\right)^2 \ln^{1+\alpha} (T)}{\Delta_i^2} \right\rceil$. 
Let $\theta_i(t) := \mathop{\max}\limits_{h \in [\phi]} \theta_{i, n_i(t-1)}^{h}$ be the maximum of $\phi$ i.i.d. posterior samples, where each $\theta_{i, n_i(t-1)}^{h} \sim \mathcal{N} \left(\hat{\mu}_{i, n_i(t-1)}, \frac{\ln^{\alpha}(T)}{n_i(t-1)} \right)$.
We decompose the regret and have 
\begin{equation}
    \begin{array}{lll}
         \mathbb{E} \left[n_i(T) \right] & = & \sum\limits_{t=1}^{T} \mathbb{E} \left[\bm{1} \left\{i_t = i \right\} \right]  \\
         & \le & L_i + \sum\limits_{t=1}^{T} \mathbb{E} \left[\bm{1} \left\{i_t = i, n_i(t-1) \ge L_i \right\} \right]  \\
          & \le & L_i + \sum\limits_{t=1}^{T} \mathbb{E} \left[\bm{1} \left\{i_t = i, \theta_i(t) \ge \theta_1(t), n_i(t-1) \ge L_i \right\} \right] \\
          & \le & L_i + \underbrace{\sum\limits_{t=1}^{T} \mathbb{E} \left[\bm{1} \left\{i_t = i, \theta_i(t) \ge \mu_i+\Delta_i, n_i(t-1) \ge L_i \right\} \right]  }_{=:\omega_1, \text{over-estimation of sub-optimal arm $i$}} 
          + \underbrace{\sum\limits_{t=1}^{T} \mathbb{E} \left[\bm{1} \left\{\theta_1(t) < \mu_1\right\} \right]}_{=:\omega_2, \text{under-estimation of  optimal arm $1$}}.
          \end{array}
          \end{equation}

    \textbf{Over-estimation of the sub-optimal arm $i$.} Define  $\mathcal{E}_{i}^{\mu}(t-1)$ as the event that $\left|\hat{\mu}_{i, n_i(t-1)} - \mu_i \right| \le \sqrt{\frac{0.5 \ln (T)}{n_i(t-1)}}$.
Define $\tau_s^{(i)}$ as the round when arm $i$ is pulled for the $s$-th time. Based on the  definition, we know $\hat{\mu}_{i, n_i(t-1)}$ stays the same for all rounds $t \in \left\{\tau_s^{(i)} +1, \dotsc, \tau_{s+1}^{(i)} \right\}$. 
We have
        \begin{equation}
          \begin{array}{lll}
            
          \omega_1 & = &  \sum\limits_{t=1}^{T} \mathbb{E} \left[\bm{1} \left\{i_t = i, \theta_i(t) \ge \mu_i+\Delta_i, n_i(t-1) \ge L_i \right\} \right] \\
          & \le &  \sum\limits_{t=1}^{T} \mathbb{E} \left[\bm{1} \left\{i_t = i, \mathcal{E}_{i}^{\mu}(t-1) , \theta_i(t) \ge \mu_i+\Delta_i, n_i(t-1) \ge L_i \right\} \right] + \sum\limits_{t=1}^{T} \mathbb{E} \left[\bm{1} \left\{i_t = i, \overline{ \mathcal{E}_{i}^{\mu}(t-1)}, n_i(t-1) \ge L_i  \right\} \right]  \\
          &\le &\sum\limits_{s=L_i}^{T}\mathbb{E} \left[\sum\limits_{t=\tau_s^{(i)} +1}^{\tau_{s+1}^{(i)}} \bm{1} \left\{i_t = i, \mathcal{E}_{i}^{\mu}(t-1) , \theta_i(t) \ge \mu_i+\Delta_i\right\} \right] + \sum\limits_{s=L_i}^{T}\mathbb{E} \left[\sum\limits_{t=\tau_s^{(i)} +1}^{\tau_{s+1}^{(i)}} \bm{1} \left\{i_t = i, \overline{ \mathcal{E}_{i}^{\mu}(t-1)}\right\} \right]  \\
          & \le &\sum\limits_{s=L_i}^{T} \mathbb{E} \left[\bm{1} \left\{\theta_i\left(\tau_{s+1}^{(i)} \right) \ge \mu_i + \Delta_i, \mathcal{E}_{i}^{\mu}\left(\tau_{s+1}^{(i)}-1\right) \right\} \right] + \sum\limits_{s=L_i}^{T}\mathbb{E} \left[ \bm{1} \left\{ \overline{ \mathcal{E}_{i}^{\mu}\left(\tau_{s+1}^{(i)}-1\right)}\right\} \right]    \\ 
           & = &\sum\limits_{s=L_i}^{T} \mathbb{E} \left[\bm{1} \left\{\mathop{\max}\limits_{h \in [\phi]}\theta_{i,s}^{h} \ge \mu_i + \Delta_i,  \left| \hat{\mu}_{1,s} - \mu_i \right| \le \sqrt{\frac{0.5 \ln (T)}{s}}  \right\}\right]+ \sum\limits_{s=L_i}^{T} \underbrace{\mathbb{P}  \left\{ \left| \hat{\mu}_{1,s} - \mu_i \right| > \sqrt{\frac{0.5 \ln (T)}{s}} \right\} }_{\le 2e^{-\ln(T)}, \text{ Fact~\ref{Hoeffding}}}  \\
           & \le^{(a)} & \sum\limits_{s=L_i}^{T}\mathbb{P}  \left\{\mathop{\max}\limits_{h \in [\phi]}\theta_{i,s}^{h} \ge \hat{\mu}_{i,s} - \sqrt{\frac{0.5 \ln (T)}{s}} + \Delta_i \right\} + \sum\limits_{t=1}^{T} 2e^{-\ln(T)} \\
           & \le &\sum\limits_{s=L_i}^{T} \underbrace{\mathbb{P}  \left\{\mathop{\max}\limits_{h \in [\phi]}\theta_{i,s}^{h} \ge \hat{\mu}_{i,s} - \sqrt{\frac{0.5 \ln^{1+\alpha} (T)}{s}} + \Delta_i \right\}}_{I_1} + 2
           \quad.
    \end{array}
    \label{Banana 1}
\end{equation}

To upper bound term $I_1$, we recall $L_i = \left\lceil \frac{\left(\sqrt{0.5}+\sqrt{5-\alpha} \right)^2 \ln^{1+\alpha} (T)}{\Delta_i^2} \right\rceil$. Then, we have
$ \Delta_i \ge \sqrt{ \frac{\left(\sqrt{0.5}+\sqrt{5-\alpha}\right)^2  \ln^{1+\alpha}(T)}{L_i} } \ge \sqrt{ \frac{\left(\sqrt{0.5}+\sqrt{5-\alpha} \right)^2  \ln^{1+\alpha}(T)}{s} }$ for any $s \ge L_i$. 

Now, we have
\begin{equation}
    \begin{array}{lll}
      I_1 & =   & \mathbb{P}  \left\{\mathop{\max}\limits_{h \in [\phi]}\theta_{i,s}^{h} \ge \hat{\mu}_{1,s} - \sqrt{\frac{0.5 \ln^{1+\alpha} (T)}{s}}+ \Delta_j \right\} \\
       &  \le & \mathbb{P}  \left\{\mathop{\max}\limits_{h \in [\phi]}\theta_{i,s}^{h} \ge \hat{\mu}_{1,s} - \sqrt{\frac{0.5 \ln^{1+\alpha} (T)}{s}} + \sqrt{ \frac{ \left(\sqrt{0.5}+\sqrt{5-\alpha} \right)^2  \ln^{1+\alpha}(T)}{s} } \right\} \\
        
        & \le & \sum\limits_{h=1}^{\phi} \underbrace{ \mathbb{P}  \left\{\theta^{h}_{i,s} \ge \hat{\mu}_{1,s} + \sqrt{ (5-\alpha)\ln(T)  }\sqrt{ \frac{ \ln^{\alpha} (T)}{s} } \right\}}_{\text{Fact~\ref{Fact 1}}} \\ 
      &   \le & \phi \cdot \frac{1}{2T^{2.5-0.5\alpha}}\\
      &&\\
      & = & \frac{2}{c_0} \cdot \ln^{1.5-0.5\alpha}(T)  \cdot T^{ 0.5(1-\alpha)}   \cdot \frac{1}{2T^{2.5-0.5\alpha}} \\
      &&\\

            & \le & \frac{1}{c_0} \cdot \frac{1}{T} \cdot \frac{\ln^{1.5}(T)}{T} \\
            &&\\
      & \le^{(a)} & \frac{1}{c_0 \cdot T} \quad,
    \end{array}
\end{equation}
where inequality (a) uses the fact that $T \ge \ln^{1.5}(T)$, when $T \ge 1$.

By plugging the upper bound of $I_1$  into (\ref{Banana 1}), we have $\omega_1 \le     \frac{1}{c_0}+2$.

\paragraph{Under-estimation of the optimal arm $1$.} Define  $\mathcal{E}_{1}^{\mu}(t-1)$ as the event that $\left|\hat{\mu}_{1, n_1(t-1)} - \mu_1 \right| \le \sqrt{\frac{ \ln (T)}{n_1(t-1)}}$. Define $\tau_s^{(1)}$ as the round when arm $1$ is pulled for the $s$-th time. Based on the  definition, we know $\hat{\mu}_{1, n_1(t-1)}$ stays the same for all rounds $t \in \left\{\tau_s^{(1)} +1, \dotsc, \tau_{s+1}^{(1)} \right\}$.  Still, 
Recall $\phi =  \frac{2}{c_0} T^{\frac{1-\alpha}{2}} \ln^{\frac{3-\alpha}{2}}(T) $. We have
\begin{equation}
    \begin{array}{lll}
       \omega_2 &=  & \sum\limits_{t=1}^{T} \mathbb{E} \left[\bm{1} \left\{\theta_1(t) < \mu_1 \right\} \right]  \\
       &\le   & \sum\limits_{s=1}^{T} \mathbb{E} \left[\sum\limits_{t =\tau_s^{(1)}+1}^{\tau_{s+1}^{(1)}}\bm{1} \left\{\theta_1(t) < \mu_1 \right\} \right]  \\
  &   \le    & \sum\limits_{s=1}^{T} \mathbb{E} \left[\sum\limits_{t =\tau_s^{(1)}+1}^{\tau_{s+1}^{(1)}}\bm{1} \left\{\theta_1(t) < \mu_1, \mathcal{E}_1^{\mu}(t-1) \right\} \right] + \sum\limits_{s=1}^{T} \mathbb{E} \left[\sum\limits_{t =\tau_s^{(1)}+1}^{\tau_{s+1}^{(1)}}\bm{1} \left\{ \overline{\mathcal{E}_1^{\mu}(t-1)} \right\} \right] \\
   &   \le   & \sum\limits_{s=1}^{T} \mathbb{E} \left[\sum\limits_{t =\tau_s^{(1)}+1}^{\tau_{s+1}^{(1)}}\bm{1} \left\{\theta_1(t) < \hat{\mu}_{1, s} + \sqrt{\frac{\ln(T)}{s}}\right\} \right] + \sum\limits_{s=1}^{T} \mathbb{E} \left[\sum\limits_{t =\tau_s^{(1)}+1}^{\tau_{s+1}^{(1)}}\bm{1} \left\{\left|\hat{\mu}_{1, s} - \mu_1 \right| > \sqrt{\frac{ \ln (T)}{s}}\right\} \right] \\
    &   \le   & \sum\limits_{s=1}^{T} \mathbb{E} \left[\sum\limits_{t =\tau_s^{(1)}+1}^{\tau_{s+1}^{(1)}}  \bm{1} \left\{\mathop{\max}\limits_{h \in [\phi]} \theta_{1,s}^{h} < \hat{\mu}_{1, s} + \sqrt{\frac{\ln(T)}{s}} \right\}   \right] + \sum\limits_{s=1}^{T}\sum\limits_{t=1}^{T}2 e^{-2\ln (T)}  \\
        &   \le   & \sum\limits_{s=1}^{T} \mathbb{E} \left[\sum\limits_{t =1}^{T}  \mathbb{P} \left\{\mathop{\max}\limits_{h \in [\phi]} \theta_{1,s}^{h} < \hat{\mu}_{1, s} + \sqrt{\frac{\ln(T)}{s}} \mid \hat{\mu}_{1, s} \right\}   \right] + 2 \\
            &   \le   & \sum\limits_{s=1}^{T} \sum\limits_{t =1}^{T} \mathbb{E} \left[\prod\limits_{h \in [\phi]} \underbrace{ \mathbb{P} \left\{\theta_{1,s}^{h} < \hat{\mu}_{1, s} + \sqrt{\frac{\ln(T)}{s}} \mid \hat{\mu}_{1, s} \right\}  }_{\eta} \right] + 2 \\
&&\\
        & \le^{(a)} & \sum\limits_{s=1}^{T}\sum\limits_{t=1}^{T} \left(1-c_0 \cdot\frac{1}{\sqrt{\ln^{(1-\alpha)}(T) \cdot T^{ (1-\alpha)}  } }\right)^{\phi} +2 \\

        &&\\
        & \le^{(b)} &T^2 \cdot  e^{-c_0 \cdot \frac{1}{\sqrt{\ln^{(1-\alpha)}(T) \cdot T^{ (1-\alpha)}  } } \cdot \phi} + 2\\
        & \le & 3\quad,
  
    \end{array}
    \label{Apple 1}
\end{equation}
where inequality (b) uses the fact $1-x \le e^{-x}$, where $x = c_0 \cdot\frac{1}{\sqrt{\ln^{(1-\alpha)}(T) \cdot T^{ (1-\alpha)}  } }$. Inequality (a) in (\ref{Apple 1}) uses the following result, which is
\begin{equation}
    \begin{array}{lll}
       \eta & =  &  \mathbb{P} \left\{\theta_{1,s}^{h} < \hat{\mu}_{1, s} + \sqrt{\frac{\ln(T)}{s}} \mid \hat{\mu}_{1,s} \right\}\\
       &=  & 1-\mathbb{P} \left\{\theta_{1,s}^{h} \ge \hat{\mu}_{1, s} + \sqrt{\ln^{(1-\alpha)}(T) } \cdot \sqrt{\frac{\ln^{\alpha}(T)}{s}} \mid \hat{\mu}_{1, s}\right\} \\
     &  \le & 1 - \frac{1}{\sqrt{2\pi}} \cdot \frac{\sqrt{\ln^{(1-\alpha)}(T)}}{\ln^{(1-\alpha)}(T) + 1} e^{-0.5 \ln^{(1-\alpha)}(T)} \\
    &   &\\
      & \le^{(a)} & 1 - \frac{1}{2\sqrt{2\pi }} \cdot \frac{\sqrt{\ln^{(1-\alpha)}(T)}}{\ln^{(1-\alpha)}(T) } e^{-0.5 \left( (1-\alpha)\ln(T) +1 \right)} \\

      &&\\
         & = & 1 - \frac{1}{2\sqrt{2e\pi}} \cdot \frac{1}{\sqrt{\ln^{(1-\alpha)}(T) \cdot T^{ (1-\alpha)}  } } \\
       &  = & 1- c_0 \cdot  \frac{1}{\sqrt{\ln^{(1-\alpha)}(T) \cdot T^{ (1-\alpha)}  } }\quad.

    \end{array}
    \label{Apple 2}
\end{equation}
Inequality (a) in (\ref{Apple 2}) uses Fact~\ref{Fact: calculus} and $\ln^{(1-\alpha)}(T) \ge 1$.

By plugging the upper bounds of $\omega_1$ and $\omega_2$ into (\ref{Apple 1}), we have
\begin{equation}
    \begin{array}{lll}
        \mathbb{E} \left[n_i(T) \right] &  \le 
         & L_i + \omega_1 + \omega_2 \\
         & \le &  \frac{\left(\sqrt{0.5}+\sqrt{5-\alpha}\right)^2 \ln^{1+\alpha} (T)}{\Delta_i^2} +1 + \frac{1}{c_0} +2 + 3 \\
         & \le & \frac{\left(\sqrt{0.5}+\sqrt{5-\alpha}\right)^2 \ln^{1+\alpha} (T)}{\Delta_i^2} + O(1)\\
         & \le & \frac{\left(\sqrt{0.5}+\sqrt{5}\right)^2 \ln^{1+\alpha} (T)}{\Delta_i^2} + O(1)\quad, 
    \end{array}
\end{equation}
which concludes the proof. 
\end{proof}

\section{Proofs for Theorem~\ref{Theorem: Main 3}} \label{app: TS-TD}
For a fixed sub-optimal arm $i$, we upper bound the expected number of pulls $\mathbb{E} \left[ n_i(T)\right]$ by the of round $T$. Let $\bar{\mu}_i(t-1) := \hat{\mu}_{i,n_i(t-1)} + \sqrt{\frac{\left(5-\alpha\right) \ln^{1+\alpha}(T)}{n_i(t-1)}} $ be the upper confidence bound. Define $\mathcal{E}_i^{\mu}(t-1) := \left\{\left| \hat{\mu}_{i,n_i(t-1)} - \mu_i \right| \le \sqrt{\frac{2\ln(T)}{n_i(t-1)}} \right\}$  and $\mathcal{E}_i^{\theta}(t) := \left\{\theta_i(t) \le \bar{\mu}_i(t-1)  \right\}$. Note that  $\theta_i(t)$ can be either a fresh posterior sample (arm $i$ is in the first phase in round $t$) or the best posterior sample based on the previously drawn samples (arm $i$ is in the second phase in round $t$).

Define $\mathcal{E}_1^{\mu}(t-1) := \left\{\left| \hat{\mu}_{1,n_1(t-1)} - \mu_1 \right| \le \sqrt{\frac{2\ln(T)}{n_1(t-1)}} \right\}$.  

Let $L_i := \frac{4 \left(\sqrt{2} + \sqrt{5-\alpha} \right)^2 \ln^{1+\alpha} (T)}{\Delta_i^2}$. 
Now, we are ready to decompose the regret.  We have
\begin{equation}
    \begin{array}{ll}
         &\mathbb{E} \left[n_i(T) \right] \\
         
          \le & L_i + \sum\limits_{t=1}^{T} \mathbb{E} \left[\bm{1} \left\{i_t = i, n_i(t-1) \ge L_i \right\} \right]  \\
           \le & L_i + \underbrace{\sum\limits_{t=1}^{T} \mathbb{E} \left[\bm{1} \left\{i_t = i, \mathcal{E}_i^{\theta}(t), \mathcal{E}_i^{\mu}(t-1), \mathcal{E}_1^{\mu}(t-1),  n_i(t-1) \ge L_i \right\} \right] }_{\omega_1}  
         
           + \underbrace{ \sum\limits_{t=1}^{T} \mathbb{E} \left[\bm{1} \left\{i_t = i,  \overline{\mathcal{E}_i^{\theta}(t)}, n_i(t-1) \ge L_i \right\} \right]}_{\omega_2} \\
           +& \underbrace{  \sum\limits_{t=1}^{T} \mathbb{E} \left[\bm{1} \left\{i_t = i, \overline{\mathcal{E}_i^{\mu}(t-1)}, n_i(t-1) \ge L_i \right\} \right]}_{\omega_3}+ \underbrace{  \sum\limits_{t=1}^{T} \mathbb{E} \left[\bm{1} \left\{i_t = i, \overline{\mathcal{E}_1^{\mu}(t-1)}, n_i(t-1) \ge L_i \right\} \right]}_{\omega_4}.
           \end{array}
           \label{Friday 1}
           \end{equation}
Upper bounding terms $\omega_2, \omega_3$ and $\omega_4$ are not difficult as only concentration bounds are required. 

        The challenging part is to upper bound term $\omega_1$.  We decompose term $\omega_1$  based on whether the optimal arm $1$ is running Vanilla Thompson Sampling or not in round $t$. Define $\mathcal{T}_1(t)$ as the event that the optimal arm $1$ in round $t$ is using a fresh posterior sample in the learning. In other words, if event $\mathcal{T}_1(t)$ is true, we have $\theta_1(t) \sim \mathcal{N} \left(\hat{\mu}_{1, n_1(t-1)}, \ \frac{\ln^{\alpha}(T)}{n_1(t-1)} \right)$. So, $\overline{\mathcal{T}_1(t)}$ denotes the event that the optimal arm $1$ is using the best posterior sample among all the previously drawn samples in the learning in round $t$.  We have      
           \begin{equation}
    \begin{array}{lll}
         \omega_1 
      & = &
           \underbrace{\sum\limits_{t=1}^{T} \mathbb{E} \left[\bm{1} \left\{i_t = i, \mathcal{E}_i^{\theta}(t), \mathcal{T}_1(t), \mathcal{E}_i^{\mu}(t-1),\mathcal{E}_1^{\mu}(t-1),   n_i(t-1) \ge L_i \right\} \right] }_{I_1}  
        \\
        & +  &   \underbrace{\sum\limits_{t=1}^{T} \mathbb{E} \left[\bm{1} \left\{i_t = i,\mathcal{E}_i^{\theta}(t), \overline{\mathcal{T}_1(t)}, \mathcal{E}_i^{\mu}(t-1), \mathcal{E}_1^{\mu}(t-1), n_i(t-1) \ge L_i \right\} \right]  }_{I_2}.  
         \end{array}
         \label{Friday 3}
         \end{equation}

Let $\mathcal{F}_{t-1} = \left\{h_1(\tau), h_2(\tau), \dotsc, h_K(\tau), i_{\tau}, X_{i_{\tau}}(\tau), \forall \tau =1,2,\dotsc,t-1 \right\}$ collect all the history information by the end of round $t$. It collects the number of used posterior samples $h_i(\tau)$ for all $i \in \mathcal{A}$, the index of the pulled arm $i_{\tau}$, and the observed reward $X_{i_{\tau}}(\tau)$ for all $\tau = 1, 2,\dotsc,t-1$. Note that detailed information about posterior samples is not collected.

\paragraph{Upper bound $I_1$.}        Term $I_1$ in (\ref{Friday 3}) will use a similar analysis to  Vanilla Thompson Sampling, which links the probability of pulling a sub-optimal arm $i$ to the probability of pulling the optimal arm $1$. Note that if event $\mathcal{T}_1(t)$ is true, we know for sure the optimal arm $1$ is running  Vanilla Thompson Sampling in round $t$. This means the optimal arm $1$ will have some chance to be pulled. We formalize this claim in our technical Lemma~\ref{martingale lemma} below.
    
        \begin{lemma}
        For any instantiation $F_{t-1}$ of $\mathcal{F}_{t-1}$, we have 
           \begin{equation}
        \begin{array}{l}
             \mathbb{E} \left[   \bm{1} \left\{i_t = i,\mathcal{T}_1(t) , \mathcal{E}_i^{\theta}(t) \right\} \mid \mathcal{F}_{t-1} = F_{t-1}\right] 
             \le  \frac{ \mathbb{P} \left\{\theta_{1, n_1(t-1)} \le \bar{\mu}_i(t-1)\mid \mathcal{F}_{t-1} = F_{t-1} \right\} }{ \mathbb{P} \left\{\theta_{1,n_1(t-1)} > \bar{\mu}_i(t-1)\mid \mathcal{F}_{t-1} = F_{t-1} \right\} }   \mathbb{E} \left[   \bm{1} \left\{i_t = 1, \mathcal{T}_1(t), \mathcal{E}_i^{\theta}(t) \right\} \mid \mathcal{F}_{t-1} = F_{t-1}\right],
        \end{array}
       \end{equation} 
       where $\theta_{1,n_1(t-1)} \sim \mathcal{N} \left(\hat{\mu}_{1,n_1(t-1)}, \ \frac{\ln^{\alpha}(T)}{n_1(t-1)} \right)$.
       \label{martingale lemma}
    \end{lemma}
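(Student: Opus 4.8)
The plan is to mirror the two-piece sandwich argument from the proof of Lemma~\ref{bandit: lemma 1}, after first reducing to the case where arm $1$ genuinely draws a fresh posterior sample at round $t$. The crucial structural observation is that the event $\mathcal{T}_1(t)$---whether arm $1$ is in Phase~I at round $t$---is entirely determined by the used-sample counter $h_1(t-1)$, which is recorded in $\mathcal{F}_{t-1}$. Hence $\mathcal{T}_1(t)$ is $\mathcal{F}_{t-1}$-measurable, so for a fixed instantiation $F_{t-1}$ it is deterministic. If $\mathcal{T}_1(t)$ is false under $F_{t-1}$, both sides of the claimed inequality vanish: the indicator $\bm{1}\{\mathcal{T}_1(t)\}$ forces the left-hand side and the conditional expectation on the right-hand side to zero, while the denominator $\mathbb{P}\{\theta_{1,n_1(t-1)} > \bar{\mu}_i(t-1) \mid F_{t-1}\}$ stays strictly positive by Gaussian full support, so the inequality holds trivially. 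It therefore suffices to treat instantiations $F_{t-1}$ under which $\mathcal{T}_1(t)$ holds.

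Under such an $F_{t-1}$, arm $1$ draws a fresh sample $\theta_1(t) = \theta_{1,n_1(t-1)} \sim \mathcal{N}(\hat{\mu}_{1,n_1(t-1)}, \ln^{\alpha}(T)/n_1(t-1))$ at round $t$, independent of the (possibly Phase~II, accumulated) indices $(\theta_j(t))_{j \ne 1}$ conditional on $F_{t-1}$. I would then run the two pieces exactly as in Lemma~\ref{bandit: lemma 1}. For the upper piece, on the event $\{i_t = i, \mathcal{E}_i^{\theta}(t)\}$ the sample $\theta_i(t)$ is the maximum and is at most $\bar{\mu}_i(t-1)$, so every $\theta_j(t) \le \bar{\mu}_i(t-1)$; factoring out arm $1$'s independent fresh draw gives $\mathbb{E}[\bm{1}\{i_t = i, \mathcal{E}_i^{\theta}(t)\} \mid F_{t-1}] \le \mathbb{P}\{\theta_{1,n_1(t-1)} \le \bar{\mu}_i(t-1)\mid F_{t-1}\}\cdot C$, where $C := \mathbb{P}\{\theta_j(t) \le \bar{\mu}_i(t-1), \forall j \ne 1 \mid F_{t-1}\}$. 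For the lower piece, the event $\{\theta_1(t) > \bar{\mu}_i(t-1) \ge \theta_j(t), \forall j \ne 1\}$ is contained in $\{i_t = 1, \mathcal{E}_i^{\theta}(t)\}$ (arm $1$ is the maximum, and $\theta_i(t) \le \bar{\mu}_i(t-1)$ holds since $i \ne 1$), giving $\mathbb{E}[\bm{1}\{i_t = 1, \mathcal{E}_i^{\theta}(t)\}\mid F_{t-1}] \ge \mathbb{P}\{\theta_{1,n_1(t-1)} > \bar{\mu}_i(t-1)\mid F_{t-1}\}\cdot C$ with the same common factor $C$. Dividing the two bounds cancels $C$ and yields the stated ratio, the division being well defined by strict positivity of the denominator.

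The main obstacle is not the calculation but justifying the conditional-independence step under the specific filtration used here: unlike Vanilla Thompson Sampling, at round $t$ the competing arms may be in Phase~II, so their indices $\theta_j(t)$ are maxima of samples drawn in earlier rounds. I expect the cleanest way to handle this is to emphasize that $\mathcal{F}_{t-1}$ records only the counters $h_j(\tau)$, the pulled arms $i_\tau$, and the rewards $X_{i_\tau}(\tau)$---never the realized sample values---so that, conditioned on $F_{t-1}$, arm $1$'s fresh round-$t$ draw is independent of the jointly distributed vector $(\theta_j(t))_{j \ne 1}$, legitimizing the factorization that separates arm $1$ from the rest. Once this point is established, the remainder of the argument is a verbatim adaptation of Lemma~\ref{bandit: lemma 1}.
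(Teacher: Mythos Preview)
Your proposal is correct and follows essentially the same approach as the paper: first observe that $\mathcal{T}_1(t)$ is $\mathcal{F}_{t-1}$-measurable (hence deterministic given $F_{t-1}$), dispose of the trivial case $\bm{1}\{\mathcal{T}_1(t)\}=0$, and then run the two-piece sandwich argument of Lemma~\ref{bandit: lemma 1} under $\bm{1}\{\mathcal{T}_1(t)\}=1$, factoring arm~$1$'s fresh Gaussian draw out of the joint event by conditional independence. Your explicit discussion of why the factorization is legitimate---$\mathcal{F}_{t-1}$ stores counters, pulled arms, and rewards but not sample values, so the round-$t$ fresh draw $\theta_{1,n_1(t-1)}$ is independent of the (possibly Phase~II) indices $(\theta_j(t))_{j\ne 1}$ given $F_{t-1}$---is more careful than the paper, which simply performs the split in its equations~(\ref{Sun 1})--(\ref{Sun 4}) without spelling this out.
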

    With Lemma~\ref{martingale lemma} in hand, we now ready to upper bound term $I_1$.  We have
             \begin{equation}
              \begin{array}{lll}
               
          I_1 & = & \sum\limits_{t=1}^{T} \mathbb{E} \left[\bm{1} \left\{i_t = i, \mathcal{T}_1(t),\mathcal{E}_1^{\mu}(t-1), \mathcal{E}_i^{\mu}(t-1),\mathcal{E}_i^{\theta}(t), n_i(t-1) \ge L_i \right\} \right]\\

          & =  & \sum\limits_{t=1}^{T}  \mathbb{E} \left[\bm{1} \left\{ \mathcal{E}_i^{\mu}(t-1),\mathcal{E}_1^{\mu}(t-1), n_i(t-1) \ge L_i \right\} \cdot  \underbrace{\mathbb{E} \left[   \bm{1} \left\{i_t = i,  \mathcal{T}_1(t),\mathcal{E}_i^{\theta}(t) \right\} \mid \mathcal{F}_{t-1}\right] }_{\text{LHS of Lemma~\ref{martingale lemma}}} \right] \\

& \le   & \sum\limits_{t=1}^{T}  \mathbb{E} \left[\underbrace{\bm{1} \left\{ \mathcal{E}_i^{\mu}(t-1),\mathcal{E}_1^{\mu}(t-1),n_i(t-1) \ge L_i \right\}  \underbrace{  \frac{ \mathbb{P} \left\{\theta_{1,n_1(t-1)} \le \bar{\mu}_i(t-1) \mid \mathcal{F}_{t-1}  \right\} }{ \mathbb{P} \left\{\theta_{1,n_1(t-1)} > \bar{\mu}_i(t-1) \mid \mathcal{F}_{t-1}  \right\} }   \mathbb{E} \left[   \bm{1} \left\{i_t = 1, \mathcal{T}_1(t), \mathcal{E}_i^{\theta}(t) \right\} \mid \mathcal{F}_{t-1} \right] }_{\text{RHS of Lemma~\ref{martingale lemma}}} }_{\eta}\right] \\
& \le^{(a)}   & \sum\limits_{t=1}^{T}  \mathbb{E} \left[ \bm{1} \left\{\mathcal{E}_1^{\mu}(t-1) \right\} \cdot \frac{ \mathbb{P} \left\{\theta_{1,n_1(t-1)} \le \mu_1 - \frac{\Delta_i}{2} \mid \mathcal{F}_{t-1}  \right\} }{ \mathbb{P} \left\{\theta_{1,n_1(t-1)} > \mu_1 - \frac{\Delta_i}{2} \mid \mathcal{F}_{t-1}  \right\} } \cdot  \mathbb{E} \left[   \bm{1} \left\{i_t = 1\right\} \mid \mathcal{F}_{t-1} \right]  \right] \\
& =   & \underbrace{\sum\limits_{t=1}^{T}  \mathbb{E} \left[ \bm{1} \left\{\mathcal{E}_1^{\mu}(t-1) \right\} \cdot  \frac{ \mathbb{P} \left\{\theta_{1,n_1(t-1)} \le \mu_1 - \frac{\Delta_i}{2} \mid \mathcal{F}_{t-1}  \right\} }{ \mathbb{P} \left\{\theta_{1,n_1(t-1)} > \mu_1 - \frac{\Delta_i}{2} \mid \mathcal{F}_{t-1}  \right\} } \cdot     \bm{1} \left\{i_t = 1 \right\} \right] }_{\zeta}\quad,
\end{array}
\end{equation}
where inequality (a) uses the following arguments. For any specific $F_{t-1}$ such that either event $ \mathcal{E}_i^{\mu}(t-1)$ or $ n_i(t-1) \ge L_i$ is false, we have $\eta = 0$. Note that we have $  0 < \frac{ \mathbb{P} \left\{\theta_{1,n_1(t-1)} \le \bar{\mu}_i(t-1) \mid \mathcal{F}_{t-1} = F_{t-1} \right\} }{ \mathbb{P} \left\{\theta_{1,n_1(t-1)} > \bar{\mu}_i(t-1) \mid \mathcal{F}_{t-1} = F_{t-1} \right\} } < + \infty$. Recall $L_i = \frac{ 4\left(\sqrt{2} + \sqrt{5-\alpha} \right)^2 \ln^{1+\alpha} (T)}{\Delta_i^2}$. For any specific $F_{t-1}$ such that both events $\mathcal{E}_i^{\mu}(t-1)$ and $n_i(t-1) \ge L_i $ are true, we have $\bar{\mu}_i(t-1) = \hat{\mu}_{i,n_i(t-1)} + \sqrt{\frac{\left(5-\alpha \right)\ln^{1+\alpha}(T)}{n_i(t-1)}} \le \mu_i + \sqrt{\frac{2\ln(T)}{n_i(t-1)}} + \sqrt{\frac{\left(5-\alpha \right)\ln^{1+\alpha}(T)}{n_i(t-1)}} \le  \mu_i + \sqrt{\frac{ \left(\sqrt{2} + \sqrt{5-\alpha} \right)^2\ln^{1+\alpha}(T)}{n_i(t-1)}} \le \mu_i +  \sqrt{\frac{ \left(\sqrt{2} + \sqrt{5-\alpha} \right)^2\ln^{1+\alpha}(T)}{L_i}}  \le \mu_i + \frac{\Delta_i}{2} = \mu_1 - \frac{\Delta_i}{2}$, which implies 
$ \mathbb{P} \left\{\theta_{1,n_1(t-1)} > \bar{\mu}_i(t-1) \mid \mathcal{F}_{t-1}= F_{t-1}  \right\} \ge  \mathbb{P} \left\{\theta_{1,n_1(t-1)} > \mu_1 - \frac{\Delta_i}{2}\mid \mathcal{F}_{t-1}= F_{t-1}  \right\} $.

Let $L_{1,i} := \frac{32 \ln^{1+\alpha}(T)}{\Delta_i^2}$. Let $\tau_s^{(1)}$ be the round when the optimal arm $1$ is pulled for the $s$-th time. Now, we continue upper bounding $\zeta$ by partitioning all $T$ rounds into multiple intervals based on the pulls of the optimal arm $1$. We have
\begin{equation}
    \begin{array}{lll}
\zeta & \le   & \sum\limits_{s=1}^{T}  \mathbb{E} \left[ \sum\limits_{t= \tau^{(1)}_s +1}^{\tau^{(1)}_{s+1}} \bm{1} \left\{\mathcal{E}_1^{\mu}(t-1) \right\} \cdot   \frac{ \mathbb{P} \left\{\theta_{1,n_1(t-1)} \le \mu_1 - \frac{\Delta_i}{2} \mid \mathcal{F}_{t-1}  \right\} }{ \mathbb{P} \left\{\theta_{1,n_1(t-1)} > \mu_1 - \frac{\Delta_i}{2} \mid \mathcal{F}_{t-1}  \right\} } \cdot  \bm{1} \left\{i_t = 1 \right\}  \right] \\

& \le   & \sum\limits_{s=1}^{T}  \mathbb{E} \left[ \bm{1} \left\{\mathcal{E}_1^{\mu}\left(\tau_{s+1}^{(1)} -1\right) \right\} \cdot   \frac{ \mathbb{P} \left\{\theta_{1,s} \le \mu_1 - \frac{\Delta_i}{2} \mid \mathcal{F}_{\tau^{(1)}_{s+1}-1}  \right\} }{ \mathbb{P} \left\{\theta_{1,s} > \mu_1 - \frac{\Delta_i}{2} \mid \mathcal{F}_{\tau^{(1)}_{s+1}-1}  \right\} }   \right] \\
& \le & \sum\limits_{s=1}^{L_{1,i}}  \mathbb{E} \left[     \frac{ \mathbb{P} \left\{\theta_{1,s} \le \mu_1 - \frac{\Delta_i}{2} \mid \mathcal{F}_{\tau^{(1)}_{s}}  \right\} }{ \mathbb{P} \left\{\theta_{1,s} > \mu_1 - \frac{\Delta_i}{2} \mid \mathcal{F}_{\tau^{(1)}_{s}}  \right\} }   \right] + \sum\limits_{s=L_{1,i}+1}^{T}  \mathbb{E} \left[\underbrace{ \bm{1} \left\{\mathcal{E}_1^{\mu}\left(\tau_s^{(1)}\right) \right\} \cdot   \frac{ \mathbb{P} \left\{\theta_{1,s} \le \mu_1 - \frac{\Delta_i}{2} \mid \mathcal{F}_{\tau^{(1)}_{s}}  \right\} }{ \mathbb{P} \left\{\theta_{1,s} > \mu_1 - \frac{\Delta_i}{2} \mid \mathcal{F}_{\tau^{(1)}_{s}}  \right\} } }_{\lambda}  \right]\\
&& \\
& \le^{(b)} & \sum\limits_{s=1}^{L_{1,i}}  \mathbb{E} \left[     \frac{ \mathbb{P} \left\{\theta_{1,s} \le \mu_1 - \frac{\Delta_i}{2} \mid \mathcal{F}_{\tau^{(1)}_{s}}  \right\} }{ \mathbb{P} \left\{\theta_{1,s} > \mu_1 - \frac{\Delta_i}{2} \mid \mathcal{F}_{\tau^{(1)}_{s}}  \right\} }   \right] + \sum\limits_{s=L_{1,i}+1}^{T}  \mathbb{E} \left[    \frac{ \mathbb{P} \left\{\theta_{1,s} \le \hat{\mu}_{1,s} - \sqrt{\frac{2\ln^{1+\alpha}(T)}{s}} \mid \mathcal{F}_{\tau^{(1)}_{s}}  \right\} }{ \mathbb{P} \left\{\theta_{1,s} > \hat{\mu}_{1,s} - \sqrt{\frac{2\ln^{1+\alpha}(T)}{s}} \mid \mathcal{F}_{\tau^{(1)}_{s}}  \right\} }   \right]\\
&&\\
&= & \sum\limits_{s=1}^{L_{1,i}}  \mathbb{E} \left[     \frac{ \mathbb{P} \left\{\theta_{1,s} \le \mu_1 - \frac{\Delta_i}{2} \mid \mathcal{F}_{\tau^{(1)}_{s}}  \right\} }{ \mathbb{P} \left\{\theta_{1,s} > \mu_1 - \frac{\Delta_i}{2} \mid \mathcal{F}_{\tau^{(1)}_{s}}  \right\} }   \right] + \sum\limits_{s=L_{1,i}+1}^{T}  \mathbb{E} \left[    \frac{1 }{ \mathbb{P} \left\{\theta_{1,s} > \hat{\mu}_{1,s} - \sqrt{2\ln(T)}\sqrt{\frac{\ln^{\alpha}(T)}{s}} \mid \mathcal{F}_{\tau^{(1)}_{s}}  \right\} }  -1 \right]\\
&&\\
& \le^{(c)} & \sum\limits_{s=1}^{L_{1,i}}  \mathbb{E} \left[     \frac{ \mathbb{P} \left\{\theta_{1,s} \le \mu_1 - \frac{\Delta_i}{2} \mid \mathcal{F}_{\tau^{(1)}_{s}}  \right\} }{ \mathbb{P} \left\{\theta_{1,s} > \mu_1 - \frac{\Delta_i}{2} \mid \mathcal{F}_{\tau^{(1)}_{s}}  \right\} }   \right] + \sum\limits_{s=L_{1,i}+1}^{T}  \mathbb{E} \left[    \frac{1 }{1-\frac{0.5}{T} }  -1 \right]\\
&&\\
& \le& \sum\limits_{s=1}^{L_{1,i}}  \mathbb{E} \left[     \frac{ \mathbb{P} \left\{\theta_{1,s} \le \mu_1 - \frac{\Delta_i}{2} \mid \mathcal{F}_{\tau^{(1)}_{s}}  \right\} }{ \mathbb{P} \left\{\theta_{1,s} > \mu_1 - \frac{\Delta_i}{2} \mid \mathcal{F}_{\tau^{(1)}_{s}}  \right\} }   \right] + 1\quad.

    \end{array}
    \end{equation}

Inequality (b) uses the following arguments. For any specific $F_{\tau_s^{(1)}}$ such that event $\mathcal{E}_1^{\mu}\left(\tau_s^{(1)}\right)$ is false, we have $\lambda = 0$. Note that we have $0 < \frac{ \mathbb{P} \left\{\theta_{1,s} \le \mu_1 - \frac{\Delta_i}{2} \mid \mathcal{F}_{\tau^{(1)}_{s}}=F_{\tau_s^{(1)}}  \right\} }{ \mathbb{P} \left\{\theta_{1,s} > \mu_1 - \frac{\Delta_i}{2} \mid \mathcal{F}_{\tau^{(1)}_{s}} =F_{\tau_s^{(1)}} \right\} }  < +\infty$. For any specific $F_{\tau_s^{(1)}}$ such that event $\mathcal{E}_1^{\mu}\left(\tau_s^{(1)}\right)$ is true, we have $\mu_1 - \frac{\Delta_i}{2} \le \hat{\mu}_{1,s} + \sqrt{\frac{2\ln(T)}{s}} - \frac{\Delta_i}{2} \le  \hat{\mu}_{1,s} + \sqrt{\frac{2\ln^{1+\alpha}(T)}{s}} -  \sqrt{\frac{8 \ln^{1+\alpha}(T)}{ s}} =  \hat{\mu}_{1,s} - \sqrt{\frac{2\ln^{1+\alpha}(T)}{s}}$. Note that
from $L_{1,i} = \frac{32 \ln^{1+\alpha}(T)}{\Delta_i^2}$, we have $\frac{\Delta_i}{2} = \sqrt{\frac{32 \ln^{1+\alpha}(T)}{4\cdot L_{1,i}}} \ge \sqrt{\frac{32 \ln^{1+\alpha}(T)}{4\cdot s}} = \sqrt{\frac{8\ln^{1+\alpha}(T)}{ s}} $.
Inequality (c) uses Gaussian concentration bound, which gives $ \mathbb{P} \left\{\theta_{1,s} > \hat{\mu}_{1,s} - \sqrt{2\ln(T)}\sqrt{\frac{\ln^{\alpha}(T)}{s}} \mid \mathcal{F}_{\tau^{(1)}_{s}}  \right\} \ge 1-\frac{0.5}{T}$.

To complete the proof for $\zeta$, we claim the following result stated in Lemma~\ref{Banana 2}. 

\begin{lemma}
\label{Banana 2}
    Let $\tau_s^{(1)}$ be the round when the $s$-th pull of the optimal arm $1$ occurs and $\theta_{1,s} \sim \mathcal{N}\left(\hat{\mu}_{1,s}, \frac{\ln^{\alpha}(T)}{s}\right)$. Then, for any integer $s \ge 1$, we have
    \begin{equation}
    \begin{array}{lll}
          \mathbb{E} \left[     \frac{ \mathbb{P} \left\{\theta_{1,s} \le \mu_1 - \frac{\Delta_i}{2} \mid \mathcal{F}_{\tau^{(1)}_{s}}  \right\} }{ \mathbb{P} \left\{\theta_{1,s} > \mu_1 - \frac{\Delta_i}{2} \mid \mathcal{F}_{\tau^{(1)}_{s}}  \right\} }   \right] & \le & 29\quad. 
    \end{array}
\end{equation}
\end{lemma}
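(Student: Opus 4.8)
The plan is to recognize that, after a one-line algebraic rewrite, the quantity in Lemma~\ref{Banana 2} is exactly the object already controlled in the first case of Lemma~\ref{UBC 22}, only with a rescaled posterior variance. First I would write
\[
\frac{\mathbb{P}\left\{\theta_{1,s} \le \mu_1 - \tfrac{\Delta_i}{2} \mid \mathcal{F}_{\tau_s^{(1)}}\right\}}{\mathbb{P}\left\{\theta_{1,s} > \mu_1 - \tfrac{\Delta_i}{2} \mid \mathcal{F}_{\tau_s^{(1)}}\right\}} = \frac{1}{\mathbb{P}\left\{\theta_{1,s} > \mu_1 - \tfrac{\Delta_i}{2} \mid \mathcal{F}_{\tau_s^{(1)}}\right\}} - 1,
\]
so the target expectation is literally $\mathbb{E}\left[1/\mathbb{P}\{\theta_{1,s} > \mu_1 - \tfrac{\Delta_i}{2} \mid \mathcal{F}_{\tau_s^{(1)}}\} - 1\right]$, the same quantity bounded by $29$ in the $s \ge 1$ case of Lemma~\ref{UBC 22}. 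The only difference is that here $\theta_{1,s} \sim \mathcal{N}(\hat{\mu}_{1,s}, \ln^{\alpha}(T)/s)$ rather than $\mathcal{N}(\hat{\mu}_{1,s}, 1/s)$.

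Second, I would reuse the geometric-random-variable reduction verbatim: introduce $\mathcal{G}_{1,s}$, the number of i.i.d.\ draws of $\theta_{1,s}$ until the event $\theta_{1,s} > \mu_1 - \tfrac{\Delta_i}{2}$ first occurs, so that $\mathbb{E}[1/\mathbb{P}\{\cdots\}] = \mathbb{E}[\mathcal{G}_{1,s}] = \sum_{r \ge 0}\mathbb{E}[\mathbb{P}\{\mathcal{G}_{1,s} > r \mid \hat{\mu}_{1,s}\}]$. It then suffices to re-establish the three-regime tail bound~(\ref{UBC 333}), after which the same summation and integration as in~(\ref{Yoga 3}) produce $29$. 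The crucial observation is that these tail bounds survive the change of variance \emph{unchanged}, provided the threshold offset used in the anti-concentration step is measured in units of the new standard deviation $\sigma := \sqrt{\ln^{\alpha}(T)/s}$. Splitting $\gamma$ at the point $\hat{\mu}_{1,s} + z\sigma$ keeps the Gaussian anti-concentration factor $\mathbb{P}\{\theta_{1,s} > \hat{\mu}_{1,s} + z\sigma\} = \mathbb{P}\{\mathcal{N}(0,1) > z\} \ge \frac{1}{\sqrt{2\pi}}\frac{z}{z^2+1}e^{-z^2/2}$ identical to Lemma~\ref{UBC 22} (Fact~\ref{Fact 1}), while the Hoeffding factor only improves: the absolute deviation needed in $\mathbb{P}\{\hat{\mu}_{1,s} < \mu_1 - z\sigma\}$ is now $z\sqrt{\ln^{\alpha}(T)}\,\sqrt{1/s}$, so Fact~\ref{Hoeffding} yields $e^{-2z^2\ln^{\alpha}(T)} \le e^{-2z^2}$ because $\ln^{\alpha}(T) \ge 1$ for $T \ge e$. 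Since the $1/r$ and $r^{-4/3}$ terms in~(\ref{UBC 333}) arose precisely from this Hoeffding factor and the exponential terms from the anti-concentration factor, with the choices $z = \sqrt{\tfrac12\ln r}$ (for $r \in [13,100]$) and $z = \sqrt{\tfrac23\ln r}$ (for $r \ge 101$), every per-$r$ bound holds verbatim.

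The main obstacle is the temptation to argue via a naive monotonicity of the ratio in the variance, which \emph{fails}: enlarging $\sigma$ raises the ratio when $\hat{\mu}_{1,s} > \mu_1 - \tfrac{\Delta_i}{2}$ (the typical case, where the ratio is already below $1$) and lowers it only in the underestimation tail, so there is no uniform inequality to exploit. The correct and clean route is the rescaling above: by expressing the offset in units of $\sigma$, the anti-concentration contribution is held fixed while the concentration contribution strictly benefits from $\ln^{\alpha}(T) \ge 1$, so each per-$r$ bound is preserved. The only bookkeeping I would double-check is that replacing the event $\{\hat{\mu}_{1,s} + z\sigma \ge \mu_1 - \tfrac{\Delta_i}{2}\}$ by the smaller event $\{\hat{\mu}_{1,s} + z\sigma \ge \mu_1\}$ remains a valid lower bound on $\gamma$, which it does since $\mu_1 > \mu_1 - \tfrac{\Delta_i}{2}$. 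With these checks in place, summing the three regimes reproduces the bound $29$, completing the proof.
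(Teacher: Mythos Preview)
Your proposal is correct and follows essentially the same route as the paper: rewrite the ratio as $1/\mathbb{P}\{\cdots\}-1$, reduce to the geometric variable $\mathcal{G}_{1,s}$, and rederive the three-regime tail bound~(\ref{UBC 333}) with the offset $z\sqrt{\ln^{\alpha}(T)/s}$, noting that the anti-concentration factor is unchanged while the Hoeffding factor only improves since $\ln^{\alpha}(T)\ge 1$. Your explicit remark that naive monotonicity in the variance fails and that the correct mechanism is the $\sigma$-rescaling is a nice clarification, but the argument itself matches the paper's proof step for step.
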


Then, we have
\begin{equation}
    \begin{array}{l}
      I_1  \le \zeta \le  \sum\limits_{s=1}^{L_{1,i}}  \mathbb{E} \left[     \frac{ \mathbb{P} \left\{\theta_{1,s} \le \mu_1 - \frac{\Delta_i}{2} \mid \mathcal{F}_{\tau^{(1)}_{s}}  \right\} }{ \mathbb{P} \left\{\theta_{1,s} > \mu_1 - \frac{\Delta_i}{2} \mid \mathcal{F}_{\tau^{(1)}_{s}}  \right\} }   \right] + 1 
       \le  L_{1,i} \cdot 29 + 1 
      
       \le O \left( \frac{ \ln^{1+\alpha}(T)}{\Delta_i^2}\right) \quad.
    \end{array}
\end{equation}

    \textbf{Upper bound $I_2$.} The regret decomposition for term $I_2$ in (\ref{Friday 3}) is very similar to the one for TS-MA-$\alpha$, as the optimal arm $1$ is using the best posterior sample in the learning in round $t$. Here, for the sub-optimal arm $i$,  we do not define an event to indicate whether the sub-optimal arm $i$ is running Vanilla Thompson Sampling in round $t$ or is using the best posterior sample in the learning in round $t$. This is because once the sub-optimal arm $i$ has been observed enough, it is unlikely that $\theta_i(t)$ will beat $\mu_i + \frac{\Delta_i}{2}$ regardless of whether $\theta_i(t)$ is a fresh posterior sample or the best posterior sample based on the previously drawn samples.
          We have
\begin{equation}
    \begin{array}{lll}
      I_2   & = & \sum\limits_{t=1}^{T} \mathbb{E} \left[\bm{1} \left\{i_t = i,\mathcal{E}_i^{\theta}(t), \overline{\mathcal{T}_1(t)}, \mathcal{E}_i^{\mu}(t-1), \mathcal{E}_1^{\mu}(t-1), n_i(t-1) \ge L_i \right\} \right]    \\
      & = & \sum\limits_{t=1}^{T} \mathbb{E} \left[\bm{1} \left\{i_t = i,\mathcal{E}_i^{\theta}(t), \theta_i(t) \ge \theta_1(t), \overline{\mathcal{T}_1(t)}, \mathcal{E}_i^{\mu}(t-1), \mathcal{E}_1^{\mu}(t-1), n_i(t-1) \ge L_i \right\} \right]  \\
       & \le^{(a)}  & \underbrace{ \sum\limits_{t=1}^{T} \mathbb{E} \left[\bm{1} \left\{i_t = i,\mathcal{E}_i^{\theta}(t), \theta_i(t) \ge \mu_i + \Delta_i,  \mathcal{E}_i^{\mu}(t-1) ,  n_i(t-1) \ge L_i \right\} \right]  }_{=0}
       + \underbrace{\sum\limits_{t=1}^{T} \mathbb{E} \left[\bm{1} \left\{\theta_1(t) < \mu_1, \overline{\mathcal{T}_1(t)}, \mathcal{E}_1^{\mu}(t-1) \right\} \right]}_{\le 1} \\
       & \le & 1\quad.
  \end{array}
\end{equation}
We use contradiction to prove that the first term in inequality (a) is $0$. If both events $\mathcal{E}_i^{\theta}(t)$, $\mathcal{E}_i^{\mu}(t-1)$ are true and $n_i(t-1) \ge L_i$, we have
$\theta_i(t) \le  \hat{\mu}_{i,n_i(t-1)} + \sqrt{\frac{\left(5-\alpha\right) \ln^{1+\alpha}(T)}{n_i(t-1)}} \le \mu_i + \sqrt{\frac{2\ln(T)}{n_i(t-1)}} + \sqrt{\frac{\left(5-\alpha\right) \ln^{1+\alpha}(T)}{n_i(t-1)}} \le \mu_i + \sqrt{\frac{2\ln(T)}{L_i}} + \sqrt{\frac{\left(5-\alpha\right) \ln^{1+\alpha}(T)}{L_i}} \le \mu_i + \frac{\Delta_i}{2}$,
which yields a contradiction with $\theta_i(t) \ge \mu_i+ \Delta_i$.

The second term in (a) reuses a similar analysis as (\ref{Apple 1}). We have
\begin{equation}
    \begin{array}{ll}
        & \sum\limits_{t=1}^{T} \mathbb{E} \left[\bm{1} \left\{\theta_1(t) < \mu_1, \overline{\mathcal{T}_1(t)}, \mathcal{E}_1^{\mu}(t-1) \right\} \right]  \\
       \le   & \sum\limits_{s=1}^{T} \mathbb{E} \left[\sum\limits_{t =\tau_s^{(1)}+1}^{\tau_{s+1}^{(1)}}\bm{1} \left\{\theta_1(t) < \mu_1, \overline{\mathcal{T}_1(t)}, \mathcal{E}_1^{\mu}(t-1) \right\} \right]  \\
   \le^{(a)}   & \sum\limits_{s=1}^{T} \mathbb{E} \left[\underbrace{\sum\limits_{t =\tau_s^{(1)}+1}^{\tau_{s}^{(1)}+\phi}\bm{1} \left\{\theta_1(t) < \mu_1, \overline{\mathcal{T}_1(t)} , \mathcal{E}_1^{\mu}(t-1)\right\}}_{=0}+\sum\limits_{t =\tau_s^{(1)}+\phi+1}^{\tau_{s+1}^{(1)}}\bm{1} \left\{\theta_1(t) < \mu_1, \overline{\mathcal{T}_1(t)},\mathcal{E}_1^{\mu} (t-1)\right\} \right]  \\
      \le   & \sum\limits_{s=1}^{T} \mathbb{E} \left[\sum\limits_{t =\tau_s^{(1)}+\phi+1}^{\tau_{s+1}^{(1)}}\bm{1} \left\{\mathop{\max}\limits_{h \in [\phi]} \theta_{1,s}^{h} < \hat{\mu}_{1, s} + \sqrt{\frac{\ln(T)}{s}}\right\} \right]  \\
       \le   & \sum\limits_{s=1}^{T} \mathbb{E} \left[\sum\limits_{t =1}^{T}  \mathbb{P} \left\{\mathop{\max}\limits_{h \in [\phi]} \theta_{1,s}^{h} < \hat{\mu}_{1, s} + \sqrt{\frac{\ln(T)}{s}} \mid \hat{\mu}_{1,s} \right\}   \right]   \\

               \le   & \sum\limits_{s=1}^{T} \mathbb{E} \left[\sum\limits_{t =1}^{T} \prod\limits_{h \in [\phi]} \underbrace{ \mathbb{P} \left\{\theta_{1,s}^{h} < \hat{\mu}_{1, s} + \sqrt{\frac{\ln(T)}{s}} \mid \hat{\mu}_{1, s} \right\}  }_{\eta} \right]  \\
&\\
        \le & \sum\limits_{s=1}^{T}\sum\limits_{t=1}^{T} \left(1-c_0 \cdot\frac{1}{\sqrt{\ln^{(1-\alpha)}(T) \cdot T^{ (1-\alpha)}  } }\right)^{\phi}  \\

        &\\
         \le &T^2 \cdot  e^{-c_0 \cdot \frac{1}{\sqrt{\ln^{(1-\alpha)}(T) \cdot T^{ (1-\alpha)}  } } \cdot \phi} + 2\\
         \le & 1\quad,
    \end{array}
\end{equation}
where inequality (a) uses the fact for each round $t = \tau_s^{(1)}+1, \dotsc, \tau_s^{(1)}+\phi$, the optimal arm $1$ is using a  fresh posterior sample. So, event $\overline{\mathcal{T}_1(t)}$ cannot occur. The $\eta$ term in (b) reuses the analysis of 
(\ref{Apple 2}).

Putting together the upper bounds for $I_1$ and $I_2$ into $\omega_1$, we have $\omega_1 \le O \left(\frac{\ln^{1+\alpha}\ln(T)}{\Delta_i^2} \right)$. Then, we have
\begin{equation}
    \begin{array}{lllll}
         \mathbb{E}\left[n_i(T) \right] & \le & L_i + \omega_1+\omega_2+\omega_3 &\le& O \left(\frac{\ln^{1+\alpha}\ln(T)}{\Delta_i^2} \right) \quad,  
    \end{array}
\end{equation}
which concludes the proof.


               \begin{proof}[Proof of Lemma~\ref{martingale lemma}]
        
   
        For any $F_{t-1}$, we have
    \begin{equation}
        \begin{array}{ll}
             & \mathbb{E} \left[   \bm{1} \left\{i_t = i,\mathcal{T}_1(t),  \mathcal{E}_i^{\theta}(t) \right\} \mid \mathcal{F}_{t-1} = F_{t-1}\right] \\
             \le & \mathbb{E} \left[   \bm{1} \left\{\theta_i(t) \le \bar{\mu}_i(t-1), \theta_1(t) \le \bar{\mu}_i(t-1), \theta_j(t) \le \bar{\mu}_i(t-1), \forall j \in \mathcal{A} \setminus \{i,1\} ,\mathcal{T}_1(t) \right\}\mid \mathcal{F}_{t-1} = F_{t-1}\right]\\

  = &  \bm{1} \left\{\mathcal{T}_1(t)  \right\} \cdot \mathbb{E}\left[   \bm{1} \left\{\theta_1(t) \le \bar{\mu}_i(t-1), \theta_j(t) \le \bar{\mu}_i(t-1), \forall j \in \mathcal{A} \setminus \{1\} \right\} \mid \mathcal{F}_{t-1} = F_{t-1} \right] \\
    = &  \bm{1} \left\{\mathcal{T}_1(t)   \right\} \cdot \mathbb{E} \left[   \bm{1} \left\{\theta_1(t) \le \bar{\mu}_i(t-1)\right\}\mid \mathcal{F}_{t-1} = F_{t-1} \right] \cdot \mathbb{E} \left[\bm{1}  \left\{\theta_j(t) \le \bar{\mu}_i(t-1), \forall j \in \mathcal{A} \setminus \{1\} \right\} \mid \mathcal{F}_{t-1} = F_{t-1} \right] .  
        \end{array}
        \label{Sun 1}
    \end{equation}
    The first inequality uses the fact that whether event $\mathcal{T}_1(t)$ is true is determined by the history information. Note that if $h_1(t-1) = \phi-1$, then $\bm{1} \left\{\mathcal{T}_1(t)\right\}$ = 1. If $h_1(t-1) \in \left\{0,1,\dotsc,\phi-1\right\}$, then $\bm{1} \left\{\mathcal{T}_1(t)\right\}$ = 0.

 For any $F_{t-1}$, we also have
    \begin{equation}
        \begin{array}{ll}
             &  \mathbb{E} \left[   \bm{1} \left\{i_t = 1,\mathcal{T}_1(t),  \mathcal{E}_i^{\theta}(t) \right\} \mid \mathcal{F}_{t-1} = F_{t-1}\right] \\
           \ge  & \mathbb{E} \left[   \bm{1} \left\{\theta_1(t) > \bar{\mu}_i(t-1) \ge \theta_j(t), \forall j \in \mathcal{A} \setminus \{1\},  \mathcal{T}_1(t) \right\} \mid \mathcal{F}_{t-1} = F_{t-1} \right] \\

           = &  \bm{1} \left\{\mathcal{T}_1(t)  \right\} \cdot \mathbb{E} \left[   \bm{1} \left\{\theta_1(t) > \bar{\mu}_i(t-1),   \theta_j(t) \le \bar{\mu}_i(t-1), \forall j \in \mathcal{A}\setminus \{1\}\right\} \mid \mathcal{F}_{t-1} = F_{t-1} \right]\\
       = &  \bm{1} \left\{\mathcal{T}_1(t)\right\} \cdot \mathbb{E} \left[   \bm{1} \left\{\theta_1(t) > \bar{\mu}_i(t-1)\right\} \mid \mathcal{F}_{t-1} = F_{t-1} \right] \cdot \mathbb{E} \left[\bm{1} \left\{\theta_j(t) \le \bar{\mu}_i(t-1), \forall j \in \mathcal{A} \setminus \{1\}\right\} \mid \mathcal{F}_{t-1} = F_{t-1} \right].
        \end{array}
        \label{Sun 2}
    \end{equation}

    We categorize all the possible $F_{t-1}$ into two groups based on whether 
        $\bm{1} \left\{\mathcal{T}_1(t) \right\} = 0$ or 
    $\bm{1} \left\{\mathcal{T}_1(t)  \right\} = 1$. 
    
    For any $F_{t-1}$ such that $\bm{1} \left\{\mathcal{T}_1(t)\right\} = 0$, 
    combining (\ref{Sun 1}) and (\ref{Sun 2}) yielding
    \begin{equation}
        \begin{array}{ll}
             &\underbrace{\mathbb{E} \left[   \bm{1} \left\{i_t = i,\mathcal{T}_1(t),  \mathcal{E}_i^{\theta}(t) \right\} \mid \mathcal{F}_{t-1} = F_{t-1}\right] }_{=0}
            \\
            & \\
            \le & 
            \underbrace{\frac{ \mathbb{P} \left\{\theta_{1, n_1(t-1)} \le \bar{\mu}_i(t-1)\mid \mathcal{F}_{t-1} = F_{t-1} \right\} }{ \mathbb{P} \left\{\theta_{1,n_1(t-1)} > \bar{\mu}_i(t-1)\mid \mathcal{F}_{t-1} = F_{t-1} \right\} }}_{>0} \cdot \underbrace{ \mathbb{E} \left[   \bm{1} \left\{i_t = 1,\mathcal{T}_1(t),  \mathcal{E}_i^{\theta}(t) \right\} \mid \mathcal{F}_{t-1} = F_{t-1}\right]}_{=0}.
        \end{array}
    \end{equation}
 Note that the coefficient $\frac{ \mathbb{P} \left\{\theta_{1, n_1(t-1)} \le \bar{\mu}_i(t-1)\mid \mathcal{F}_{t-1} = F_{t-1} \right\} }{ \mathbb{P} \left\{\theta_{1,n_1(t-1)} > \bar{\mu}_i(t-1)\mid \mathcal{F}_{t-1} = F_{t-1} \right\} } $ is always positive. 

   For any $F_{t-1}$ such that $\bm{1} \left\{\mathcal{T}_1(t) \right\} = 1$, from (\ref{Sun 1}) and (\ref{Sun 2}), we have
   \begin{equation}
       \begin{array}{ll}
      &   \mathbb{E} \left[   \bm{1} \left\{i_t = i,\mathcal{T}_1(t),  \mathcal{E}_i^{\theta}(t) \right\} \mid \mathcal{F}_{t-1} = F_{t-1}\right]  \\
      \le &   \mathbb{P}     \left\{\theta_{1,n_1(t-1)} \le \bar{\mu}_i(t-1) \mid \mathcal{F}_{t-1} = F_{t-1} \right\} \cdot \mathbb{P}  \left\{\theta_j(t) \le \bar{\mu}_i(t-1), \forall j \in \mathcal{A} \setminus \{1\} \mid \mathcal{F}_{t-1} = F_{t-1} \right\}    
       \end{array}
       \label{Sun 3}
   \end{equation}
   and
      \begin{equation}
       \begin{array}{ll}
         & \mathbb{E} \left[   \bm{1} \left\{i_t = 1,\mathcal{T}_1(t),  \mathcal{E}_i^{\theta}(t) \right\} \mid \mathcal{F}_{t-1} = F_{t-1}\right] \\
          \ge &  \underbrace{ \mathbb{P}     \left\{\theta_{1,n_1(t-1)} > \bar{\mu}_i(t-1) \mid \mathcal{F}_{t-1} = F_{t-1}\right\}}_{>0} \cdot \mathbb{P}  \left\{\theta_j(t) \le \bar{\mu}_i(t-1), \forall j \in \mathcal{A} \setminus \{1\} \mid \mathcal{F}_{t-1} = F_{t-1} \right\}.
            
       \end{array}
       \label{Sun 4}
   \end{equation}
   Combining (\ref{Sun 3}) and (\ref{Sun 4}) concludes the proof. Note that $ \mathbb{P}     \left\{\theta_{1,n_1(t-1)} \le \bar{\mu}_i(t-1) 
 \mid \mathcal{F}_{t-1} = F_{t-1}\right\}$ and $ \mathbb{P}_{t-1}     \left\{\theta_{1,n_1(t-1)} > \bar{\mu}_i(t-1) \mid \mathcal{F}_{t-1} = F_{t-1}\right\}$ are both positive. 
\end{proof}

\begin{proof}[Proof of Lemma~\ref{Banana 2}]
    The proof will be very similar to the proofs for Lemma~\ref{UBC 22}. Still, we let $\mathcal{G}_{1,s}$ be a geometric random variable denoting the number of consecutive independent trials until event  $\theta_{1,s} > \mu_1 - \frac{\Delta_i}{2}$  occurs.
Then, we have  
$\mathbb{E}\left[\frac{1}{\mathbb{P} \left\{ \theta_{1,s} > \mu_1 - \frac{\Delta_i}{2}\mid \mathcal{F}_{\tau_{s}^{(1)}}  \right\} }-1   \right] 
             =    \mathbb{E} \left[\mathcal{G}_{1,s} \right]-1$. 
  For a fixed $s \ge 1$, we use the definition of expectation and have
  \begin{equation}
    \begin{array}{lllll}
            \mathbb{E} \left[\mathcal{G}_{1,s} \right] & = & 
        \sum\limits_{r = 0}^{\infty} \mathbb{P} \left\{\mathcal{G}_{1,s} > r  \right\} &= & \sum\limits_{r = 0}^{\infty} \mathbb{E} \left[\mathbb{P} \left\{\mathcal{G}_{1,s} > r \mid \hat{\mu}_{1,s} \right\}  \right] \quad. 
         \end{array}
\end{equation}
Now, we claim the following results.
For any $s \ge 1$, we have
\begin{equation}
\begin{array}{lll}   
   \mathbb{E} \left[\mathbb{P} \left\{\mathcal{G}_{1,s} > r \mid \hat{\mu}_{1,s} \right\}  \right] & \le
&
\begin{cases}
   1,& r \in [0,12] \quad,\\
   e^{- \frac{\ln(13)}{\ln(13)+2} \sqrt{\frac{r}{\pi}} }  + r^{-1}, &\text{$r \in [13, 100]$}  \quad , \\               
e^{- \sqrt{\frac{r}{3\pi}} }  + r^{-\frac{4}{3}}   ,              & \text{$r \ge 101$}\quad.
\end{cases}
\end{array}
\label{UBC 33}
\end{equation}
 To prove the results shown in (\ref{UBC 33}), we express the LHS in (\ref{UBC 33}) as 
\begin{equation}
            \begin{array}{lllll}
         \mathbb{E} \left[ \mathbb{P} \left\{\mathcal{G}_{1,s} > r \mid \hat{\mu}_{1,s}\right\} \right] &
          =& 1- \mathbb{E} \left[\mathbb{P} \left\{\mathcal{G}_{1,s} \le r \mid \hat{\mu}_{1,s} \right\}\right]    

          &=&1- \underbrace{\mathbb{E} \left[\mathbb{E} \left[ \bm{1} \left\{ \mathcal{G}_{1,s} \le r  \right\} \mid \hat{\mu}_{1,s} \right]\right] }_{=:\gamma} \quad.
          \end{array}
          \end{equation}
Now, we construct lower bounds for $\gamma$ for three cases. 
          
\textbf{When integer $r \in [0,12]$}, the proof is trivial as $\gamma \ge 0 $, which implies $\mathbb{E} \left[ \mathbb{P} \left\{\mathcal{G}_{1,s} > r \mid \hat{\mu}_{1,s}\right\} \right] \le 1$.

\textbf{When integer $r \in [13,100]$}, we introduce $z = \sqrt{0.5 \ln(r)}$ and have
\begin{equation}
        \begin{array}{lll}
 \gamma &
  = & \mathbb{E} \left[\mathbb{E} \left[ \bm{1} \left\{ \mathcal{G}_{1,s} \le r  \right\} \mid \hat{\mu}_{1,s} \right]\right]  \\
& \ge & 
\mathbb{E} \left[\mathbb{E} \left[ \bm{1}\left\{ \mathop{\max}\limits_{h \in [r]} \theta_{1,s}^{h} > \mu_1 - \frac{\Delta_i}{2}  \right\} \mid \hat{\mu}_{1,s}  \right]\right] \\
& \ge &
 \mathbb{E} \left[
\mathbb{E} \left[\bm{1} \left\{\hat{\mu}_{1,s} + z\sqrt{\frac{\ln^{\alpha}(T)}{s}} \ge \mu_1 - \frac{\Delta_i}{2} \right\} \bm{1}\left\{ \mathop{\max}\limits_{h \in [r]} \theta_{1,s}^{h} >\hat{\mu}_{1,s} +z \sqrt{\frac{\ln^{\alpha}(T)}{s}} \right\} \mid \hat{\mu}_{1,s}  \right]\right] \\
& = &
 \mathbb{E} \left[\bm{1} \left\{\hat{\mu}_{1,s} +z \sqrt{\frac{\ln^{\alpha}(T)}{s}} \ge \mu_1 - \frac{\Delta_i}{2} \right\} \cdot
\mathbb{P} \left\{ \mathop{\max}\limits_{h \in [r]} \theta_{1,s}^{h} >\hat{\mu}_{1,s} +z\sqrt{\frac{\ln^{\alpha}(T)}{s}}  \mid \hat{\mu}_{1,s} \right\} \right] \\
   &          \ge^{(a)} & \mathbb{E} \left[\bm{1} \left\{\hat{\mu}_{1,s} +\sqrt{0.5\ln(r)} \sqrt{\frac{\ln^{\alpha}(T)}{s}} \ge \mu_1 \right\} \right] \cdot   \left(1- \left( 1- \frac{1}{\sqrt{2\pi}} \frac{\sqrt{\frac{1}{2}\ln(r)}}{\frac{1}{2}\ln(r)+1} e^{- \frac{1}{4}\ln(r)} \right)^r \right) \\
                  & \ge^{(b)} & \left(1-e^{-\ln(r)} \right) \cdot  \left(1- e^{-r \cdot \frac{1}{\sqrt{2\pi}} \frac{\sqrt{\frac{1}{2}\ln(r)}}{\frac{1}{2}\ln(r)+1} \cdot r^{-\frac{1}{4}}}\right) \\
                         & \ge^{(c)} & \left(1-r^{-1} \right)   \cdot \left(1- e^{-r^{\frac{1}{2}} \cdot  \frac{1}{\sqrt{2\pi}} \frac{\sqrt{\frac{1}{2}\ln(r)}}{\frac{1}{2}\ln(r)+\frac{1}{\ln(13)}\ln(r)} \cdot r^{\frac{1}{4}} }\right) \\
                            & = & \left(1-r^{-1} \right)   \cdot \left(1- e^{-r^{\frac{1}{2}} \cdot \frac{1}{\sqrt{\pi}} \cdot \frac{\ln(13)}{\ln(13)+2}\frac{\sqrt{r^{\frac{1}{2}}}}{\sqrt{\ln(r)}} } \right)\\
                            & \ge^{(d)} & \left(1-r^{-1} \right)   \cdot \left(1-e^{-r^{\frac{1}{2}} \cdot \frac{1}{\sqrt{\pi}} \cdot \frac{\ln(13)}{\ln(13)+2} }\right)\\
                            & \ge & 1- r^{-1} - e^{-r^{\frac{1}{2}} \cdot \frac{1}{\sqrt{\pi}} \cdot \frac{\ln(13)}{\ln(13)+2} }\quad,
\end{array}
\end{equation}
which implies $\mathbb{E} \left[ \mathbb{P} \left\{\mathcal{G}_{1,s} > r \mid \hat{\mu}_{1,s}\right\} \right] =1-\gamma \le r^{-1} + e^{-r^{\frac{1}{2}} \cdot \frac{1}{\sqrt{\pi}} \cdot \frac{\ln(13)}{\ln(13)+2} }$.

Inequality (a) uses anti-concentration bounds of Gaussian distributions and inequality (b) uses Hoeffding's inequality and $e^{-x}\ge 1-x$. Inequalities (c) and (d) use $1 \le \ln(r)/\ln(13)$ and $\sqrt{r^{0.5}/\ln(r)}\ge 1$, when $r\ge 13$.

\textbf{When  integer $r \ge 101$},  we introduce  $ z = \sqrt{\frac{2}{3} \ln r}$ and  have
 \begin{equation}
        \begin{array}{lll}
 \gamma &
  = & \mathbb{E} \left[\mathbb{E} \left[ \bm{1} \left\{ \mathcal{G}_{1,s} \le r  \right\} \mid \hat{\mu}_{1,s} \right]\right]  \\
& \ge & 
\mathbb{E} \left[\mathbb{E} \left[ \bm{1}\left\{ \mathop{\max}\limits_{h \in [r]} \theta_{1,s}^{h} > \mu_1 - \frac{\Delta_i}{2}  \right\} \mid \hat{\mu}_{1,s}  \right]\right] \\
& \ge &
 \mathbb{E} \left[\bm{1} \left\{\hat{\mu}_{1,s} +z \sqrt{\frac{\ln^{\alpha}(T)}{s}} \ge \mu_1 - \frac{\Delta_i}{2} \right\} \cdot
\underbrace{\mathbb{P} \left\{ \mathop{\max}\limits_{h \in [r]} \theta_{1,s}^{h} >\hat{\mu}_{1,s} +z\sqrt{\frac{\ln^{\alpha}(T)}{s}}  \mid \hat{\mu}_{1,s} \right\} }_{=:\beta}\right] \quad.
\end{array}
\end{equation}

 We construct a lower bound for $\beta$. We have     \begin{equation}
            \begin{array}{lll}
            \beta & = &  
               1- \prod\limits_{h \in [r]} \left( 1- \mathbb{P} \left\{ \theta_{1,s}^{h} > \hat{\mu}_{1,s} +z \sqrt{\frac{ \ln^{\alpha}(T)}{s}}  \mid \hat{\mu}_{1,s} \right\}   \right)  \\
                &           = & 1- \left( 1- \mathbb{P} \left\{ \theta_{1,s} > \hat{\mu}_{1,s} +z\sqrt{\frac{ \ln^{\alpha}(T)}{s}}  \mid \hat{\mu}_{1,s} \right\}\right)^r  \\
                 &          \ge &    1- \left( 1- \frac{1}{\sqrt{2\pi}} \frac{z}{z^2+1} e^{-0.5  z^2} \right)^r  \\
                  &         \ge & 1- e^{-r \cdot \frac{1}{\sqrt{2\pi}} \frac{z}{z^2+1} e^{-0.5  z^2}}\\
                     &         = & 1- e^{-r \cdot \frac{1}{\sqrt{2\pi}} \frac{\sqrt{\frac{2}{3}\ln(r)}}{\frac{2}{3}\ln(r)+1} e^{-\frac{1}{2} \cdot \frac{2}{3}\ln(r)}}\\
                     & \ge & 1-e^{-r^{\frac{1}{2}} \cdot \frac{1}{\sqrt{3\pi}} \sqrt{\frac{r^{\frac{1}{3}} }{\ln r}} } \\
                     & \ge & 1- e^{-\sqrt{\frac{r}{3\pi} }} \quad.
            \end{array}
        \end{equation}

Then, we have
\begin{equation}
\begin{array}{lll}
\gamma & \ge &
 \left(1-e^{-\sqrt{\frac{r}{3\pi} }}\right) \cdot \mathbb{P}  \left\{\hat{\mu}_{1,s} +\sqrt{\frac{\ln r}{1.5}} \sqrt{\frac{  \ln^{\alpha}(T)}{s}} \ge \mu_1 \right\}
  \\
   &\ge &  \left(1-e^{-\sqrt{\frac{r}{3\pi} }} \right) \cdot \left(1- r^{-\frac{4}{3}}\right)\\
 &  \ge & 1- e^{-\sqrt{\frac{r}{3\pi} }}- r^{-\frac{4}{3}}\quad,
\end{array}
\end{equation}
which implies
$\mathbb{E} \left[ \mathbb{P} \left\{\mathcal{G}_{1,s} > r \mid \hat{\mu}_{1,s}\right\} \right] =1-\gamma \le   e^{-\sqrt{\frac{r}{3\pi} }} + r^{-\frac{4}{3}}$.

Finally, we have
 
    \begin{equation}
         \begin{array}{ll}
&\mathbb{E}\left[\frac{1}{\mathbb{P} \left\{ \theta_{1,s} > \mu_1 - \frac{\Delta_i}{2}\mid \mathcal{F}_{\tau_{s}^{(1)}}  \right\} }-1   \right] \\
=& 
\mathbb{E} \left[\mathcal{G}_{1,s} \right]-1 \\
         = & \sum\limits_{r = 0}^{\infty} \mathbb{E} \left[\mathbb{P} \left\{\mathcal{G}_{1,s} > r \mid \hat{\mu}_{1,s} \right\}  \right] -1\\
               = & \sum\limits_{r = 0}^{12} \mathbb{E} \left[\mathbb{P} \left\{\mathcal{G}_{1,s} > r \mid \hat{\mu}_{1,s} \right\}  \right] + \sum\limits_{r = 13}^{100} \mathbb{E} \left[\mathbb{P} \left\{\mathcal{G}_{1,s} > r \mid \hat{\mu}_{1,s} \right\}  \right] + \sum\limits_{r = 101}^{\infty} \mathbb{E} \left[\mathbb{P} \left\{\mathcal{G}_{1,s} > r \mid \hat{\mu}_{1,s} \right\}  \right]-1 \\
            \le & 13 + \sum\limits_{r=13}^{100} \left(e^{-\sqrt{\frac{r  }{\pi}} \cdot \frac{\ln(13)}{\ln(13)+2} } + \frac{1}{r} \right) + \sum\limits_{r=101}^{\infty} \left(e^{- \sqrt{\frac{r}{3\pi}} } + \frac{1}{r^{\frac{4}{3}}} \right) -1\\
             \le & 12 + \int_{12}^{100} \left(e^{-\sqrt{\frac{r  }{\pi}} \cdot \frac{\ln(13)}{\ln(13)+2} } + \frac{1}{r}\right) dr+ \int_{100}^{\infty} \left(e^{- \sqrt{\frac{r}{3\pi}} }  + \frac{1}{r^{\frac{4}{3}}} \right)dr \\

\le & 12 + 10.44 +2.13 + 3.1 + 0.65 \\
\le & 29 \quad,    
        \end{array} 
        \end{equation}
which concludes the proof.\end{proof}

\begin{lemma}
    We have 
    \begin{equation}
        \begin{array}{lll}
              \sum\limits_{t=1}^{T} \mathbb{E} \left[\bm{1} \left\{i_t = i,  \overline{\mathcal{E}_i^{\theta}(t)}, n_i(t-1) \ge L_i \right\} \right] & \le & \frac{1}{c_0}\quad.
        \end{array}
    \end{equation}
    \label{Friday 2}
\end{lemma}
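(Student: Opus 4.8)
The plan is to reuse the template that controls the over-estimation terms in Lemma~\ref{Eric 3} and in the proof of Theorem~\ref{Theorem: Alpha-TS}, adapting it to the two-phase sampling structure of TS-TD-$\alpha$. First I would partition the $T$ rounds according to the pulls of arm $i$: letting $\tau_s^{(i)}$ denote the round of the $s$-th pull of arm $i$, the posterior $\mathcal{N}(\hat{\mu}_{i,s},\ln^{\alpha}(T)/s)$ of arm $i$ stays frozen throughout the block $\{\tau_s^{(i)}+1,\dots,\tau_{s+1}^{(i)}\}$, since no new reward of arm $i$ is observed there. Because the indicator requires $i_t=i$ and $n_i(t-1)\ge L_i$, and arm $i$ is pulled only at the terminal round $\tau_{s+1}^{(i)}$ of a block (where $n_i(\tau_{s+1}^{(i)}-1)=s$), each block $s\ge L_i$ contributes at most one term, giving
\begin{equation*}
\sum_{t=1}^{T}\mathbb{E}\!\left[\bm{1}\!\left\{i_t=i,\overline{\mathcal{E}_i^{\theta}(t)},n_i(t-1)\ge L_i\right\}\right]\le \sum_{s=L_i}^{T}\mathbb{E}\!\left[\bm{1}\!\left\{\overline{\mathcal{E}_i^{\theta}\!\left(\tau_{s+1}^{(i)}\right)}\right\}\right].
\end{equation*}

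The key step, and the one demanding care, is bounding $\mathbb{P}\{\theta_i(\tau_{s+1}^{(i)})>\bar{\mu}_i(\tau_{s+1}^{(i)}-1)\}$ uniformly across the two phases. During the block, exactly $\phi$ fresh i.i.d.\ samples $\theta_{i,s}^{h}\sim\mathcal{N}(\hat{\mu}_{i,s},\ln^{\alpha}(T)/s)$ are drawn in Phase~I, and Phase~II commits to their maximum. If the terminal round is a Phase~I round, $\theta_i(\tau_{s+1}^{(i)})$ is one of these samples; if it is a Phase~II round, $\theta_i(\tau_{s+1}^{(i)})$ is exactly their maximum. In both cases $\theta_i(\tau_{s+1}^{(i)})\le \max_{h\in[\phi]}\theta_{i,s}^{h}$ pointwise, so conditioning on $\hat{\mu}_{i,s}$ and applying a union bound,
\begin{equation*}
\mathbb{P}\!\left\{\theta_i\!\left(\tau_{s+1}^{(i)}\right)>\bar{\mu}_i\;\middle|\;\hat{\mu}_{i,s}\right\}\le \phi\,\mathbb{P}\!\left\{\theta_{i,s}^{1}>\hat{\mu}_{i,s}+\sqrt{\tfrac{(5-\alpha)\ln^{1+\alpha}(T)}{s}}\;\middle|\;\hat{\mu}_{i,s}\right\}.
\end{equation*}
Since $\bar{\mu}_i-\hat{\mu}_{i,s}=\sqrt{(5-\alpha)\ln(T)}\cdot\sqrt{\ln^{\alpha}(T)/s}$ is precisely $\sqrt{(5-\alpha)\ln(T)}$ standard deviations above the mean, the Gaussian concentration bound (Fact~\ref{Fact 1}) yields a per-sample tail of $\tfrac12 T^{-(5-\alpha)/2}$.

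Finally I would substitute $\phi=\tfrac{2}{c_0}T^{0.5(1-\alpha)}\ln^{0.5(3-\alpha)}(T)$ and collapse the exponent via $0.5(1-\alpha)-\tfrac{5-\alpha}{2}=-2$, so each block contributes at most $\tfrac{1}{c_0}\ln^{0.5(3-\alpha)}(T)\,T^{-2}\le\tfrac{1}{c_0 T}$, using $\ln^{0.5(3-\alpha)}(T)\le\ln^{1.5}(T)\le T$ (the same elementary bound invoked in the proof of Theorem~\ref{Theorem: Alpha-TS}). Summing the at most $T$ blocks then gives the claimed $\tfrac{1}{c_0}$. I expect the only genuinely delicate point to be the phase-agnostic domination argument, namely justifying that whether the terminal round of a block falls in Phase~I or Phase~II the relevant sample is pointwise dominated by the maximum of the $\phi$ Phase-I draws; once that is settled, the remaining computation is essentially identical to the over-estimation bound already carried out for TS-MA-$\alpha$.
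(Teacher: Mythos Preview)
Your proposal is correct and mirrors the paper's own proof essentially step for step: partition by pulls of arm $i$, reduce each block to the single terminal round, dominate $\theta_i(\tau_{s+1}^{(i)})$ by $\max_{h\in[\phi]}\theta_{i,s}^{h}$ regardless of phase, apply the Gaussian tail from Fact~\ref{Fact 1}, and collapse the exponents exactly as you describe. The phase-agnostic domination you flag as delicate is indeed the only non-routine point, and the paper handles it the same way (simply writing $\theta_i(\tau_{s+1}^{(i)})\le\max_{h\in[\phi]}\theta_{i,s}^{h}$ without further comment).
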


\begin{lemma}
    We have 
    \begin{equation}
        \begin{array}{lll}
    \sum\limits_{t=1}^{T} \mathbb{E} \left[\bm{1} \left\{i_t = i,  \overline{\mathcal{E}_i^{\mu}(t-1)}, n_i(t-1) \ge L_i \right\} \right] & \le & 1\quad. 
        \end{array}
    \end{equation}
       \label{Friday 22}
     
\end{lemma}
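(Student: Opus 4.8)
The plan is to mirror the argument used for Lemma~\ref{Eric 2} in the Vanilla Thompson Sampling analysis, since the only structural difference is the width of the confidence band defining $\mathcal{E}_i^{\mu}(t-1)$. First I would partition the $T$ rounds according to the pulls of arm $i$: let $\tau_s^{(i)}$ denote the round of the $s$-th pull of arm $i$. The key observation is that the empirical mean $\hat{\mu}_{i,n_i(t-1)}$, and hence the event $\mathcal{E}_i^{\mu}(t-1)$, remains unchanged for all rounds $t \in \{\tau_s^{(i)}+1,\dotsc,\tau_{s+1}^{(i)}\}$, so within each block the indicator $\bm{1}\{i_t=i,\,\overline{\mathcal{E}_i^{\mu}(t-1)}\}$ can equal $1$ at most once, namely at the terminal round $\tau_{s+1}^{(i)}$ where arm $i$ is pulled again, and there $n_i(\tau_{s+1}^{(i)}-1)=s$.

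Using this, I would upper bound the sum by $\sum_{s=L_i}^{T}\mathbb{E}\bigl[\bm{1}\{\overline{\mathcal{E}_i^{\mu}(\tau_{s+1}^{(i)}-1)}\}\bigr]$, where the range starts at $s=L_i$ because of the constraint $n_i(t-1)\ge L_i$. Since $\hat{\mu}_{i,s}$ is the average of $s$ i.i.d.\ rewards supported on $[0,1]$ with mean $\mu_i$, each summand is exactly $\mathbb{P}\{|\hat{\mu}_{i,s}-\mu_i|>\sqrt{2\ln(T)/s}\}$.

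Next I would apply Hoeffding's inequality (Fact~\ref{Hoeffding}) with $z=\sqrt{2\ln(T)}$, so that $z\sqrt{1/s}=\sqrt{2\ln(T)/s}$ matches the confidence width exactly. This gives $\mathbb{P}\{|\hat{\mu}_{i,s}-\mu_i|>\sqrt{2\ln(T)/s}\}\le 2e^{-2z^2}=2e^{-4\ln(T)}=2/T^{4}$ for every $s$. Summing over at most $T$ values of $s$ yields $\sum_{s=L_i}^{T}2/T^{4}\le T\cdot 2/T^{4}=2/T^{3}\le 1$, which is precisely the claimed bound.

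The argument is essentially mechanical, and the only care needed is the standard between-pulls partitioning step together with verifying that the choice of $z$ produces a tail small enough, here of order $T^{-4}$, so that summing $T$ terms and discarding the lower cutoff on $s$ still stays below $1$. I do not anticipate any genuine obstacle beyond this bookkeeping; unlike the $\omega_1$ term in the main proof, no anti-concentration estimate or optimal-arm coupling is required.
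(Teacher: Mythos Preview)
Your proposal is correct and matches the paper's own proof essentially step for step: the same between-pulls partitioning via $\tau_s^{(i)}$, the same reduction to $\sum_{s=L_i}^{T}\mathbb{P}\{|\hat{\mu}_{i,s}-\mu_i|>\sqrt{2\ln(T)/s}\}$, and the same Hoeffding application with $z=\sqrt{2\ln(T)}$ yielding the $2/T^4$ tail.
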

\begin{lemma}
    We have
    \begin{equation}
        \begin{array}{lll}
               \sum\limits_{t=1}^{T} \mathbb{E} \left[\bm{1} \left\{i_t = i, \overline{\mathcal{E}_1^{\mu}(t-1)}, n_i(t-1) \ge L_i \right\} \right] & \le & 1\quad.
        \end{array}
    \end{equation}
    \label{Eric 99}
\end{lemma}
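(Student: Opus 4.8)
The plan is to bound $\omega_4$ by partitioning the horizon according to the pulls of the optimal arm $1$, exactly as in the analyses of $\omega_1$ and $I_1$. Let $\tau_s^{(1)}$ denote the round of the $s$-th pull of arm $1$, and let $I_s := \{\tau_s^{(1)}+1,\dotsc,\tau_{s+1}^{(1)}\}$ collect the rounds between the $s$-th and $(s+1)$-th pulls of arm $1$. Throughout $I_s$ we have $n_1(t-1)=s$, so $\hat{\mu}_{1,n_1(t-1)}=\hat{\mu}_{1,s}$ is frozen and the event $\overline{\mathcal{E}_1^{\mu}(t-1)}$ is constant on $I_s$, equal to $B_s:=\{|\hat{\mu}_{1,s}-\mu_1|>\sqrt{2\ln(T)/s}\}$. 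Writing $N_{i,s}:=\sum_{t\in I_s}\bm{1}\{i_t=i,\,n_i(t-1)\ge L_i\}$ for the number of qualifying pulls of arm $i$ inside $I_s$, I can reorganize the target sum as $\sum_{s\ge 1}\bm{1}\{B_s\}\,N_{i,s}$.

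The first key step is a deterministic (pathwise) bound $N_{i,s}\le T$, which lets me pull the interval count out without worrying about its correlation with $B_s$: pointwise $\bm{1}\{B_s\}\,N_{i,s}\le T\,\bm{1}\{B_s\}$, and hence $\omega_4\le T\sum_{s=1}^{T}\mathbb{P}\{B_s\}$. The second key step is Hoeffding's inequality (Fact~\ref{Hoeffding}): since $\hat{\mu}_{1,s}$ is the average of $s$ i.i.d.\ $[0,1]$ observations of arm $1$, taking $z=\sqrt{2\ln T}$ gives $\mathbb{P}\{B_s\}\le 2e^{-2\cdot 2\ln T}=2/T^4$. As there are at most $T$ distinct values of $s$, summing yields $\omega_4\le T\cdot T\cdot (2/T^4) = 2/T^2\le 1$.

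The only subtle point — and the step I would be most careful about — is that $\overline{\mathcal{E}_1^{\mu}}$ is an event about arm $1$ whereas the sum counts pulls of the sub-optimal arm $i$; between two consecutive pulls of arm $1$ there may be arbitrarily many pulls of arm $i$, and the bad event is either on or off for the entire interval. This is precisely why a naive ``$\bm{1}\{B_s\}$ per interval'' bound (as used for the arm-$i$ terms, which partition by pulls of $i$) is not available here, and why the interval count $N_{i,s}$, bounded crudely by $T$, must be retained. The saving grace is that the threshold $\sqrt{2\ln(T)/s}$ in the definition of $\mathcal{E}_1^{\mu}$ is deliberately generous, so Hoeffding delivers the very small probability $2/T^4$ with two powers of $T$ to spare: one absorbs the interval count $T$ and the other absorbs the sum over the $T$ values of $s$. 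The companion Lemmas~\ref{Friday 2} and~\ref{Friday 22} for $\omega_2,\omega_3$ follow by the same partition-and-concentration template, using Gaussian concentration (Fact~\ref{Fact 1}) and Hoeffding (Fact~\ref{Hoeffding}) respectively.
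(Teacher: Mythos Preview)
Your proposal is correct and follows essentially the same approach as the paper. The paper's proof is more terse: it simply drops the indicators $\bm{1}\{i_t=i\}$ and $\bm{1}\{n_i(t-1)\ge L_i\}$, union-bounds the bad event over all possible values $s=1,\dotsc,T$ of $n_1(t-1)$, and writes the resulting double sum $\sum_{t=1}^{T}\sum_{s=1}^{T}\mathbb{P}\{|\hat{\mu}_{1,s}-\mu_1|>\sqrt{2\ln(T)/s}\}\le T^2\cdot 2e^{-4\ln T}\le 1$; your interval partition with $N_{i,s}\le T$ is just a more explicit way of arriving at the same $T\cdot T\cdot 2/T^4$ arithmetic.
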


\begin{proof}[Proof of Lemma~\ref{Friday 2}]
     Let $\tau_s^{(i)}$ be the round when the $s$-th pull of the optimal arm $i$ occurs.  
     We have     \begin{equation}
               \begin{array}{ll}
                   &  \sum\limits_{t=1}^{T} \mathbb{E} \left[\bm{1} \left\{i_t = i,  \overline{\mathcal{E}_i^{\theta}(t)}, n_i(t-1) \ge L_i \right\} \right] \\
                  \le    & \sum\limits_{s=L_i}^{T} \mathbb{E} \left[\sum\limits_{t = \tau_s^{(i)}+1}^{\tau_{s+1}^{(i)}}\bm{1} \left\{i_t = i, \overline{\mathcal{E}_i^{\theta}(t)}\right\} \right] \\
                      =    & \sum\limits_{s=L_i}^{T} \mathbb{E} \left[\sum\limits_{t = \tau_s^{(i)}+1}^{\tau_{s+1}^{(i)}}\bm{1} \left\{i_t = i,  \theta_i(t) > \hat{\mu}_{i,n_i(t-1)}  + \sqrt{\frac{\left(5-\alpha \right)\ln^{1+\alpha}(T)}{n_i(t-1)}}  \right\} \right] \\
   \le  & \sum\limits_{s=L_i}^{T} \mathbb{E} \left[\bm{1} \left\{ \theta_i\left(\tau_{s+1}^{(i)}\right) > \hat{\mu}_{i,n_i\left(\tau_{s+1}^{(i)}-1\right)}  + \sqrt{\frac{\left(5-\alpha \right)\ln^{1+\alpha}(T)}{n_i\left(\tau_{s+1}^{(i)}-1\right)}}  \right\} \right] \\
 =  & \sum\limits_{s=L_i}^{T} \mathbb{E} \left[\bm{1} \left\{ \theta_i\left(\tau_{s+1}^{(i)}\right) > \hat{\mu}_{i,s}  + \sqrt{\frac{\left(5-\alpha \right)\ln^{1+\alpha}(T)}{s}}  \right\} \right] \\
 \le & \sum\limits_{s=L_i}^{T} \mathbb{E} \left[\bm{1} \left\{\mathop{\max}\limits_{h \in [\phi]} \theta_{i,s}^{h} > \hat{\mu}_{i,s}  + \sqrt{\frac{\left(5-\alpha \right)\ln^{1+\alpha}(T)}{s}}  \right\} \right] \\
 \le & \sum\limits_{s=L_i}^{T} \sum\limits_{h \in [\phi]} \mathbb{P}  \left\{ \theta_{i,s}^{h} > \hat{\mu}_{i,s}  + \sqrt{\frac{\left(5-\alpha \right) \ln^{1+\alpha}(T)}{s}}  \right\} \\
 = & \sum\limits_{s=L_i}^{T} \sum\limits_{h \in [\phi]} \mathbb{E} \left[\mathbb{P}  \left\{ \theta_{i,s}^{h} > \hat{\mu}_{i,s}  + \sqrt{\frac{\left(5-\alpha \right)\ln^{1+\alpha}(T)}{s}} \mid \hat{\mu}_{i,s} \right\} \right] \\
 \le & T \cdot \phi \cdot \frac{0.5}{T^{2.5-0.5\alpha}} \\
 &\\
 = & T \cdot \frac{2}{c_0} T^{0.5-0.5\alpha} \ln^{1.5-0.5\alpha}(T) \cdot \frac{0.5}{T^{2.5-0.5\alpha}}  \\
  &\\
             \le & \frac{1}{c_0}  \cdot \frac{\ln^{1.5}(T)}{T} \\
            &\\
       \le & \frac{1}{c_0} \quad,
               \end{array}
           \end{equation}
           which concludes the proof. \end{proof}
\begin{proof}[Proof of Lemma~\ref{Friday 22}]
Recall event $\mathcal{E}_i^{\mu}(t-1) = \left\{\left| \hat{\mu}_{i,n_i(t-1)} - \mu_i \right| \le \sqrt{\frac{2\ln(T)}{n_i(t-1)}} \right\}$. Let $\tau_s^{(i)}$ be the round when arm $i$ is pulled for the $s$-th time. We have
\begin{equation}
        \begin{array}{ll}
            &   \sum\limits_{t=1}^{T} \mathbb{E} \left[\bm{1} \left\{i_t = i, \overline{\mathcal{E}_i^{\mu}(t-1)}, n_i(t-1) \ge L_i \right\} \right] \\
             \le &  \sum\limits_{s=L_i}^{T} \mathbb{E} \left[\sum\limits_{t= \tau_s^{(i)}+1}^{\tau_{s+1}^{(i)}}\bm{1} \left\{i_t = i, \overline{\mathcal{E}_i^{\mu}(t-1)} \right\} \right] \\
            \le &  \sum\limits_{s=L_i}^{T} \mathbb{E} \left[\bm{1} \left\{ \overline{\mathcal{E}_i^{\mu}\left(\tau_{s+1}^{(i)}-1 \right)} \right\} \right] \\ 
            = &  \sum\limits_{s=L_i}^{T} \underbrace{ \mathbb{P} \left\{ \left|\hat{\mu}_{i, s} -\mu_i \right| > \sqrt{\frac{2 \ln \left(T  \right)}{s}} \right\}}_{\text{Hoeffding's inequality}}  \\
            \le &  \sum\limits_{s=L_i}^{T}  \frac{2}{T^4} \\
            \le & 1\quad,
        \end{array}
    \end{equation}
    which concludes the proof.
    
\end{proof}
             \begin{proof}[Proof of Lemma~\ref{Eric 99}]We have
    \begin{equation}
        \begin{array}{ll}
               & \sum\limits_{t=1}^{T} \mathbb{E} \left[\bm{1} \left\{i_t = i, \overline{\mathcal{E}_1^{\mu}(t-1)},n_i(t-1) \ge L_i \right\} \right] \\
               \le & \sum\limits_{t=1}^{T} \sum\limits_{s=1}^{T} \underbrace{\mathbb{P}\left\{\left| \hat{\mu}_{1,s}- \mu_1\right| > \sqrt{\frac{2\ln(T)}{s}} \right\}}_{\text{Hoeffding's inequality}} \\
               \le & \sum\limits_{t=1}^{T} \sum\limits_{s=1}^{T}  e^{-4\ln(T)} \\
               \le & 1\quad,
        \end{array}
    \end{equation}
    which concludes the proof.
        
    \end{proof}

 \section{Worst-case regret bounds proofs}\label{app: worst-case}
Let $\Delta_*:= \sqrt{\frac{K\ln^{1+\alpha} (T)}{T}}$ be the critical gap. The total regret of pulling any sub-optimal arm $i$ such that such that  $\Delta_i < \Delta_*$ is at most $T \cdot \Delta_* =  \sqrt{KT{\ln^{1+\alpha}(T)}}$.
Now, we consider all the remaining sub-optimal arms $i$ with  $\Delta_i > \Delta_*$. For  such $i$, the regret  is upper bounded by 
\[\Delta_i \mathbb{E} \left[n_i(T) \right] \leq O \left(\frac{\ln^{1+\alpha}(T)}{\Delta_i} \right)\]
which is decreasing in $\Delta_i \in (0,1]$. Therefore, for every such
$i$, the regret is bounded by
$O\left(\sqrt{\frac{T\ln^{1+\alpha}(T)}{K}}\right)$. Taking a sum over all sub-optimal arms concludes the proofs.


\begin{thebibliography}{17}
\providecommand{\natexlab}[1]{#1}
\providecommand{\url}[1]{\texttt{#1}}
\expandafter\ifx\csname urlstyle\endcsname\relax
  \providecommand{\doi}[1]{doi: #1}\else
  \providecommand{\doi}{doi: \begingroup \urlstyle{rm}\Url}\fi

\bibitem[Agrawal and Goyal(2017)]{agrawalnear}
Shipra Agrawal and Navin Goyal.
\newblock {N}ear-optimal regret bounds for {T}hompson {S}ampling.
\newblock \url{http://www.columbia.edu/~sa3305/papers/j3-corrected.pdf}, 2017.

\bibitem[Audibert et~al.(2007)Audibert, Munos, and
  Szepesv{\'a}ri]{audibert2007tuning}
Jean-Yves Audibert, R{\'e}mi Munos, and Csaba Szepesv{\'a}ri.
\newblock Tuning bandit algorithms in stochastic environments.
\newblock In \emph{International conference on algorithmic learning theory},
  pages 150--165. Springer, 2007.

\bibitem[Auer and Ortner(2010)]{auer2010ucb}
Peter Auer and Ronald Ortner.
\newblock Ucb revisited: Improved regret bounds for the stochastic multi-armed
  bandit problem.
\newblock \emph{Periodica Mathematica Hungarica}, 61\penalty0 (1-2):\penalty0
  55--65, 2010.

\bibitem[Auer et~al.(2002)Auer, Cesa-Bianchi, and Fischer]{auer2002finite}
Peter Auer, Nicolo Cesa-Bianchi, and Paul Fischer.
\newblock Finite-time analysis of the multi-armed bandit problem.
\newblock \emph{Machine learning}, 47:\penalty0 235--256, 2002.

\bibitem[Baudry et~al.(2021)Baudry, Saux, and Maillard]{baudry2021optimality}
Dorian Baudry, Patrick Saux, and Odalric-Ambrym Maillard.
\newblock From optimality to robustness: Adaptive re-sampling strategies in
  stochastic bandits.
\newblock \emph{Advances in Neural Information Processing Systems},
  34:\penalty0 14029--14041, 2021.

\bibitem[Bian and Jun(2022)]{bian2022maillard}
Jie Bian and Kwang-Sung Jun.
\newblock Maillard sampling: {B}oltzmann exploration done optimally.
\newblock In \emph{International Conference on Artificial Intelligence and
  Statistics}, pages 54--72. PMLR, 2022.

\bibitem[Garivier and Capp{\'e}(2011)]{garivier2011kl}
Aur{\'e}lien Garivier and Olivier Capp{\'e}.
\newblock The {KL-UCB} algorithm for bounded stochastic bandits and beyond.
\newblock In \emph{Proceedings of the 24th annual conference on learning
  theory}, pages 359--376. JMLR Workshop and Conference Proceedings, 2011.

\bibitem[Honda and Takemura(2010)]{honda2010asymptotically}
Junya Honda and Akimichi Takemura.
\newblock An asymptotically optimal bandit algorithm for bounded support
  models.
\newblock In \emph{COLT}, pages 67--79. Citeseer, 2010.

\bibitem[Honda and Takemura(2015)]{honda2015non}
Junya Honda and Akimichi Takemura.
\newblock Non-asymptotic analysis of a new bandit algorithm for semi-bounded
  rewards.
\newblock \emph{J. Mach. Learn. Res.}, 16:\penalty0 3721--3756, 2015.

\bibitem[Jin et~al.(2021)Jin, Xu, Shi, Xiao, and Gu]{jin2021mots}
Tianyuan Jin, Pan Xu, Jieming Shi, Xiaokui Xiao, and Quanquan Gu.
\newblock {MOTS}: {M}inimax optimal {T}hompson {S}ampling.
\newblock In \emph{International Conference on Machine Learning}, pages
  5074--5083. PMLR, 2021.

\bibitem[Jin et~al.(2022)Jin, Xu, Xiao, and Anandkumar]{jin2022finite}
Tianyuan Jin, Pan Xu, Xiaokui Xiao, and Anima Anandkumar.
\newblock Finite-time regret of {T}hompson {S}ampling algorithms for
  exponential family multi-armed bandits.
\newblock \emph{Advances in Neural Information Processing Systems},
  35:\penalty0 38475--38487, 2022.

\bibitem[Jin et~al.(2023)Jin, Yang, Xiao, and Xu]{jin2023thompson}
Tianyuan Jin, Xianglin Yang, Xiaokui Xiao, and Pan Xu.
\newblock {T}hompson {S}ampling with less exploration is fast and optimal.
\newblock 2023.

\bibitem[Kaufmann et~al.(2012{\natexlab{a}})Kaufmann, Capp{\'e}, and
  Garivier]{kaufmann2012bayesian}
Emilie Kaufmann, Olivier Capp{\'e}, and Aur{\'e}lien Garivier.
\newblock On {Bayesian} upper confidence bounds for bandit problems.
\newblock In \emph{Artificial intelligence and statistics}, pages 592--600.
  PMLR, 2012{\natexlab{a}}.

\bibitem[Kaufmann et~al.(2012{\natexlab{b}})Kaufmann, Korda, and
  Munos]{kaufmann2012thompson}
Emilie Kaufmann, Nathaniel Korda, and R{\'e}mi Munos.
\newblock {T}hompson {S}ampling: {A}n asymptotically optimal finite-time
  analysis.
\newblock In \emph{Algorithmic Learning Theory: 23rd International Conference,
  ALT 2012, Lyon, France, October 29-31, 2012. Proceedings 23}, pages 199--213.
  Springer, 2012{\natexlab{b}}.

\bibitem[Lattimore(2018)]{lattimore2018refining}
Tor Lattimore.
\newblock Refining the confidence level for optimistic bandit strategies.
\newblock \emph{The Journal of Machine Learning Research}, 19\penalty0
  (1):\penalty0 765--796, 2018.

\bibitem[M{\'e}nard and Garivier(2017)]{menard2017minimax}
Pierre M{\'e}nard and Aur{\'e}lien Garivier.
\newblock A minimax and asymptotically optimal algorithm for stochastic
  bandits.
\newblock In \emph{International Conference on Algorithmic Learning Theory},
  pages 223--237. PMLR, 2017.

\bibitem[Riou and Honda(2020)]{riou2020bandit}
Charles Riou and Junya Honda.
\newblock Bandit algorithms based on thompson sampling for bounded reward
  distributions.
\newblock In \emph{Algorithmic Learning Theory}, pages 777--826. PMLR, 2020.

\end{thebibliography}
\end{document}